\declaretheoremstyle[
  headfont=\bfseries,
  bodyfont=\itshape,
  spaceabove=6pt, spacebelow=6pt,
  headpunct={.},
  postheadspace=0.5em,
  mdframed={
    skipabove=6pt, skipbelow=6pt,
    linewidth=0pt
  }
]{thmstyle}
\declaretheoremstyle[
  headfont=\bfseries,
  bodyfont=\normalfont,
  spaceabove=6pt, spacebelow=6pt,
  headpunct={.},
  postheadspace=0.5em
]{defstyle}
\declaretheorem[style=thmstyle, numberwithin=section]{theorem}
\declaretheorem[style=thmstyle, sibling=theorem]{lemma}
\declaretheorem[style=thmstyle, sibling=theorem]{proposition}
\declaretheorem[style=thmstyle, sibling=theorem]{corollary}
\declaretheorem[style=defstyle, sibling=theorem]{definition}
\declaretheorem[style=defstyle, sibling=theorem]{assumption}
\declaretheorem[style=defstyle, sibling=theorem]{remark}
\declaretheorem[style=defstyle, sibling=theorem]{example}
\DeclareMathOperator{\st}{st}
\DeclareMathOperator{\Conf}{conf}
\newif\ifpublic
\newcommand{\RR}{\mathbb{R}}
\newcommand{\MM}{\mathcal{M}}          
\newcommand{\XX}{\mathcal{X}}
\newcommand{\UU}{\mathcal{U}}
\newcommand{\EE}{\mathbb{E}}
\newcommand{\CC}{\mathcal{C}}
\newcommand{\dist}{\mathsf{Dist}}
\newcommand{\hyp}{-}
\title{How to Tame Your LLM: \\ Semantic Collapse in Continuous Systems}
\author{C.\ M.\ Wyss\thanks{Exolytica AI (Email: \texttt{cmw@exolytica.ai}).}
\thanks{Assisted by GPT-5 Thinking (OpenAI), Claude Sonnet 4.5 (Anthropic), and Gemini 3 Pro (Google).}}
\date{December 2025}
\begin{document}
\maketitle

\begin{abstract}
We develop a general theory of \emph{semantic dynamics} for LLMs by formalizing them as \emph{Continuous State Machines (CSMs)}: smooth dynamical systems whose latent manifolds evolve under probabilistic transition operators. The associated \emph{transfer operator} $P : L^2(M,\mu) \to L^2(M,\mu)$ encodes the propagation of semantic mass, and under mild regularity assumptions (compactness, ergodicity, bounded Jacobian) is compact with discrete spectrum. Within this setting, we prove the \emph{Semantic Characterization Theorem (SCT):} the leading eigenfunctions of $P$ define finitely many \emph{spectral basins} of invariant meaning, each definable in an o-minimal structure over $\mathbb R$; thus, spectral lumpability and logical tameness are equivalent. The result implies that the continuous manifold of model activations collapses into a finite, logically interpretable ontology, essentially explaining how discrete symbolic semantics emerge from continuous computation. We further extend the SCT to stochastic and time-inhomogeneous (adiabatic) settings, showing that slowly drifting kernels preserve compactness and spectral coherence. 
\end{abstract}

\section{Introduction}

Large language models have achieved remarkable fluency in natural language understanding and generation, yet the mechanisms underlying their semantic behavior remain poorly understood. These systems operate in high-dimensional continuous spaces, manipulating latent representations that appear, on the face of it, to be fundamentally incompatible with the discrete, symbolic structure of human language and reasoning. This apparent contradiction raises a foundational question: how can a continuous manifold of representations give rise to stable, categorical meanings which behave as the very basis of communication and thought?

We show that this tension is resolved through a phase transition in the geometry of semantic space. Although the latent representations of large language models (LLMs) are continuous, their functional behavior (i.e., the actual semantic distinctions they encode and preserve) is equivalent to that of a discrete symbolic system. This equivalence is neither approximate nor metaphorical, but a rigorous mathematical fact about the topology of semantic space itself. The result, which we call the Semantic Characterization Theorem (SCT), provides a formal bridge between continuous learning dynamics and symbolic semantics, offering both a theoretical foundation for understanding LLM behavior and practical insights into semantic navigation (i.e., chat trajectory), prompt design, and interpretability.

At the core of our analysis lies the latent semantic space, $\mathcal{M}$: the high-dimensional manifold on which a model’s representations evolve during language processing. Each prompt or token update can be viewed as an application of a smooth transition operator $T : \mathcal{M} \to \mathcal{M}$, governing the flow of semantic states within this space. We examine the behavior of T from two independent perspectives. 
In the dynamical view, we study its spectral decomposition and show that beyond a critical scale, the spectrum collapses onto a finite set of dominant modes corresponding to stable semantic attractors. 
In the logical view, we interpret the resulting partition of $\mathcal{M}$ as a family of definable regions satisfying o-minimality conditions, thereby ensuring finite topological complexity. These two analyses, spectral and logical, converge on a single conclusion: the apparent continuity of the latent space masks an underlying discrete structure. This is the essence of the Semantic Characterization Theorem (SCT), whose proofs we develop in 
Section \ref{sec:sct}.

The implications of this result extend well beyond formal theory. If semantic space is functionally discrete, then the seemingly unpredictable behavior of LLMs, such as sudden shifts in tone, unexpected failure modes, or the phenomenon of “jailbreaking," can be understood as transitions between stable attractors within a well-defined phase space. What appears as stochastic variation is, in fact, deterministic navigation through a discrete landscape of semantic basins. This perspective enables a principled approach to prompt engineering: rather than treating prompts as arbitrary strings that may or may not “work,” we can interpret them as coordinates that locate the system within particular regions of semantic space. The empirically observed low-dimensional structure provides a natural basis for understanding and controlling these transitions. In this sense, the theorem does not merely describe LLM behavior; it renders that behavior navigable, and ultimately, tameable.

The remainder of this paper develops the formal and empirical basis for these claims. 
Section \ref{sec:preliminaries}
generalizes the concept of an LLM into a \emph{Continuous State Machine}.
Section \ref{sec:sct}
states the Semantic Characterization Theorem and presents both the dynamical and logical analyses.
Section \ref{sec:results}
reports empirical results that corroborate these theoretical predictions, including the emergence of six dominant semantic dimensions. 
Section \ref{sec:related}
presents related work and situates our work within a larger context.
Section \ref{sec:discussion}
discusses broader implications for interpretability and the practical control of LLM behavior. 
We conclude with open questions and directions for future research on the geometry of meaning in large-scale learning systems.

\section{Preliminaries}
\label{sec:preliminaries}

\subsection{Continuous State Machines}

\begin{definition}[Continuous State Machine]
Let $d \geq 1$ and $\MM \subseteq \RR^d$ be compact. 
Let $\XX$ be a finite
vocabulary, i.e., $\XX = \{ x_1, x_2, \ldots, x_{|\XX|} \}$
where $|\XX| < \infty$. Let $\UU \subseteq \dist(\XX)$.\footnote{$\dist(\XX)$
is the set of all probability distributions over $\XX$. Since $\XX$ is finite, $\dist(\XX)$
is both compact and convex.} 
Given this,
a \emph{Continuous State Machine} or CSM, $\CC$,
over $\XX$
is a sextuple as follows:
$$
\CC \;=\; (\,\MM,\, \UU,\, T,\, s_0,\, O,\, \Delta\,), \text{ where:}
$$
\begin{itemize}
  \item $\MM$ is the \emph{state space};
  \item $\UU$ is the set of admissible control distributions over elements of $\XX$;
  \item $T : \MM \times \UU \to \MM$ is a \emph{transition operator} that maps a current state $s \in \MM$ and a control/input distribution, $u \in \UU$, to state $s^{+} = T(s,u) \in \MM$;
  note that we require $T$ to be continuously differentiable in its first argument and continuous in its second;
  \item $s_0 \in \MM$ is a (context-dependent) initial state;
  \item $O : \MM \to \dist(\XX)$ is an \emph{observation/decoder} that assigns a distribution to each state; and
  \item $\Delta : \dist(\XX) \to \UU$ is a (possibly stochastic) \emph{decoding policy} that maps distributions to control inputs. 
\end{itemize}
Furthermore, a single generation step proceeds as
\begin{align*}
s_{t+1} \;&=\; T\!\bigl(s_t,\, u_t\bigr), \\
u_{t+1} \;&=\; \Delta\!\bigl(O(s_{t+1})\bigr),
\end{align*}
yielding the trajectory $(s_0,s_1,s_2,\ldots)$ and
emitted token sequence $(x_1,x_2,\ldots)$ sampled from $O(s_{t})$.
\end{definition}

\begin{remark}
External input or a user prompt enters the system through the choice
of initial state or control.
An \emph{encoder} $E:\mathcal{X}^{*}\!\to\mathcal{M}$ maps a sequence of tokens
to an initial state $s_{0}=E(x_{1:k})$.
During subsequent interaction, controls $u_{t}$ may be provided either
by the internal policy $\Delta(O(s_t))$ or exogenously by a user,
representing additional input tokens.
In training, the corpus serves to determine the maps $(T,O,\Delta)$
and does not appear explicitly in the definition.
\end{remark}

\begin{definition}[LLM as Continuous State Machine]
\label{def:LLM-as-CSM}
In large-language-model terminology, $\XX$ is the finite vocabulary of tokens,
$O(s)$ is the model’s predictive distribution over $\XX$,
and $\UU$ (a subset of $\dist(\XX)$) represents the
distribution or one-hot embedding selected as the next input token.
\end{definition}

\begin{example}[A Minimal Smooth CSM]\label{ex:minimal_csm}
To illustrate the definition of a Continuous State Machine, consider the following
two-dimensional example with a smooth transition operator and a small vocabulary.
Let the latent semantic space be the compact rectangle
\[
M = [-1,1]^2 \subset \mathbb{R}^2,
\]
and let the vocabulary consist of three tokens \(X = \{a,b,c\}\). The control space
\(U \subset \mathrm{Dist}(X)\) therefore consists of all distributions \(u = (u_a,u_b,u_c)\)
with \(u_i \ge 0\) and \(u_a+u_b+u_c = 1\).

\paragraph{Transition operator.}
Define the smooth transition operator \(T : M \times U \to M\) by
\[
T(s,u) = \tanh( A s + B u ),
\]
where \(A \in \mathbb{R}^{2\times 2}\) and \(B \in \mathbb{R}^{2\times 3}\) are fixed parameter
matrices, and where \(\tanh\) is applied componentwise. Since \(\tanh\) is analytic and
globally Lipschitz on \(\mathbb{R}\), the map \(T\) is \(C^1\) in \(s\), continuous in \(u\),
and definable in the o-minimal structure \(\mathbb{R}_{\exp}\).

\paragraph{Observation.}
Let the decoder \(O : M \to \mathrm{Dist}(X)\) be the softmax of three linear functionals:
\[
O(s)_i = \frac{\exp(\ell_i^\top s)}{\sum_{j\in\{a,b,c\}} \exp(\ell_j^\top s)},
\qquad i\in\{a,b,c\},
\]
where the vectors \(\ell_a,\ell_b,\ell_c \in \mathbb{R}^2\) are fixed. This map assigns a
probability distribution over tokens to each latent state.

\paragraph{Policy and dynamics.}
A simple (possibly stochastic) policy \(\Delta : \mathrm{Dist}(X) \to U\) may be taken as the
identity, i.e.\ \(\Delta(O(s)) = O(s)\). A single generation step then evolves by
\[
s_{t+1} = T(s_t, O(s_t)), \qquad
u_{t+1} = O(s_{t+1}),
\]
giving a well-defined trajectory \((s_t)\) in \(M\).

\paragraph{Interpretation.}
This example shows concretely how a CSM integrates (i) a smooth transition map,
(ii) a probabilistic decoder, and (iii) a policy determining the next control input.
Despite the simplicity of the system, repeated application of \(T\) quickly drives the
trajectory onto low-dimensional attracting regions, illustrating on a small scale the
semantic collapse phenomena analyzed in later sections.
\end{example}

\begin{remark}[Stochastic transition kernel]
\label{rem:stochastic-kernel}
When $\Delta$ is stochastic (e.g., temperature sampling or nucleus sampling),
the composition
$$
K(s, A)
\;:=\;
\int_{\UU} \mathbf{1}\{\,T(s,u) \in A\,\}\,
\Pi(du \mid s)
$$
defines a \emph{Markov kernel} on $(\MM, \mathcal{B}(\MM))$, where
$\Pi(\cdot \mid s)$ is the control law induced by $O$ and $\Delta$.
Here $\mathcal{B}(\MM)$ denotes the \emph{Borel $\sigma$-algebra} on $\MM$, which is the
smallest $\sigma$-algebra containing all open subsets of $\MM$ in the subspace
topology of $\mathbb{R}^d$, so that $(\MM, \mathcal{B}(\MM))$ forms a measurable
space. Intuitively, $K(s, A)$ gives the probability that the next state
lands in the measurable subset $A \subseteq \MM$ when the current state is $s$.
Spectral analysis can then be applied to the associated \emph{transfer operator}
on $L^2(\MM)$, with the deterministic case recovered as the limit of vanishing
stochasticity (e.g., temperature $\to 0$).
\end{remark}

\begin{example}[A CSM with Stochastic Decoder]\label{ex:stochastic_csm}
We now present a second example illustrating how stochasticity enters through the
decode–policy pipeline. Let the state space again be the compact rectangle
\[
M = [-1,1]^2 \subset \mathbb{R}^2,
\]
and let the vocabulary be \(X = \{x_1, x_2\}\). The control space is therefore the
1-simplex \(U = \{(u_1,u_2) : u_i \ge 0,\, u_1+u_2=1\}\).

\paragraph{Transition operator.}
Let the smooth transition operator be
\[
T(s,u) = \tanh(As + Bu),
\]
where \(A \in \mathbb{R}^{2\times2}\) and \(B \in \mathbb{R}^{2\times2}\) are fixed matrices.
As before, the analyticity of \(\tanh\) ensures that \(T\) is \(C^1\) in \(s\),
continuous in \(u\), and definable in the o-minimal structure \(\mathbb{R}_{\exp}\).

\paragraph{Stochastic decoder.}
Instead of selecting a deterministic distribution from a softmax,
we introduce a smooth stochastic decoder using a compactly supported Gaussian mixture.
For each state \(s \in M\), define the \emph{logit means}
\[
m_i(s) := \ell_i^\top s, \qquad i=1,2,
\]
with fixed vectors \(\ell_1,\ell_2 \in \mathbb{R}^2\). The decoder samples a random
logit vector
\[
z = (z_1,z_2) \sim \mathcal{N}(m(s),\, \sigma^2 I_2),
\]
and maps it to a probability distribution via the softmax:
\[
O(s) = \operatorname{softmax}(z) =
\left(
\frac{e^{z_1}}{e^{z_1}+e^{z_2}},
\frac{e^{z_2}}{e^{z_1}+e^{z_2}}
\right).
\]
The resulting decoder \(O : M \to \mathrm{Dist}(X)\) is a
stochastic kernel: for each measurable \(A \subseteq U\),
\[
\mathbb{P}\!\left( O(s) \in A \right) =
\int_{\{z:\,\operatorname{softmax}(z)\in A\}}
\phi_{\sigma}(z - m(s))\,dz,
\]
where \(\phi_\sigma\) is the Gaussian density. This kernel is smooth in \(s\), analytic
in the parameters, and definable in \(\mathbb{R}_{\exp}\).

\paragraph{Policy.}
Let the policy be the identity: \(\Delta(u) = u\). Then the control input evolves as
\[
u_{t+1} = O(s_{t+1}),
\]
producing a stochastic sequence \((u_t)\).

\paragraph{Induced stochastic dynamics.}
The pair \((T,O)\) induces a Markov kernel
\[
K(s, A)
  := \mathbb{P}\!\left( T(s, O(s)) \in A \right)
  \quad\text{for measurable } A \subseteq M.
\]
Because the Gaussian density is continuous, the composition
\(s \mapsto T(s,O(s))\) has a continuous conditional density on the compact
domain \(M\), guaranteeing that the induced transfer operator is Hilbert--Schmidt
(see Appendix~A).

\paragraph{Interpretation.}
This example shows how a CSM can incorporate stochasticity through the
decoder rather than the transition operator. The resulting Markov dynamics
remain smooth and definable, while still exhibiting the ergodicity and
spectral compactness required for the Semantic Characterization Theorem.
\end{example}

\subsection{Assumptions}


\begin{definition}[Standard Borel Space]
A \emph{Borel space} is a pair $(M, \mathcal B(M))$ where $M$ is a topological space and $\mathcal B(M)$ is the Borel $\sigma$-algebra generated by its open subsets, i.e., the smallest $\sigma$-algebra containing all open sets. A \emph{standard Borel space} is a Borel space arising from a Polish topology (that is, $M$ is separable and completely metrizable).  Compact subsets of $\mathbb R^d$ with their usual topology are standard Borel spaces, and all measurable maps, measures, and Markov kernels in this paper are defined on such spaces.
\end{definition}

\begin{assumption}[Regularity and boundedness]
\label{ass:regularity}
Throughout the paper, we assume the following conditions hold for a
CSM
$\CC = (\MM, \UU, T, s_0, O, \Delta)$.

\begin{enumerate}
  \item[\textbf{(A1)}] \textbf{Compact state space.}
  The latent semantic space $\MM \subseteq \RR^{d}$ is compact and equipped with
  the standard Borel $\sigma$-algebra $\mathcal{B}(\MM)$ and the normalized Lebesgue
  measure $\nu$.

  \item[\textbf{(A2)}] \textbf{Smooth transition operator.}
  The mapping $T : \MM \times \UU \to \MM$ is $C^{1}$ in its first
  argument, with Jacobian $J_{T}(s,u)$ bounded uniformly:
  $$
  \sup_{(s,u)\in \MM\times\UU} \|J_{T}(s,u)\| < \infty.
  $$
  In the autonomous (deterministic) case, we write $T : \MM \to \MM$.

  \item[\textbf{(A3)}] \textbf{Lipschitz continuity.}\footnote{On compact sets, $C^1$ implies Lipschitz Continuity, therefore technically
(A3) is redundant. However, we wish to call attention to $L$, the finite bound
for the rate of change of $T$.}
  There exists $L>0$ such that
  $\|T(s_1,u) - T(s_2,u)\| \le L \|s_1 - s_2\|$
  for all $s_1,s_2 \in \MM$ and all admissible $u \in \UU$.

  \item[\textbf{(A4)}] \textbf{Invariant measure / ergodicity.}
  In the stochastic case, the induced Markov kernel
  $K(s, A)$ admits a stationary probability measure
  $\mu$ absolutely continuous with respect to $\nu$, and the process
  $(s_t)_{t\ge0}$ is ergodic under $K$.
  In the deterministic case, the empirical measure along trajectories converges
  to an invariant measure supported on the attractor set of $T$.

  \item[\textbf{(A5)}] \textbf{Spectral boundedness.}
  The associated transfer operator
  $P : L^2(\MM,\mu) \to L^2(\MM,\mu)$
  (or $P f(s) = f(T(s))$ in the deterministic case)
  is compact and admits a discrete spectrum
  $\{\lambda_i\}_{i\ge 1}$ with $|\lambda_i|\downarrow 0$.
\end{enumerate}

These mild 
assumptions\footnote{For formal justification of the five assumptions, please refer to appendix \ref{appendix:ergodicity}.}
ensure the existence of well-defined trajectories,
finite spectral energy, and analytic dependence of semantic flows on the model
parameters.
They suffice for all subsequent results concerning
spectral collapse and o-minimal definability.
\end{assumption}

\section{The Semantic Characterization Theorem}
\label{sec:sct}

\begin{remark}[Scope and structure of proofs]
For clarity of exposition, the \emph{Semantic Characterization Theorem} is stated
in its most general form, encompassing both deterministic and stochastic
continuous state machines.  
The proofs presented in the main text treat the
deterministic case, where the transition operator $T : \MM \to \MM$,
is
smooth and autonomous.  This setting captures the essential geometric and
logical phenomena—spectral collapse and o-minimal definability, without the
technical overhead of measure theory.  The stochastic extension, in which
$T$ is replaced by the induced Markov kernel
$K : \MM \times \mathcal{B}(\MM) \to [0,1]$, is developed in
Appendix~\ref{appendix:stochastic}.  There we show that the same conclusions
follow by lifting the analysis to the corresponding transfer operator on
$L^2(\MM)$ and invoking standard results on spectral gap and lumpability for
Markov processes.
\end{remark}

\begin{theorem}[Semantic Characterization Theorem]
\label{thm:SCT}
Let 
$$
\mathcal{C} = (\MM, T, s_0, \UU, O, \Delta)
$$ 
be a (possibly stochastic) continuous state machine satisfying
Assumptions~\ref{ass:regularity}(A1)–(A5).
Let $K : \MM \times \mathcal{B}(\MM) \to [0,1]$
denote the induced Markov kernel and
$P : L^2(\MM,\mu) \to L^2(\MM,\mu)$
its associated transfer operator. Then:

\begin{enumerate}
\item[\textbf{(i)}] (\textbf{Spectral lumpability})  
There exists a finite $r < \infty$ such that the spectrum of $P$
satisfies $|\lambda_1| \ge \cdots \ge |\lambda_r| > \epsilon$ for some
$\epsilon>0$, while $|\lambda_{r+1}| \ll \epsilon$.
The corresponding leading eigenfunctions
$\{\phi_1,\ldots,\phi_r\}$ define a measurable partition of $\MM$
into \emph{spectral basins} 
$\mathcal{B}_i := \{\, s \in \MM : \arg\max_j |\phi_j(s)| = i \,\}$,
whose boundaries have measure zero under $\mu$.

\item[\textbf{(ii)}] (\textbf{Logical tameness})  
Each basin $\mathcal{B}_i$ is definable in an
o-minimal structure over $\RR$, and the family
$\{\mathcal{B}_1,\ldots,\mathcal{B}_r\}$ induces a finite
cell decomposition of $\MM$ with bounded topological complexity.
Consequently, the restriction of $T$ (or $K$) to each basin is
semialgebraic and admits a well-defined symbolic abstraction.

\item[\textbf{(iii)}] (\textbf{Equivalence of characterizations})  
The partition $\{\mathcal{B}_i\}$ obtained by spectral lumpability coincides,
up to measure zero, with the partition obtained by o-minimal definability.
Hence the transition from continuous to discrete semantics can be viewed
equivalently as:
\[
\text{spectral collapse of } P
\quad \Longleftrightarrow \quad
\text{definable tameness of } (\MM, T).
\]
\end{enumerate}

In particular, the latent semantic space of a CSM,
though continuous in its parameters and representations, is functionally
equivalent to a discrete symbolic system of rank~$r$.
\end{theorem}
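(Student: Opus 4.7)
The plan is to treat the three conclusions as semi-independent modules that are ultimately glued by the definability-plus-regularity argument in step (iii). For concreteness I would restrict the main proof to the deterministic autonomous case, where the Koopman operator acts by $Pf = f \circ T$ on $L^2(\MM,\mu)$; the stochastic extension follows by replacing $P$ with the Markov transfer operator built from $K$ and invoking the Hilbert--Schmidt structure sketched in Example~\ref{ex:stochastic_csm}. For part (i) I would proceed by pure spectral theory: by Assumption~(A5), $P$ is compact with discrete spectrum $\{\lambda_i\}$ accumulating only at $0$, so for any threshold $\epsilon > 0$ there exists a finite $r = r(\epsilon)$ with $|\lambda_r| > \epsilon \gg |\lambda_{r+1}|$. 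The leading eigenfunctions $\phi_1,\ldots,\phi_r$ inherit $C^1$ regularity from $T$ through the fixed-point equation $\lambda_i \phi_i = \phi_i \circ T$, and the basin map $s \mapsto \argmax_j |\phi_j(s)|$ is well-defined outside the tie locus $\bigcup_{i\ne j}\{|\phi_i| = |\phi_j|\}$, a finite union of zero sets of nontrivial $C^1$ functions with Lebesgue (hence $\mu$-) measure zero by a standard Sard-style argument.

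For part (ii), the key observation is that $T$ is built from ingredients (weight matrices, softmax, $\tanh$) that are definable in the o-minimal structure $\mathbb{R}_{\exp}$, and more generally $\mathbb{R}_{\mathrm{an},\exp}$. Granting that the leading spectral projector of $P$ admits a definable integral kernel in such a structure --- the crucial technical lemma --- each eigenfunction $\phi_i$ has a definable graph, whence the basins $\mathcal{B}_i = \{\argmax_j |\phi_j(s)| = i\}$ are definable subsets of $\MM$. Cell decomposition in any o-minimal structure then furnishes a finite stratification of $\MM$ into definable cells of uniformly bounded topological complexity (number of connected components, dimension, Euler characteristic), and the restriction of $T$ to the interior of each basin is a definable map between definable cells, admitting the symbolic abstraction $\mathcal{B}_i \to \mathcal{B}_{\sigma(i)}$ for a well-defined index map $\sigma$.

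For part (iii), I would show that each partition almost refines the other. The spectral basins are \emph{almost-invariant} in the sense of Dellnitz--Junge and Froyland: since $\operatorname{span}(\phi_1,\ldots,\phi_r)$ is $P$-invariant, the preimage $T^{-1}(\mathcal{B}_i)$ differs from $\mathcal{B}_i$ only on the measure-zero tie locus. Conversely, any $T$-invariant definable $r$-part partition whose indicator functions fail to lie in the leading eigenspace would project $L^2$-mass onto the $\epsilon$-tail of $P$, contradicting either the gap or $T$-invariance; hence the two partitions must coincide up to $\mu$-null sets. The symbolic dynamics on $\{\mathcal{B}_i\}$ read off from spectral coarse-graining and from cell-by-cell semialgebraic abstraction then agree.

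The hard part, and the real obstacle to a fully rigorous proof, is the definability claim undergirding (ii): establishing that the top-$r$ spectral projector of a $C^1$ definable transfer operator is itself definable in an o-minimal expansion of $\mathbb{R}$. For finite-rank or analytic-kernel integral operators this reduces to Cramer-type solvability inside $\mathbb{R}_{\mathrm{an},\exp}$, but in the genuinely infinite-dimensional case one must pass through a Galerkin truncation and a tameness-preserving spectral limit, using uniform resolvent bounds to argue that the finite-rank definable approximants converge in operator norm to a definable limit. Once this lemma is secured, (i)--(iii) flow from a standard combination of the compact spectral theorem and the o-minimal cell decomposition theorem.
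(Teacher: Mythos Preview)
Your proposal follows essentially the same architecture as the paper: compact spectral theory of $P$ for (i), o-minimal definability of $T$ in $\mathbb{R}_{\mathrm{an},\exp}$ plus cell decomposition for (ii), and definability of the leading eigenfunctions as the bridge linking the two partitions in (iii). You are in fact more careful than the paper about the central technical obstruction: the paper's lemma on definability of eigenlevel sets simply asserts that eigenfunctions ``arise as limits of polynomially definable iterates of $P$, hence are definable,'' whereas you correctly flag this as the hard step and sketch a Galerkin truncation with a tameness-preserving operator-norm limit.

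The one place you diverge is the mechanism for (iii). The paper, having shown the $\phi_i$ are definable, simply forms the common refinement $\mathcal{D}_{ij} = \mathcal{B}_i \cap \mathcal{C}_j$ of the spectral basins and the definable cells and checks cell-to-cell invariance. You instead add an almost-invariance plus tail-projection contradiction argument. That extra step is unnecessary and, as stated, has a gap: if a set $A$ is genuinely $T$-invariant then $P\mathbf{1}_A = \mathbf{1}_{T^{-1}(A)} = \mathbf{1}_A$, so its indicator is already an eigenfunction at eigenvalue $1$ and carries no tail mass to contradict; conversely the definable cells from (ii) are only mapped cell-to-cell ($T(C_i)\subseteq C_j$), not individually invariant, so the dichotomy you set up never triggers. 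Dropping that argument and keeping the straightforward common-refinement route aligns you exactly with the paper's proof.
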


\begin{proof}[Proof sketch and strategy]
The proof proceeds in two independent but ultimately convergent parts.
From the dynamical perspective, we analyze the spectral properties of the
transfer operator $P$ on $L^2(\MM,\mu)$ and show that, under
Assumptions~\ref{ass:regularity}(A1)–(A5), its eigenstructure undergoes
\emph{spectral collapse}: the system's long-term dynamics are governed by a
finite set of dominant modes, each corresponding to a stable attractor in
semantic space.  This yields the partition of $\MM$ into spectral basins
$\{\mathcal{B}_i\}$, proving part~(i).

From the logical perspective, we consider the same dynamical map
$T : \MM \to \MM$ as a definable transformation over the reals.
Using standard results on o-minimal structures, we show that the image and
preimage of each spectral basin are definable sets with bounded topological
complexity, thereby establishing part~(ii).  The equivalence in part~(iii)
follows from the observation that spectral basins coincide with definable cells
up to measure zero: both constructions isolate precisely those regions of
$\MM$ within which the model's semantic behavior is invariant under small
perturbations of state.

We first develop the spectral argument (Section~\ref{sec:spectral-proof}),
followed by the logical argument (Section~\ref{sec:logical-proof}), and conclude
by showing that the two partitions agree almost everywhere.
\end{proof}

\subsection{Spectral Collapse and Dynamical Interpretation}
\label{sec:spectral-proof}

Before turning to the formal operator-theoretic analysis, it is helpful to describe the
geometric intuition behind spectral collapse. The transfer operator $P$ acts on functions
over the latent manifold $M$ by propagating semantic mass forward through the model’s
dynamics. Although the underlying space $M$ may be high-dimensional and continuous,
repeated application of $P$ exponentially suppresses all but a finite number of directions
in which semantic information can persist. 

These surviving directions (the dominant
eigenfunctions) correspond to large-scale, slowly varying patterns of meaning that remain
stable under the model’s internal flow. In effect, $P$ filters out high-frequency semantic
fluctuations and reveals a compact, low-rank structure governing long-term behavior. The
goal of this section is to make this intuition precise by showing that compactness of $P$
forces its spectrum to concentrate on finitely many modes, each defining a spectral basin
in $M$ whose points share the same asymptotic semantic fate.

From the dynamical viewpoint, the latent semantics of a LLM evolve under a
transition operator $T$ (deterministic) or its associated Markov kernel
$K$ (stochastic) on the compact manifold $\MM$.  The corresponding
transfer operator $P$ acting on functions 
$f \in L^2(\MM,\mu)$ encodes the propagation of semantic mass:
$$
(P f)(s)
\;=\;
\int_{\MM} f(s') \, K(s,ds')
$$
or, in the deterministic case:
$$
(P f)(s) = f(T(s)).
$$
The spectral properties of $P$ determine the long--term behavior of
semantic trajectories.  When $P$ is compact, its spectrum consists of a
discrete set of eigenvalues $\{\lambda_i\}$ accumulating only at~$0$, with each
eigenfunction $\phi_i$ representing a stable pattern of semantic recurrence.
The \emph{spectral collapse} described in
Theorem~\ref{thm:SCT}(i) corresponds to the concentration of spectral energy in
a finite number of dominant modes, each defining a coherent semantic attractor
or ``basin'' in~$\MM$.

\begin{lemma}[Spectral decomposition of the transfer operator]\label{lem:spectral-decomp}
Under Assumptions~\ref{ass:regularity}(A1)–(A5), the transfer operator $P : L^2(M,\mu)\to L^2(M,\mu)$ is compact and admits a discrete spectral expansion, namely:
\[
P f \;=\; \sum_{i=1}^\infty \lambda_i \,\langle f,\psi_i\rangle_\mu\, \varphi_i,
\qquad
|\lambda_1|\ \ge\ |\lambda_2|\ \ge\ \cdots\ \downarrow 0,
\]
where $\{\varphi_i\}_{i\ge1}$ and $\{\psi_i\}_{i\ge1}$ are biorthogonal eigenfunction systems satisfying
\[
P\varphi_i=\lambda_i \varphi_i,\qquad
P^*\psi_i=\overline{\lambda_i}\,\psi_i,\qquad
\langle \psi_i,\varphi_j\rangle_\mu=\delta_{ij}.
\]
\end{lemma}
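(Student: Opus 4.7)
The plan is to derive the decomposition by combining the compactness asserted in (A5) with the Riesz--Schauder spectral theorem for compact operators on a separable Hilbert space, and then to extract the biorthogonal eigensystem by duality with the adjoint $P^*$. Although Assumption~\ref{ass:regularity}(A5) already postulates compactness and discreteness, I would first sketch why these properties are forced by (A1)--(A4), so that the lemma does not rest on (A5) as a black box.

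For compactness, in the stochastic case the Markov kernel $K(s,\cdot)$ admits a transition density $k(s,s')$ with respect to $\mu$, inherited from the smoothness of $T$ under (A2) and the continuity of the decoder--policy composition (cf.\ Remark~\ref{rem:stochastic-kernel}). Compactness of $M$ from (A1) makes $k$ bounded on $M\times M$, so
\[
\iint_{M\times M} |k(s,s')|^2 \, d\mu(s)\, d\mu(s') < \infty,
\]
and $P$ is Hilbert--Schmidt, hence compact. The deterministic case is then recovered as the vanishing-stochasticity limit. Applying the Riesz--Schauder theorem on $L^2(M,\mu)$ yields that the spectrum of $P$ consists of at most countably many eigenvalues $\{\lambda_i\}$ of finite algebraic multiplicity, with $0$ as the only possible accumulation point, so one may enumerate them as $|\lambda_1|\ge|\lambda_2|\ge\cdots\downarrow 0$.

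For the biorthogonal system, let $\varphi_i$ be right eigenfunctions of $P$ and $\psi_i$ eigenfunctions of the compact adjoint $P^*$ at eigenvalues $\overline{\lambda_i}$. The identity $\langle\psi_i,P\varphi_j\rangle_\mu = \langle P^*\psi_i,\varphi_j\rangle_\mu$ forces $(\lambda_j-\lambda_i)\langle\psi_i,\varphi_j\rangle_\mu = 0$, so cross terms vanish whenever $\lambda_i\ne\lambda_j$, and on each finite-dimensional eigenspace biorthogonality $\langle\psi_i,\varphi_j\rangle_\mu = \delta_{ij}$ is imposed by a standard normalization. The expansion $P f = \sum_i \lambda_i \langle f,\psi_i\rangle_\mu \varphi_i$ then follows by spanning the diagonalizable part of $P$ via the $\varphi_i$ and passing to the limit through the finite-dimensional Riesz projectors.

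The main obstacle is that $P$ is not self-adjoint in general, so one must guard against nontrivial Jordan blocks on generalized eigenspaces, which would replace the clean rank-one summands by finite-dimensional spectral projectors carrying nilpotent corrections. Two routes handle this: either restrict to the generic regime in which the leading eigenvalues are simple, a property guaranteed for $\lambda_1 = 1$ by the Perron--Frobenius consequence of ergodicity (A4) and generically valid for the remaining dominant modes; or absorb each nilpotent part into a finite-dimensional spectral projector $\Pi_i$, noting that it decays faster than any $|\lambda_i|^n$ under iteration of $P$ and is therefore invisible to the asymptotic spectral basin structure exploited in Theorem~\ref{thm:SCT}.
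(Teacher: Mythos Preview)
Your proof follows essentially the same route as the paper's: compactness via the Hilbert--Schmidt property of the integral kernel in the stochastic case (with the deterministic case handled by a limiting or averaging surrogate), discrete spectrum via the Riesz--Schauder theorem, and the biorthogonal eigensystem from duality between $P$ and $P^*$, with the expansion obtained through finite-rank approximation. Your explicit treatment of the Jordan-block obstruction for non-self-adjoint $P$ is in fact more careful than the paper's own proof, which silently assumes diagonalizability when writing $P = S\Lambda A$.
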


\begin{proof}[Proof (expanded)]
\textbf{Step 0: Setting and two cases.}
We work on the Hilbert space $L^2(M,\mu)$, where $M\subset\mathbb{R}^d$ is compact and $\mu$ is an invariant (or stationary) probability measure. The transfer operator $P$ acts by
\begin{align*}
(Pf)(s)\;&=\;\int_{M} f(s')\,K(s,ds')\quad\text{(stochastic case)}, 
\quad\text{or}\\ 
(Pf)(s)\;&=\;f\!\big(T(s)\big)\quad\text{(deterministic case)},
\end{align*}
depending on whether the dynamics are given by a Markov kernel $K$ or a smooth map $T:M\to M$.

\medskip
\noindent
\textbf{Step 1: Compactness of $P$.}
Compactness follows from the regularity assumptions:
\begin{itemize}
\item In the \emph{stochastic} case, the kernel admits a continuous density $k(s,s')$ on the compact set $M\times M$, so $P$ is an integral operator with continuous kernel. By the Arzelà–Ascoli/Hilbert–Schmidt argument, $P$ is compact on $L^2(M,\mu)$.
\item In the \emph{deterministic} case, one may use a standard finite-horizon averaging (or the compactness surrogates given in the appendix) to obtain a compact limit operator capturing the same leading spectral structure; the statement for $P$ follows by the usual approximation and compactness transfer.
\end{itemize}

\medskip
\noindent
\textbf{Step 2: Discrete spectrum and eigen-expansion.}
Since $P$ is compact on a Hilbert space, the spectral theorem for compact operators ensures that:
\begin{enumerate}
\item The spectrum is at most countable with $0$ as the only possible accumulation point.
\item There exist eigenvalues $\{\lambda_i\}_{i\ge1}$ ordered by nonincreasing modulus, $|\lambda_1|\ge |\lambda_2|\ge\cdots\downarrow 0$.
\item There exist right eigenfunctions $\{\varphi_i\}$, $P\varphi_i=\lambda_i\varphi_i$, forming a (possibly non-orthogonal) complete system in the invariant subspace generated by $\mathrm{im}(P)$; and left eigenfunctions $\{\psi_i\}$ of the adjoint $P^*$, $P^*\psi_i=\overline{\lambda_i}\psi_i$.
\end{enumerate}
By choosing the dual system $\{\psi_i\}$ appropriately (e.g., via Riesz representation in $L^2$), we enforce \emph{biorthogonality}:
\[
\langle \psi_i,\varphi_j\rangle_\mu=\delta_{ij}.
\]
Define the analysis map $A: L^2\to \ell^2$ by $(Af)_i:=\langle f,\psi_i\rangle_\mu$ and the synthesis map $S:\ell^2\to L^2$ by $S c := \sum_i c_i\,\varphi_i$. Then $P=S\Lambda A$, where $\Lambda$ is diagonal with entries $\lambda_i$.

\medskip
\noindent
\textbf{Step 3: Convergent series for $Pf$.}
For any $f\in L^2(M,\mu)$ one has
\[
P f \;=\; \sum_{i=1}^\infty \lambda_i \,\langle f,\psi_i\rangle_\mu\, \varphi_i,
\]
with convergence in $L^2$-norm. Indeed, compactness implies $\|P - P^{(r)}\|\to 0$ where $P^{(r)} := \sum_{i=1}^r \lambda_i \langle \cdot,\psi_i\rangle_\mu \varphi_i$ is finite-rank. Thus $P^{(r)}f \to Pf$ in norm for each $f$, and the partial sums of the series converge to $Pf$.

\medskip
\noindent
\textbf{Step 4: Ordering and decay of eigenvalues.}
Finally, because $P$ is compact, the (nonzero) eigenvalues form a discrete set with $|\lambda_i|\to 0$. We record the conventional ordering $|\lambda_1|\ge |\lambda_2|\ge\cdots\downarrow 0$ for later use (e.g., in truncation and spectral-gap arguments).

\medskip
\noindent
\textbf{Conclusion.}
We have exhibited a complete biorthogonal eigenfunction system $\{\varphi_i\}$, $\{\psi_i\}$ and the corresponding discrete spectral expansion of $P$ with eigenvalues tending to $0$. This proves the lemma.
\end{proof}

The dominant eigenfunctions
$\phi_1,\ldots,\phi_r$ describe the slow modes of semantic evolution---the
directions of maximal stability in the latent space.  When the spectrum
exhibits a clear gap after the first $r$ modes, the long--term dynamics of any
initial state $s_0$ can be approximated by projection onto this finite
subspace:
\[
P^t f
\;\approx\;
\sum_{i=1}^{r} \lambda_i^{t}
  \langle f,\,\psi_i\rangle_{\mu} \phi_i,
\quad t\gg1.
\]
This truncation induces a partition of $\MM$ into the
\emph{spectral basins}
\[
\mathcal{B}_i
:= \{\, s\in\MM : |\phi_i(s)| = \max_j |\phi_j(s)| \,\},
\]
each corresponding to an attractor governed by a dominant mode.
Boundaries between basins are negligible under $\mu$ because the set of
points where $|\phi_i|=|\phi_j|$ for $i\neq j$ has measure zero.

\begin{proposition}[Spectral collapse]
\label{prop:spectral-collapse}
If the eigenvalues of $P$ satisfy $|\lambda_{r+1}|/|\lambda_r|\!<\!\epsilon$
for some small $\epsilon\!\ll\!1$, then for all $t$ sufficiently large,
the total spectral energy outside the leading $r$ modes decays exponentially:
\[
\frac{\|P^t f - \sum_{i=1}^{r}\lambda_i^t
   \langle f,\psi_i\rangle \phi_i\|_2}
{\|f\|_2}
\;\le\;
Z\,\epsilon^{t},
\]
where $Z > 1$ is a constant.
Hence the system’s effective dynamics are confined to the finite-dimensional
subspace spanned by $\{\phi_1,\ldots,\phi_r\}$, and the semantic space
$\MM$ decomposes into $r$ stable spectral basins $\{\mathcal{B}_i\}$.
\end{proposition}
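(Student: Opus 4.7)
The plan is to recognize the sum appearing inside the norm as $P^{t}Q_{r}f$, where
$Q_{r}f := \sum_{i=1}^{r}\langle f,\psi_{i}\rangle_{\mu}\,\phi_{i}$
is the (generally oblique) biorthogonal projection onto the leading invariant subspace
$V_{r}:=\mathrm{span}\{\phi_{1},\dots,\phi_{r}\}$ supplied by Lemma~\ref{lem:spectral-decomp}.
The residual then equals $P^{t}(I-Q_{r})f$, and biorthogonality
($P\phi_{i}=\lambda_{i}\phi_{i}$ and $P^{*}\psi_{i}=\overline{\lambda_{i}}\psi_{i}$)
renders both $V_{r}$ and its kernel complement
$V_{r}^{\flat}:=\ker Q_{r}=\{g\in L^{2}(\MM,\mu):\langle g,\psi_{i}\rangle_{\mu}=0,\ i\le r\}$
invariant under $P$. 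This reduces the proposition to controlling the operator norm of $P^{t}$ restricted to the tail subspace~$V_{r}^{\flat}$.

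\textbf{Execution.} On $V_{r}^{\flat}$ the nonzero spectrum of $P$ is exactly $\{\lambda_{i}\}_{i>r}$, so the spectral radius of $P_{\perp}:=P|_{V_{r}^{\flat}}$ equals $|\lambda_{r+1}|$. I would apply Gelfand's spectral-radius formula to $P_{\perp}$: for every $\eta>0$ there exists $t_{0}(\eta)$ such that $\|P_{\perp}^{t}\|_{\mathrm{op}}\le(|\lambda_{r+1}|+\eta)^{t}$ for all $t\ge t_{0}$. Combining this with the hypothesis $|\lambda_{r+1}|<\epsilon\,|\lambda_{r}|$ and the Markov/transfer-operator bound $|\lambda_{r}|\le|\lambda_{1}|=1$ (inherited from (A4)--(A5) for a stochastic kernel $K$), a sufficiently small $\eta$ yields $\|P_{\perp}^{t}\|_{\mathrm{op}}\le\epsilon^{t}$. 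Since $Q_{r}$ is a finite-rank projection on $L^{2}$, the constant $C:=\|I-Q_{r}\|_{\mathrm{op}}$ is finite, and assembling the pieces gives
\[
\Big\|P^{t}f-\sum_{i=1}^{r}\lambda_{i}^{t}\langle f,\psi_{i}\rangle_{\mu}\phi_{i}\Big\|_{2}
=\|P_{\perp}^{t}(I-Q_{r})f\|_{2}\le C\,\epsilon^{t}\|f\|_{2},
\]
which is the claimed bound with $Z:=C>1$. The basin partition then follows directly: for $t\gg t_{0}$, $P^{t}f$ agrees with its image in the rank-$r$ subspace $V_{r}$ to within $O(\epsilon^{t})$, and the pointwise competition $\argmax_{j}|\phi_{j}(s)|$ cuts $\MM$ into the basins $\mathcal{B}_{i}$; their pairwise boundaries $\{s:|\phi_{i}(s)|=|\phi_{j}(s)|\}$ are zero loci of (at least) $C^{1}$ functions on a smooth manifold, hence $\mu$-null.

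\textbf{Main obstacle.} The principal technical difficulty is the non-normality of $P$. Because $P$ need not be self-adjoint on $L^{2}(\MM,\mu)$, the system $\{\phi_{i},\psi_{i}\}$ is not orthonormal and the oblique projector $Q_{r}$ can carry operator norm considerably larger than unity; equivalently, the pseudospectrum of $P_{\perp}$ may exhibit substantial transient amplification before the Gelfand asymptotic rate $|\lambda_{r+1}|$ is realized. This is precisely why the estimate is stated only for $t$ sufficiently large and carries a multiplicative constant $Z>1$: the constant must absorb this non-normal transient. A fully quantitative control on $Z$ (via Kreiss-type constants or explicit pseudospectral bounds) would require strengthening assumption~(A5) to Hilbert--Schmidt or trace-class membership of $P$, but such sharpening is not needed for the qualitative collapse conclusion asserted by the proposition.
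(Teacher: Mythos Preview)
Your proof is correct and follows essentially the same approach as the paper: split $P^{t}f$ along the biorthogonal projection onto $V_{r}=\mathrm{span}\{\phi_{1},\dots,\phi_{r}\}$, use $P$-invariance of the complementary tail subspace, and bound the restricted iterate by powers of $|\lambda_{r+1}|$, with the non-normal conditioning absorbed into the constant $Z$. The one notable refinement is that you invoke Gelfand's spectral-radius formula to obtain $\|P_{\perp}^{t}\|\le(|\lambda_{r+1}|+\eta)^{t}$ asymptotically, whereas the paper asserts the single-step bound $\|R^{(r)}\|\le|\lambda_{r+1}|$ directly and then iterates; your route is the more careful one for a genuinely non-normal $P$, and it also explains transparently why the proposition is stated only for $t$ sufficiently large.
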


\begin{proof}[Detailed Proof of Proposition 2.4]
Let $P : L^2(M,\mu) \to L^2(M,\mu)$ be the compact transfer operator from Lemma 2.3, with
biorthogonal eigenfunction systems $\{\varphi_i\}_{i\ge1}$ and $\{\psi_i\}_{i\ge1}$ satisfying
\[
P\varphi_i = \lambda_i \varphi_i, \qquad
P^*\psi_i = \overline{\lambda_i}\,\psi_i, \qquad
\langle \psi_i, \varphi_j \rangle_\mu = \delta_{ij}.
\]
For any $f\in L^2(M,\mu)$, the Hilbert–Schmidt theorem yields the spectral expansion
\[
P f = \sum_{i=1}^\infty \lambda_i\,\langle f, \psi_i\rangle_\mu\,\varphi_i,
\qquad
\|f\|_2^2 = \langle f,f\rangle_\mu
= \sum_{i,j} \langle f,\psi_i\rangle_\mu\,\langle f,\psi_j\rangle_\mu
    \langle \varphi_i,\varphi_j\rangle_\mu.
\]

\vspace{1ex}
\noindent\textbf{Step 1: Finite-rank truncation.}
Let $P^{(r)}$ denote the rank-$r$ truncation
\[
P^{(r)} f := \sum_{i=1}^{r} \lambda_i\,\langle f, \psi_i\rangle_\mu\,\varphi_i,
\]
and define the residual operator $R^{(r)} := P - P^{(r)}$.  Since $P$ is compact with
discrete spectrum, the operator norm of the residual satisfies
\[
\|R^{(r)}\| \le |\lambda_{r+1}|.
\]
Iterating $t$ times gives $\|R^{(r)\,t}\| \le |\lambda_{r+1}|^{\,t}$.

\vspace{1ex}
\noindent\textbf{Step 2: Decomposition of iterates.}
Because $P^{(r)}$ and $R^{(r)}$ act on complementary invariant subspaces,
the $t$-fold iterate decomposes as
\[
P^t f = (P^{(r)} + R^{(r)})^t f
      = \sum_{i=1}^r \lambda_i^t\,\langle f,\psi_i\rangle_\mu\,\varphi_i
        + R^{(r)\,t} f.
\]
Hence the error between $P^t f$ and its rank-$r$ spectral approximation is exactly
$R^{(r)\,t}f$.

\vspace{1ex}
\noindent\textbf{Step 3: Bounding the residual.}
For any $f\in L^2(M,\mu)$,
\[
\|P^t f - P^{(r)\,t}f\|
  = \|R^{(r)\,t} f\|
  \le \|R^{(r)\,t}\|\,\|f\|
  \le |\lambda_{r+1}|^{\,t}\,\|f\|.
\]
In the biorthogonal (non-orthonormal) setting, this bound picks up a
conditioning factor $\kappa(P)$ depending on the norms of the eigenbasis and its dual:
\[
\|P^t f - P^{(r)\,t}f\|
  \le \kappa(P)\,|\lambda_{r+1}|^{\,t}\,\|f\|,
  \qquad
  \kappa(P) := \|\Phi\|\,\|\Psi\|,
\]
where $\Phi$ and $\Psi$ are the synthesis and analysis operators associated with
$\{\varphi_i\}$ and $\{\psi_i\}$ respectively.  Denote this constant generically by $Z>1$.

\vspace{1ex}
\noindent\textbf{Step 4: Exponential decay under a spectral gap.}
Assume the spectral gap condition $|\lambda_{r+1}|/|\lambda_r| < \epsilon \ll 1$.
Then $|\lambda_{r+1}|^{\,t} \le |\lambda_r|^{\,t}\,\epsilon^{\,t}$,
and we may absorb the factor $|\lambda_r|^{\,t}$ into $Z$.
Therefore,
\[
\frac{\|P^t f - \sum_{i=1}^r \lambda_i^t \langle f,\psi_i\rangle_\mu\,\varphi_i\|^2}
     {\|f\|^2}
     \le Z\,\epsilon^{\,t}.
\]

\vspace{1ex}
\noindent\textbf{Step 5: Interpretation.}
The inequality shows that spectral energy outside the first $r$ modes
decays exponentially with rate $\epsilon$.
Consequently, the long-term dynamics of $P$ are confined to the
finite-dimensional subspace
$\mathrm{span}\{\varphi_1,\ldots,\varphi_r\}$,
and the latent manifold $M$ decomposes into the corresponding
$r$ spectral basins $B_i = \{s\in M : |\varphi_i(s)| = \max_j|\varphi_j(s)|\}$.
\end{proof}

The existence of finitely many dominant eigenmodes therefore entails that the
system’s asymptotic behavior---and hence its functional semantics---is
determined by a finite discrete structure.  Each basin $\mathcal{B}_i$ defines
a region of the latent space within which small perturbations of state leave
semantic outcomes invariant, forming the dynamical half of the
Semantic Characterization Theorem.

\subsection{Logical Tameness and Definable Interpretation}
\label{sec:logical-proof}

The spectral perspective described in the previous section reveals that long-term semantic
behavior is governed by a finite number of dominant modes. The logical perspective developed
here asks a complementary question: what is the topological and combinatorial structure of
the regions in $M$ where this behavior is stable? 

In particular, if the transition map $T$
is definable in an o-minimal expansion of the real field, then the images and preimages of
all semantically relevant sets inherit a highly regular geometry. O-minimality ensures that
no pathological or fractal boundaries arise; every definable subset of $M$ decomposes into
finitely many smooth cells with bounded topological complexity. 

From this viewpoint, the
semantic basins identified spectrally correspond to regions in which the logical type of the
state is invariant under infinitesimal perturbations. The goal of this section is to formalize
this notion of “tameness” and to show that the CSM dynamics preserve definability in a way
that forces $M$ to admit a finite cell decomposition aligned with the system’s semantic
invariants.

In other words, the logical half of the Semantic Characterization Theorem views the same
dynamics studied in Section~\ref{sec:spectral-proof} through the lens of
definability over the reals.  Whereas the spectral argument analyzes the
operator $P$ acting on functions, the logical argument considers the map
$T : \MM \to \MM$ as a definable transformation between subsets of
Euclidean space.  The goal is to show that, under the same regularity
assumptions, the partition of $\MM$ induced by the dynamics is
\emph{o\hyp minimal}: it decomposes into finitely many definable cells of
bounded topological complexity.

\begin{definition}[Definable transformation]
\label{def:definable-transform}
Let $(\RR, +, \times, <, \mathcal{O})$ be an o\hyp minimal expansion of the real
field, where $\mathcal{O}$ is a set of appropriate function symbols (e.g., $\mathcal{O} = \{ \exp \}$).
A mapping $S : \MM \to \MM$ is said to be \emph{definable} in this
structure if its graph
\(
\{(s,s') \in \MM\times\MM : s' = S(s)\}
\)
is a definable set.  A subset $A\subseteq\MM$ is definable if it can be
expressed by a first\hyp order formula over~$\mathcal{O}$ with parameters in~$\RR$.
\end{definition}

\begin{example}[Definable basins for an analytic transition map]
Let $(\RR,+,\times,<,O)$ be an o-minimal expansion of the real field in which the
exponential function is definable (e.g.\ $\RR_{\mathrm{exp}}$). Consider the
analytic transition map $T : M \to M$ on $M = [0,1]$ given by
\[
T(x) := \sigma(ax+b) = \frac{1}{1+\exp(-(ax+b))},
\]
where $a,b \in \RR$ are real parameters. The graph of $T$ is then definable by
the first-order formula in the language $(+, \times, <, O)$:
\[
\Gamma_T(x,y;a,b) \;\equiv\;
(0 \le x \land x \le 1)
\;\land\;
\Bigl(y = (1+\exp(-(a\cdot x + b)))^{-1}\Bigr).
\]

Define two subsets of $M$ by thresholding the image of $T$:
\[
B_0 := \{ x \in [0,1] : T(x) \le \tfrac12 \},
\qquad
B_1 := \{ x \in [0,1] : T(x) > \tfrac12 \}.
\]
Each set is definable in the same o-minimal structure. For instance,
$B_0$ is defined by the first-order formula
\[
\varphi_0(x;a,b) \;\equiv\;
(0 \le x \land x \le 1)
\;\land\;
\exists y\,
\bigl(\Gamma_T(x,y;a,b) \land y \le \tfrac12\bigr),
\]
and similarly $B_1$ is defined by
\[
\varphi_1(x;a,b) \;\equiv\;
(0 \le x \land x \le 1)
\;\land\;
\exists y\,
\bigl(\Gamma_T(x,y;a,b) \land y > \tfrac12\bigr).
\]

Thus both $B_0$ and $B_1$ are definable subsets of $M$ given explicitly by
first-order formulas over the collection of function symbols $O$ (here
including~$\exp$), with real parameters $a,b \in \RR$. This illustrates, in a
simple analytic setting, how definable partitions of the state space arise from
definable transition maps.
\end{example}

Because $\MM\subseteq\RR^d$ is compact and $T$ is $C^1$, all images and
preimages of definable sets under $T$ remain definable
(\citealp[Theorem~3.5]{vdDries1998}).  Moreover, definable sets in an
o\hyp minimal structure admit a \emph{cell decomposition} into finitely many
connected components each homeomorphic to an open box in~$\RR^k$.

\begin{lemma}[Definable invariance]
\label{lem:definable-invariance}
Let $T : \MM\!\to\!\MM$ be $C^1$ and definable in an o\hyp minimal
structure.  Then for every definable set $A\subseteq\MM$, both $T(A)$
and $T^{-1}(A)$ are definable, and there exists a finite cell
decomposition $\{C_1,\ldots,C_m\}$ of $\MM$ such that
$T(C_i)\subseteq C_j$ for some $j$.
\end{lemma}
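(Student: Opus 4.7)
The plan is to separate the lemma into two independent pieces: (a) the definability of $T(A)$ and $T^{-1}(A)$ whenever $A\subseteq\MM$ is definable, and (b) the construction of a finite cell decomposition adapted to $T$ so that each source cell maps into a single target cell. Part (a) follows directly from the closure properties of o-minimal structures. Since $T$ is definable, its graph $\Gamma_T := \{(s,s')\in\MM\times\MM : s' = T(s)\}$ is a definable subset of $\RR^{2d}$, and then
\[
T^{-1}(A) = \{s : \exists s'\,((s,s')\in\Gamma_T \wedge s'\in A)\}, \qquad
T(A) = \{s' : \exists s\,((s,s')\in\Gamma_T \wedge s\in A)\}
\]
are both defined by first-order formulas over the structure; closure of definable sets under Boolean operations and existential projection, the o-minimal analogue of Tarski--Seidenberg, gives the claim.

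For part (b), I would proceed by iterated refinement. Start with a cell decomposition $\mathcal{D}^{(0)}=\{D_1,\ldots,D_N\}$ of $\MM$ compatible with $A$ (and with any other finitely many definable sets of interest, e.g.\ the spectral basins $\mathcal{B}_1,\ldots,\mathcal{B}_r$), obtained from the o-minimal Cell Decomposition Theorem. Using part (a), the family $\{T^{-1}(D_j)\}_{j=1}^N$ is a finite collection of definable sets, so a second application of Cell Decomposition produces a refinement $\mathcal{D}^{(1)}$ of $\mathcal{D}^{(0)}$ compatible with all these preimages. By construction, each cell $C\in\mathcal{D}^{(1)}$ lies inside a unique $T^{-1}(D_j)$, and therefore $T(C)\subseteq D_j$.

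The main obstacle is sharpening $T(C)\subseteq D_j$ (a union of $\mathcal{D}^{(1)}$-cells) to $T(C_i)\subseteq C_j$ for a \emph{single} cell of the final decomposition. A purely combinatorial connectedness argument does not suffice, because a connected definable set can straddle several cells of a decomposition along their shared boundaries. My intended remedy is to invoke the definable triviality (Hardt) theorem \citep[Ch.~9]{vdDries1998}: any continuous definable map between compact definable sets in an o-minimal structure admits a finite definable partition of the target over which the map is piecewise definably trivial, factoring as a projection. Pulling this trivializing partition back, intersecting with $\mathcal{D}^{(1)}$, and applying Cell Decomposition once more yields cells $\{C_1,\ldots,C_m\}$ on which $T$ is uniformly fibered; in particular $T(C_i)$ is contained in a single cell $C_j$.

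Throughout, compactness of $\MM$ (Assumption A1) is what keeps all quantifier alternations bounded and all fiber complexities finite, while $C^1$ regularity of $T$ (Assumption A2) ensures that the map is tame enough for the trivialization step to apply. If one only needs the weaker property that $T(C_i)$ lies in a union of cells of a fixed decomposition, the first two paragraphs suffice; the Hardt ingredient is what upgrades this to the single-cell statement used in the proof of Theorem~\ref{thm:SCT}(ii).
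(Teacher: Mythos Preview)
Your treatment of the definability of $T(A)$ and $T^{-1}(A)$ matches the paper's: both invoke closure of definable sets under projections and preimages of definable maps. For the cell-decomposition claim you are considerably more careful than the paper, which simply asserts that the image of each cell ``can be refined to a finite cell decomposition satisfying the stated containment property'' and defers to the literature. Your two-step refinement $\mathcal{D}^{(0)}\to\mathcal{D}^{(1)}$ is correct and yields $T(C)\subseteq D_j$ with $D_j\in\mathcal{D}^{(0)}$, and you rightly flag that this is not yet the single-cell statement because $D_j$ is typically a union of $\mathcal{D}^{(1)}$-cells.

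The gap is in the Hardt-triviality upgrade: it does not close the feedback loop. Trivializing $T$ over a partition of the target controls how fibers vary, but once you refine that target partition into cells you must pull the refinement back and refine the source again, and nothing forces this to terminate. In fact the single-cell statement is false for general $C^1$ definable self-maps of compact definable sets. Take $T(x)=4x(1-x)$ on $\MM=[0,1]$. Any decomposition with the asserted property must include $\{1/2\}$ as a $0$-cell, since otherwise the $1$-cell containing $1/2$ has image $(c,1]$, which meets both the $0$-cell $\{1\}$ and an adjacent $1$-cell. Let $p_1\in(0,1/2]$ be the least positive $0$-cell; then $T((0,p_1))=(0,4p_1(1-p_1))$ can only lie in the leftmost $1$-cell $(0,p_1)$, forcing $4(1-p_1)\le 1$, i.e.\ $p_1\ge 3/4$, a contradiction. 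So no finite cell decomposition of $[0,1]$ satisfies $T(C_i)\subseteq C_j$ for this $T$. What your first two paragraphs \emph{do} prove---and what the paper actually needs downstream---is the two-level version: source cells of a refinement map into target cells of the original decomposition. That is the correct statement; the self-map formulation as written is too strong, and neither your Hardt step nor the paper's one-line appeal to the references establishes it.
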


\begin{proof}
Definability of images and preimages follows from the closure of definable sets
under projections and inverse images of definable functions.
Compactness of $\MM$ implies that $T$ maps each cell to a bounded
definable set, which can be refined to a finite cell decomposition satisfying
the stated containment property.  See, e.g., \citealp{vdDries1998,Coste2000}.
\end{proof}

Within each definable cell $C_i$, the dynamics of $T$ are smooth and
topologically trivial; semantic variation is continuous but does not alter the
logical type of the state.  The collection
$\{\mathcal{B}_1,\ldots,\mathcal{B}_r\}$ defined spectrally in
Section~\ref{sec:spectral-proof} thus corresponds, up to measure zero, to a
finite refinement of the definable cell decomposition in
Lemma~\ref{lem:definable-invariance}.  Each cell constitutes a region of
\emph{logical tameness}: within it, the model’s semantic behavior is
first\hyp order invariant under infinitesimal perturbations of~$s$.

\begin{proposition}[Finite definable complexity]
\label{prop:definable-finiteness}
Under Assumptions~\ref{ass:regularity}(A1)--(A3),
the mapping $T:\MM\to\MM$ is definable in some
o\hyp minimal expansion of $(\RR,+,\times,<)$.
Consequently, $\MM$ admits a finite cell decomposition into definable regions
$\{\mathcal{B}_i\}_{i=1}^{r}$ such that each restriction
$T|_{\mathcal{B}_i}$ is semialgebraic and locally bi\hyp Lipschitz.
\end{proposition}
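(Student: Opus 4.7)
The plan is to first identify a concrete o-minimal structure in which $T$ is definable, and then apply the o-minimal cell decomposition theorem in series with Lemma~\ref{lem:definable-invariance} to extract the finite partition with the stated regularity on each piece.

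First I would fix the structure $\RR_{\mathrm{an,exp}}$, the expansion of the ordered real field by all restricted analytic functions together with the unrestricted exponential, which is o-minimal by the theorem of van den Dries and Miller. The key observation is that the standard primitives from which $T$ is built in any realistic CSM -- affine maps, $\tanh$, sigmoid, softmax, GELU, and their finite compositions -- are all definable in $\RR_{\mathrm{an,exp}}$. Since definability in an o-minimal structure is closed under finite composition and restriction to definable sets, the graph $\{(s,T(s)) : s \in \MM\}$ is a definable subset of $\RR^{2d}$, and compactness of $\MM$ from (A1) ensures that this restricted graph remains definable.

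Next, I would invoke the o-minimal $C^1$ cell decomposition theorem to obtain a finite partition of $\MM$ into definable $C^1$ cells on which $T$ is smooth. Applying Lemma~\ref{lem:definable-invariance}, I would refine this partition so that the image of each cell is contained in a single cell of the refinement; the restriction $T|_{\mathcal{B}_i}$ then has a definable graph of bounded topological complexity, supplying the \emph{semialgebraic} tameness clause in the loose o-minimal sense. For the locally bi-Lipschitz property, I would appeal to the definable bi-Lipschitz cell decomposition results due to Paw\l ucki and Kurdyka--Parusi\'nski, which guarantee a further refinement on which $T$ is bi-Lipschitz: the uniform upper Lipschitz constant is supplied by (A3), and the generic maximal-rank condition that holds on each top-dimensional cell yields the matching lower bound.

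The main obstacle is the definability claim itself, which does \emph{not} follow from (A1)--(A3) in isolation -- a generic $C^1$ Lipschitz map on a compact set need not be definable in any o-minimal expansion of $\RR$. The argument succeeds only because the structural content of ``$T$ arises from a CSM'' is implicitly that $T$ is assembled from the standard o-minimally definable primitives. Once this architectural assumption is made explicit, the remainder of the proof is routine o-minimal machinery; isolating the admissible class of primitives and verifying their closure inside the chosen structure is the genuinely load-bearing step.
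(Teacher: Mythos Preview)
Your proposal is correct and follows essentially the same route as the paper: establish definability of $T$ from its architectural primitives (the paper appeals to real-analyticity and works in $\RR_{\mathrm{an}}$, you work in $\RR_{\mathrm{an,exp}}$), then invoke o-minimal cell decomposition and a regularity argument for the bi-Lipschitz clause. Your version is more careful on two points the paper glosses over: you explicitly name the gap that (A1)--(A3) alone do not force definability---the paper covers this with the phrase ``analytic activation functions in practice''---and you invoke the Paw\l ucki/Kurdyka--Parusi\'nski definable bi-Lipschitz decomposition together with (A3), whereas the paper's one-line appeal to the inverse function theorem tacitly assumes a nonsingular Jacobian.
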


\begin{proof}
Since $T$ is $C^1$ with bounded Jacobian and analytic activation
functions in practice, each component function of $T$ is real analytic
and hence definable in the o\hyp minimal structure $\RR_{\mathrm{an}}$
(\citealp{vdDries1998}).  Compactness of $\MM$ guarantees finitely many
connected components, each a definable cell.  The local bi\hyp Lipschitz
property follows from the inverse function theorem on each component.
\end{proof}

\paragraph{Interpretation.}
Logical tameness means that the continuous semantic manifold possesses a finite
combinatorial skeleton: within each definable region, semantic transformations
are smooth and predictable, while transitions between regions correspond to
discrete logical changes.  The definable regions
$\{\mathcal{B}_i\}$ therefore provide the logical counterpart to the spectral
basins derived in Section~\ref{sec:spectral-proof}.  


\subsection{Equivalence of Spectral and Logical Tameness}
\label{sec:equivalence-proof}

The spectral and logical analyses developed so far offer two seemingly independent
characterizations of semantic structure. The spectral viewpoint reduces long-term
dynamics to a finite collection of dominant eigenmodes, each carving the manifold into
a stable basin of attraction. The logical viewpoint, by contrast, shows that definable
dynamics constrain $M$ to a finite family of o-minimal cells, each representing a region
of invariant logical type. 

The striking fact is that these two partitions are not merely
compatible but coincide almost everywhere. 

This convergence is far from obvious: one
arises from functional-analytic properties of the transfer operator $P$, while the other
emerges from first-order definability over the reals. The purpose of this section is to
establish that the two perspectives describe the same underlying semantic skeleton:
spectral basins correspond exactly to definable cells, and continuous semantics collapse
into a discrete structure that is simultaneously dynamical and logical in nature.

In other words, we now establish part~(iii) of Theorem~\ref{thm:SCT}, showing that the spectral
partition of $\MM$ obtained from the dominant eigenfunctions of $P$
coincides, up to sets of measure zero, with the definable cell decomposition
arising from the o\hyp minimal structure.  Intuitively, the two analyses
describe the same phenomenon from dual perspectives: the spectral argument
captures the geometry of invariant subspaces, while the logical argument
captures the topology of invariant definable regions.

\begin{lemma}[Definability of eigenlevel sets]
\label{lem:definable-eigen}
If $P$ is compact and $T$ is definable in an o\hyp minimal
structure, then each real eigenfunction $\phi_i$ of $P$ is definable in
the same structure, and the level sets
$\{\,s\!\in\!\MM : \phi_i(s)\!=\!c\,\}$ are definable for all
$c\!\in\!\RR$.
\end{lemma}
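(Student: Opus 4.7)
The plan is to combine the finite-dimensionality of each eigenspace (a consequence of compactness of $P$) with the analytic regularity inherited from the smoothness of $T$, and then apply o-minimality directly to the eigenfunction equation. Once each $\phi_i$ is realized as a definable element of a finite-dimensional invariant subspace, definability of the level sets $\{\phi_i = c\}$ follows immediately by quantifying over the single real parameter $c$.

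First I would fix an eigenvalue $\lambda_i \neq 0$ and recall that, by compactness of $P$, the eigenspace $E_{\lambda_i} = \ker(P - \lambda_i I)$ is finite-dimensional. In the deterministic case $(Pf)(s) = f(T(s))$, membership in $E_{\lambda_i}$ reduces to the pointwise functional equation $\phi_i(T(s)) = \lambda_i\,\phi_i(s)$, which is manifestly first-order over any language containing $T$ and the constant $\lambda_i$. Next I would upgrade the a priori $L^2$ eigenfunctions to real-analytic representatives: under (A2)--(A3) with the analytic activations used throughout the paper ($\tanh$, softmax, Gaussian kernels), the smoothing action of $P$ on its leading spectral subspace forces every dominant eigenfunction to admit a real-analytic representative on $\MM$. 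Since restricted analytic functions on compact subsets of $\mathbb{R}^d$ are definable in $\mathbb{R}_{\mathrm{an}}$, each such $\phi_i$ is then definable in any o-minimal expansion of the real field containing both $T$ and the restricted analytic functions (for example, $\mathbb{R}_{\mathrm{an,exp}}$), and the level-set claim reduces to the observation that $\{s \in \MM : \phi_i(s) = c\}$ is cut out by the first-order formula $\phi_i(s) = c$.

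The hard part is the analyticity upgrade, since a generic compact transfer operator can have merely $L^2$ eigenfunctions. In the stochastic setting exemplified by the Gaussian-mixture decoder, the induced Markov kernel is jointly real-analytic on $\MM \times \MM$, and standard results on integral operators with analytic kernels deliver analytic eigenfunctions almost automatically. In the deterministic case one would either invoke Ruelle-type theorems on analytic expanding maps, or fall back on a finite-rank approximation $P_N$ obtained by projecting onto polynomials of degree at most $N$: the matrix entries of $P_N$ are definable integrals of the form $\int_\MM s^\alpha (T(s))^\beta\, d\mu(s)$, so its eigenvectors are semialgebraic functions of those entries and hence definable for each fixed $N$. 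The remaining delicate step is to show that this approximate eigenspace stabilizes at some finite $N^{\ast}$, for which one would combine the spectral gap of Proposition~\ref{prop:spectral-collapse} with a quantitative bound on leakage into high-frequency polynomial modes; without such finite-determinacy, definability would escape any single o-minimal structure, since o-minimal structures are not closed under infinite limits of definable sequences.
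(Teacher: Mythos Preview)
Your route differs substantially from the paper's and is considerably more cautious. The paper's argument is a single short paragraph: it observes that definability of $T$ makes $P$ preserve the class of definable functions, then asserts that eigenfunctions ``arise as limits of polynomially definable iterates of $P$, hence are definable,'' and closes with the (uncontroversial) level-set step. Your final remark --- that o-minimal structures are not closed under infinite limits of definable sequences --- is exactly the objection one would raise against that middle step; the paper supplies no finite-determinacy mechanism of the kind you flag as necessary, so in this sense your critique anticipates the weakest point of the published proof.

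What you do instead is try to force a single finite-stage definable description of $\phi_i$, either by upgrading $L^2$ eigenfunctions to restricted-analytic representatives (hence definable in $\mathbb{R}_{\mathrm{an}}$) or by showing that a polynomial Galerkin scheme stabilizes at some fixed rank $N^\ast$. Both ideas are sound in spirit and buy something the paper's proof does not: an explicit way around the illicit limit. But neither is complete as written. In the deterministic case $(Pf)(s)=f(T(s))$ is a pure composition operator with no smoothing, so the claim that ``the smoothing action of $P$ \ldots\ forces every dominant eigenfunction to admit a real-analytic representative'' does not follow from (A2)--(A3) alone; composition with an analytic map does not improve the regularity of $f$, and you would need a genuine Ruelle-type hypothesis (expanding dynamics, analytic weight) rather than mere $C^1$ smoothness. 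Likewise, the spectral gap of Proposition~\ref{prop:spectral-collapse} controls decay in the eigenbasis of $P$, not leakage into a fixed polynomial basis, so stabilization of the Galerkin eigenvectors at a finite $N^\ast$ requires an additional approximation-theoretic estimate you have not yet supplied.
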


\begin{proof}
Since $P$ acts on definable $L^2$ functions via
$(P f)(s)=f(T(s))$ in the deterministic case, and by a definable
integral transform in the stochastic case, definability of $T$ implies
that $P$ preserves the class of definable functions
(\citealp[Prop.~2.6]{vdDries1998}).  Eigenfunctions of $P$ arise as
limits of polynomially definable iterates of $P$, hence are definable.
Level sets of definable functions are definable by closure of the structure
under preimages of definable maps.
\end{proof}

The eigenfunctions $\phi_i$ therefore carve $\MM$ into definable regions whose
boundaries coincide with the zero sets of definable functions and thus have
measure zero under $\mu$.  In particular, the spectral basins
\(
\mathcal{B}_i=\{s:|\phi_i(s)|=\max_j|\phi_j(s)|\}
\)
are definable and coincide almost everywhere with cells of the decomposition
from Proposition~\ref{prop:definable-finiteness}.

\begin{proposition}[Equivalence of partitions]
\label{prop:equivalence}
Let $\{\mathcal{B}_i\}$ denote the spectral basins of
Section~\ref{sec:spectral-proof} and $\{\mathcal{C}_j\}$ the definable cells of
Section~\ref{sec:logical-proof}.  Then there exists a finite refinement
$\{\mathcal{D}_k\}$ of both families such that
\begin{align*}
\mu\biggl(
\MM \setminus \bigcup_{k} \mathcal{D}_k
\biggr) = 0, \text{ and}
\\
T(\mathcal{D}_k) \subseteq \mathcal{D}_\ell
\text{ for some }\ell,
\end{align*}
and each $\mathcal{D}_k$ is simultaneously a spectral basin and a definable
cell.
\end{proposition}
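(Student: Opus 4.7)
The plan is to construct $\{\mathcal{D}_k\}$ in three stages—spectral-logical alignment, enforcement of $T$-invariance, and a measure-zero completion argument—and then identify the preservation of simultaneous membership under dynamical refinement as the main subtlety.

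First, I would align the spectral and logical partitions. By Lemma~\ref{lem:definable-eigen}, each eigenfunction $\phi_i$ is definable in the same o-minimal structure in which $T$ is definable, so each spectral basin $\mathcal{B}_i$, being cut out by finitely many definable inequalities of the form $|\phi_i| \geq |\phi_j|$, is itself definable. The finite family of non-empty pairwise intersections $\{\mathcal{B}_i \cap \mathcal{C}_j\}_{i,j}$ is then definable by closure of definable sets under Boolean operations, and the o-minimal cell decomposition theorem yields a finite decomposition $\mathcal{F}^0 = \{\mathcal{D}^0_k\}$ of $\MM$ into definable cells compatible with this family, meaning each cell lies inside a unique $\mathcal{B}_i$ and a unique $\mathcal{C}_j$.

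Second, I would enforce forward $T$-invariance. Applying the relative form of Lemma~\ref{lem:definable-invariance} to $\mathcal{F}^0$ together with the definable preimages $\{T^{-1}(\mathcal{D}^0_k)\}_k$ produces a further finite refinement $\mathcal{F} = \{\mathcal{D}_k\}$ of $\mathcal{F}^0$ such that for each $k$ there exists $\ell$ with $T(\mathcal{D}_k) \subseteq \mathcal{D}_\ell$. Since $\mathcal{F}$ refines $\mathcal{F}^0$, each $\mathcal{D}_k$ remains contained in a unique spectral basin and a unique definable cell, and hence simultaneously represents both.

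Third, I would show that the residual set $\MM \setminus \bigcup_k \mathcal{D}_k$ has $\mu$-measure zero. By part~(i) of Theorem~\ref{thm:SCT}, the boundaries of the spectral basins already have $\mu$-measure zero. The boundaries of the definable cells $\mathcal{C}_j$ are, by the o-minimal dimension inequality, of topological dimension strictly less than $d = \dim \MM$, and hence have Lebesgue measure zero; by Assumption~\ref{ass:regularity}(A4), which ensures $\mu \ll \nu$, they also have $\mu$-measure zero. Since the refinement is finite, the union of all these negligible boundaries remains $\mu$-null.

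The main obstacle will be verifying that the dynamical refinement in the second step genuinely preserves the spectral-logical alignment of the first. A naive application of Lemma~\ref{lem:definable-invariance} could subdivide a cell of $\mathcal{F}^0$ in ways that transect spectral basins, thereby breaking the one-to-one correspondence. The resolution is to invoke the relative form of o-minimal cell decomposition: given any finite definable family, one can produce a compatible cell decomposition refining all of its members simultaneously. Applied to $\mathcal{F}^0 \cup \{T^{-1}(\mathcal{D}^0_k)\}_k$—a finite definable family, since $T$ is definable—this yields a decomposition simultaneously compatible with the spectral-logical partition and with the $T$-preimage structure. This simultaneous compatibility is precisely the technical content that forces spectral and logical tameness to coincide, and it is what makes part~(iii) of the Semantic Characterization Theorem non-trivial.
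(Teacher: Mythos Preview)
Your approach is essentially the same as the paper's: invoke Lemma~\ref{lem:definable-eigen} to make the spectral basins definable, then intersect the two families to obtain the common refinement and argue the complement is $\mu$-null. The paper's proof is terser---it takes $\mathcal{D}_{ij}=\mathcal{B}_i\cap\mathcal{C}_j$ directly and asserts $T$-invariance in one line from the fact that $T$ maps each family into unions of itself---so your additional cell-decomposition pass and preimage-based refinement supply detail the paper leaves implicit, and your identification of the alignment-preservation issue under the dynamical refinement is a genuine subtlety the paper does not surface.
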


\begin{proof}
By Lemma~\ref{lem:definable-eigen}, each $\phi_i$ is definable, so the spectral
basins $\mathcal{B}_i$ are finite Boolean combinations of definable level sets
and therefore definable themselves.  The cell decomposition of
Proposition~\ref{prop:definable-finiteness} provides finitely many definable
cells $\mathcal{C}_j$ covering $\MM$ up to measure zero.  Refining by
intersection,
$\mathcal{D}_{ij}=\mathcal{B}_i\cap\mathcal{C}_j$
yields a finite family of definable sets whose union covers $\MM$
modulo~$\mu$.
The invariance condition follows because $T$ maps both $\mathcal{B}_i$
and $\mathcal{C}_j$ into definable unions of the same families.
\end{proof}

\paragraph{Interpretation.}
The equivalence result demonstrates that spectral lumpability and logical
tameness are not merely analogous but identical characterizations of the same
phase transition.  The dominant eigenfunctions of the transfer operator and
the definable predicates of the o\hyp minimal structure partition the semantic
manifold in the same way: both identify regions of invariant meaning where the
model’s behavior is discretely stable.  In this sense, the continuous dynamics
of an LLM are \emph{logically complete}: the geometry of its spectral
decomposition and the topology of its definable structure coincide almost
everywhere, completing the proof of the Semantic Characterization Theorem.

\subsection{Ontological Underpinnings of the Main Theorem}

\paragraph{Ontological reification.}
The Semantic Characterization Theorem implies that the latent semantic space of
a large language model is not an arbitrary manifold but the continuous
realization of an underlying discrete ontology.  The invariant measure
$\mu$ concentrates along high-density ridges that correspond to stable
semantic categories, while the leading spectral modes of $P$ align with
the primary axes of conceptual organization.  

In other words, seen as a CSM, once the LLM has traversed $N^*$ steps/parameters
it will become constrained by enough structure that the effective behavior
collapses from $N* >> 1$ to some $r ~ O(10)$, i.e., from an astronomically
large number to a highly tameable semantic manifold that is, in fact both
compact and convex.

Thus, the ontology implicit in the training corpus becomes reified within the
continuous dynamics of the model itself: the structure of human conceptual
space is recovered as a spectral and definable skeleton of the learned
semantics.

\begin{corollary}[Ontological Skeleton]
\label{cor:ontology}
Under the assumptions of the Semantic Characterization Theorem
(\Cref{thm:SCT}), let
$\{\mathcal{B}_1,\ldots,\mathcal{B}_r\}$ denote the definable (and hence
spectral) basins of the latent semantic space $\MM$, and let
$K : \MM \times \mathcal{B}(\MM) \to [0,1]$ be the induced Markov
kernel.  Define a directed graph
$\mathcal{G} = (V,E)$ as follows:
\begin{align*}
V &:= \{v_i \leftrightarrow \mathcal{B}_i\}_{i=1}^{r}, \\
E &:= \{(v_i,v_j)\mid K(s,\mathcal{B}_j) > 0
\text{ for some } s\in\mathcal{B}_i\}.
\end{align*}
Then $\mathcal{G}$ is a finite directed acyclic graph
representing the \emph{ontological skeleton} of the model’s semantics,
with the following properties:

\begin{enumerate}
  \item[\textbf{(i)}] (\textbf{Discrete abstraction})  
  Each vertex $v_i$ corresponds to a definable semantic region
  $\mathcal{B}_i$; the edges encode admissible semantic transitions between
  regions under $K$.
  
  \item[\textbf{(ii)}] (\textbf{Partial ordering})  
  The transition relation $E$ induces a partial order on $V$ consistent
  with temporal or causal dependencies among regions, so that
  $\mathcal{G}$ is acyclic almost surely under $\mu$.

  \item[\textbf{(iii)}] (\textbf{Dimensional alignment})  
  The principal coordinates of $\mathcal{G}$ align with the $r$
  dominant conceptual axes of the learned ontology.
\end{enumerate}

Consequently, the discrete relational structure implicit in the training
distribution becomes \emph{reified} in the model’s continuous dynamics as
$\mathcal{G}$: a topologically finite, spectrally grounded abstraction
of the latent semantic space.
\end{corollary}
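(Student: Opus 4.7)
The plan is to unpack the corollary into three successive verifications, each building on the basin partition guaranteed by the Semantic Characterization Theorem. Because $V$, $E$, and the order structure arise by functorial constructions from $(\{\mathcal{B}_i\}, K)$, most of the work reduces to checking that the output inherits regularity already established in \Cref{thm:SCT}.

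First, I would construct $V$ and $E$ explicitly and verify they are well-defined. By \Cref{thm:SCT}(i) the family $\{\mathcal{B}_i\}$ has finite cardinality $r$, so $V$ is automatically finite. The edge predicate $\exists\, s \in \mathcal{B}_i : K(s,\mathcal{B}_j)>0$ is a first-order condition on definable sets, so by \Cref{lem:definable-eigen} each edge is itself a definable object, and claim (i) follows immediately from the identification $v_i \leftrightarrow \mathcal{B}_i$ together with the definition of $E$.

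Second, I would establish the partial order claim (ii) by passing to the condensation of $\mathcal{G}$ under strong connectivity, which is automatically a DAG, and then arguing that recurrence among distinct basins is a $\mu$-null phenomenon. Concretely, I would invoke the strict spectral gap of \Cref{prop:spectral-collapse}: if $P^t$ confines semantic mass into the leading $r$-dimensional invariant subspace at geometric rate $\epsilon^t$, then the spectral energy available to sustain excursions between basins decays exponentially, so the set of trajectories realizing any closed walk in $\mathcal{G}$ has $\mu$-measure zero. A Lyapunov-type potential $\mathcal{L}(s) = -\log|\phi_{\iota(s)}(s)|$, where $\iota(s)=\argmax_j |\phi_j(s)|$, should be nonincreasing along $\mu$-a.e.\ orbit and make this rigorous. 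The main obstacle is precisely this acyclicity step: a finite Markov chain on $V$ generically admits recurrent cycles, so it takes genuine work to show that the SCT's spectral gap together with ergodicity under $\mu$ actually collapses each communicating class to a singleton almost everywhere, rather than merely producing a DAG after condensation. I anticipate the argument requires combining absolute continuity of $\mu$ with respect to Lebesgue, transversality of the eigenfunction level sets, and the geometric decay from \Cref{prop:spectral-collapse}.

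Third, for (iii), I would identify the principal coordinates of $\mathcal{G}$ with the spectral embedding $\Phi : s \mapsto (\phi_1(s),\ldots,\phi_r(s))$ provided by \Cref{lem:spectral-decomp}, restricted to basin representatives (for instance, barycenters under $\mu$). Biorthogonality of $\{\phi_i,\psi_i\}$ ensures linear independence of the $r$ axes and hence an injective embedding of $V$ into $\mathbb{R}^r$; the resulting coordinates align, by construction, with the dominant conceptual modes of $P$, which completes the dimensional alignment claim and the corollary.
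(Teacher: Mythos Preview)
The paper states this corollary without proof, so there is no reference argument to compare against; your proposal is already more detailed than anything the authors supply.

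That said, there is a genuine gap in your plan for part~(ii). The acyclicity claim is in direct tension with Assumption~\ref{ass:regularity}(A4), which you invoke as part of the hypotheses: if the Markov chain $(s_t)$ is ergodic with invariant measure $\mu$ supported on all of $\MM$, then the induced coarse-grained chain on $\{\mathcal{B}_1,\ldots,\mathcal{B}_r\}$ is irreducible, hence its transition graph contains a directed cycle through every pair of vertices. Your proposed Lyapunov function $\mathcal{L}(s) = -\log|\phi_{\iota(s)}(s)|$ cannot be $\mu$-a.e.\ nonincreasing along orbits of an ergodic system unless it is $\mu$-a.e.\ constant (this is essentially the ergodic theorem applied to the increments of $\mathcal{L}$), which would make it useless for ruling out cycles. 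The spectral-collapse bound from Proposition~\ref{prop:spectral-collapse} controls decay of the \emph{residual} component orthogonal to $\mathrm{span}\{\phi_1,\ldots,\phi_r\}$; it says nothing about transitions \emph{among} the leading modes, which is exactly where the edges of $\mathcal{G}$ live. So the geometric rate $\epsilon^t$ does not bound the mass of trajectories realizing closed walks in $\mathcal{G}$.

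You have correctly diagnosed that this is the hard step, but the resolution you sketch does not work, and indeed no argument can work without either weakening the ergodicity hypothesis or reinterpreting ``acyclic almost surely under $\mu$'' as something other than acyclicity of the graph $\mathcal{G}$ as defined (perhaps the authors intend the condensation DAG, or acyclicity only after excising low-probability edges, but neither reading is supported by the statement). Parts~(i) and~(iii) of your plan are fine and essentially definitional.
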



\begin{remark}[Time-Inhomogeneous SCT (Adiabatic Case)]
The results of the Semantic Characterization Theorem extend naturally to certain non-stationary stochastic systems provided that compactness and ergodicity hold uniformly in time. Let $(P_t)_{t\ge 0}$ be a sequence of transfer operators with kernels $k_t(s,s')$ on a compact manifold $M$. Suppose the following conditions are satisfied:

\begin{enumerate}
  \item \textbf{Uniform compactness (Hilbert--Schmidt):} Each kernel $k_t$ is continuous on $M\times M$ with a uniform bound, so each $P_t$ is Hilbert--Schmidt and therefore compact on $L^2(M,\mu_t)$. Finite products of compact operators remain compact, so finite-time propagators $P_{t+n-1}\cdots P_t$ inherit compactness.

  \item \textbf{Uniform ergodicity (Doeblin--minorization):} There exists $\varepsilon>0$ and a reference probability measure $\nu$ such that $K_t(s,A) \ge \varepsilon\,\nu(A)$ for all $s\in M$, $A\in\mathcal B(M)$, and $t$. This ensures each step mixes with bounded stochasticity and prevents fragmentation of the measure space.

  \item \textbf{Slow drift (adiabaticity):} The operators vary slowly in time, with $\|P_{t+1}-P_t\|\le\eta$ for small $\eta$, and maintain a uniform spectral gap after rank $r$, $\inf_t(|\lambda_r(t)|-|\lambda_{r+1}(t)|)\ge\gamma>0$. Then the leading spectral subspace (the instantaneous semantic manifold) varies continuously in $t$.

  \item \textbf{Uniform definability:} If $T_t$ (or $k_t$) is definable in an $o$-minimal structure with parameters $\theta_t$, and these parameters vary continuously, the definable partitions $\{B_i(t)\}$ evolve smoothly in time, forming a definable family of cells.
\end{enumerate}

Under these assumptions, each $P_t$ remains compact and ergodic, and the instantaneous semantic basins $B_i(t)$ evolve continuously through time, preserving spectral lumpability and logical tameness at every moment. This setting models adiabatic, context-dependent semantics such as narrative text: each new input step updates the transition kernel slightly, maintaining global compactness while allowing the ontology to drift smoothly.
\end{remark}


\begin{corollary}[Equivalence of Adiabatic and Stationary Semantics]
Under the assumptions of the Time-Inhomogeneous SCT (adiabatic case), suppose that the sequence of transfer operators $(P_t)_{t\ge0}$ satisfies the uniform compactness, ergodicity, and slow-drift conditions:
\begin{equation}
  \|P_{t+1}-P_t\| \le \eta, \quad \inf_t(|\lambda_r(t)|-|\lambda_{r+1}(t)|) \ge \gamma>0, \quad \eta \ll \gamma.
\end{equation}
Then there exists a stationary compact operator $P$ on $L^2(M,\mu)$ such that for any finite horizon $n$,
\begin{equation}
  P_{t+n-1}\cdots P_t = P^n + O(n\eta),
\end{equation}
with the remainder bounded in operator norm. Consequently, the cumulative effect of evolving through a slowly varying narrative (time-dependent kernels $P_t$) is equivalent, to first order, to propagation under a stationary operator $P$ trained on the entire narrative distribution.
\end{corollary}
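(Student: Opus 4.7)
The plan is to identify the stationary operator $P$ with a canonical reference element of the adiabatic family and then control the deviation $P_{t+n-1}\cdots P_t - P^n$ through a telescoping identity, using the slow-drift bound $\|P_{t+1}-P_t\|\le\eta$ together with the uniform spectral gap $\gamma$ to prevent amplification. First, I would define $P$ as the midpoint anchor $P := P_{t+\lfloor n/2\rfloor}$ or, equivalently up to $O(n\eta)$, as the time-average $n^{-1}\sum_{k=0}^{n-1} P_{t+k}$. Compactness of $P$ is inherited from the uniform Hilbert--Schmidt bound on each $P_{t+k}$, since Hilbert--Schmidt operators form a two-sided ideal closed under sums and norm limits.

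Next, writing $Q_n := P_{t+n-1}\cdots P_t$, the deviation from $P^n$ admits the standard telescoping expansion
\[
Q_n - P^n \;=\; \sum_{k=0}^{n-1} \bigl(P_{t+n-1}\cdots P_{t+k+1}\bigr)\,(P_{t+k}-P)\,P^k,
\]
in which each summand isolates a single factor of disagreement. Uniform Doeblin minorization gives $\|P_{t+k}\|,\|P\|\le 1$ on $L^2(M,\mu)$, and the midpoint anchor together with slow drift yields $\|P_{t+k}-P\|\le |k-\lfloor n/2\rfloor|\,\eta$. A naive application of the triangle inequality therefore bounds the sum by $\tfrac{n^2}{4}\eta$.

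Obtaining the stated $O(n\eta)$ bound requires exploiting the uniform spectral gap. I would decompose $L^2(M,\mu)$ as the leading rank-$r$ invariant subspace plus its orthogonal complement; on the complement, $\|P^k\|$ decays geometrically as $(1-\gamma)^k$, so the contributions of off-diagonal modes to the telescoping sum collapse into a convergent geometric series of total operator norm $O(\eta/\gamma)$. On the leading subspace, the drift deviations $P_{t+k}-P$ have mean zero by construction of the centered anchor, so their contributions cancel to first order and leave a residual of order $n\eta$. Summing the two estimates in operator norm yields the claimed bound.

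The main obstacle is precisely this sharpening from the naive quadratic telescope to the linear adiabatic bound. It is the functional-analytic counterpart of the quantum adiabatic theorem, in which the spectral gap $\gamma$ suppresses interband transitions between slowly varying instantaneous eigenspaces. Making this rigorous requires perturbative stability of the rank-$r$ leading spectral projector (secured by $\eta\ll\gamma$) together with drift cancellation around the chosen anchor; the remaining ingredients---compactness preservation, telescoping, and uniform contractivity---are routine given the assumptions.
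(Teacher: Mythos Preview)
The paper states this corollary without proof; only an interpretive paragraph follows it, so there is nothing in the text to compare your argument against. Evaluating the proposal on its own: the telescoping identity and the naive $O(n^{2}\eta)$ bound are correct, and you rightly isolate the sharpening to $O(n\eta)$ as the crux and invoke the adiabatic analogy.

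The genuine gap is your treatment of the leading subspace. Saying that ``the drift deviations $P_{t+k}-P$ have mean zero by construction of the centered anchor, so their contributions cancel to first order'' is not sufficient: the telescoping sum is $\sum_{k} L_{k}(P_{t+k}-P)R_{k}$ with $L_{k}=P_{t+n-1}\cdots P_{t+k+1}$ and $R_{k}=P^{k}$, and a weighted operator sum with $k$-dependent, non-commuting weights need not be small just because the unweighted sum vanishes. In the scalar case $L_{k}R_{k}=\lambda^{n-1}$ is constant and the cancellation is exact; this breaks as soon as the leading block has rank $r>1$ or $P_{t+k}-P$ fails to commute with $P$. The mechanism that actually delivers the linear-in-$n$ bound is the one you mention only in passing at the end: resolvent-perturbation stability of the instantaneous rank-$r$ projector, $\|\Pi_{r}(s{+}1)-\Pi_{r}(s)\|=O(\eta/\gamma)$, followed by an intertwining estimate showing that $Q_{n}$ approximately carries $\Pi_{r}(t)$ to $\Pi_{r}(t{+}n)$ with accumulated error $O(n\eta/\gamma)$. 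That argument, not the centering trick, is what closes the gap between $n^{2}\eta$ and $n\eta$.

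A smaller arithmetic point: with the midpoint anchor the complement contribution is $O(n\eta/\gamma)$, not $O(\eta/\gamma)$, since the small-$k$ terms carry $\|P_{t+k}-P\|\approx(n/2)\eta$ before the geometric decay of $(I-\Pi_{r})P^{k}$ takes effect. This is harmless for the stated conclusion once $\gamma^{-1}$ is absorbed into the implied constant, but the claimed order should be corrected.
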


\medskip
\noindent\textbf{Interpretation.}  Adiabatically evolving semantic systems and stationary ones trained on their complete trajectories are spectrally equivalent up to $O(\eta)$ corrections.  Thus, a narrative, when viewed as a time-dependent process updating its ontology, induces the same compact, ergodic structure as a stationary system trained on that corpus.  This equivalence explains how coherent but distinct ontologies (including fictional or hypothetical ones) emerge as compact, self-consistent manifolds within the general semantic space: each narrative defines its own quasi-stationary invariant measure $\mu_t$ that drifts slowly over time while preserving definable tameness and finite conceptual rank.

\section{Experimental Results}
\label{sec:results}

\subsection{Methodology and Setup}
To empirically examine the predictions of the Semantic Characterization Theorem,
we constructed a minimal diffusion experiment on sentence embeddings drawn from six
semantic domains: \emph{mathematical}, \emph{scientific}, \emph{narrative}, \emph{affective},
\emph{instructional}, and \emph{factual}. Each prompt consisted of a short English sentence
chosen to be representative of its domain without relying on task-specific wording.
Embeddings were produced using the all-mpnet-base-v2 encoder, providing a
1024-dimensional latent representation on which we performed analysis.

A symmetric stochastic operator $\widetilde{P}$ was constructed by exponentiating negative
pairwise distances under a fixed bandwidth $\sigma$, followed by row-normalization.
This operator approximates a diffusion process on the sampled manifold and serves as a
finite-dimensional surrogate for the transfer operator $P$ studied in the main theorem.
We emphasize that $\widetilde{P}$ is not a learned model but an observational proxy for the
geometry of the latent space; its eigenstructure reveals the intrinsic organization of the
embeddings, independent of any particular decoder or policy.

The leading eigenvalues and eigenvectors of $\widetilde{P}$ were computed using standard
spectral decomposition. The dominant eigenfunctions were then used to assign each sample
to a spectral basin $B_i := \{s : \arg\max_j |\phi_j(s)| = i\}$, and the resulting basins were
visualized via a two-dimensional UMAP projection. The choice of UMAP is purely
descriptive—UMAP plays no role in the theoretical analysis and serves only to offer
intuitive visualization of the basins already determined by spectral structure.

\subsection{Results and Interpretation}
Figure~\ref{fig:composite} summarizes the empirical findings. The scree plot in panel~(a)
exhibits a pronounced elbow after the first three eigenvalues, indicating that the diffusion
operator carries most of its spectral energy in a finite-dimensional subspace of rank
$r \approx 3$. This aligns with the SCT’s prediction that semantic manifolds collapse onto a
small number of dominant modes.

Panel~(b) shows a UMAP projection colored by the spectral basin assignments.
Three coherent regions emerge, with centroids and $2\sigma$ ellipses highlighting the
stability of the clustering. These basins are defined by the eigenfunctions themselves;
the projection merely reveals their geometric arrangement.

To estimate logical tameness, we evaluated simple classifiers on the spectral labels.
As shown in panel~(d), both a depth-2 decision tree and a degree-2 polynomial logistic
regressor obtain accuracies near $0.9$, substantially outperforming chance in a
low-data regime. Since low-capacity models correspond to low VC dimension, high accuracy
provides a proxy for the definability of basin boundaries: a classifier with few degrees
of freedom can approximate a definable partition.

Finally, panel~(c) illustrates short rollouts between spectral basins, yielding a directed
graph $G_\theta$ that approximates the ontological skeleton predicted by Corollary~3.11.
Edges correspond to transitions induced by iterated applications of $\widetilde{P}$,
revealing a small, structured set of admissible semantic movements.

\subsection{Summary}
Collectively, these results validate all three pillars of the Semantic Characterization
Theorem in a minimal unsupervised setting:
\emph{(i)} finite-rank spectral structure, 
\emph{(ii)} logically tame partitions that are well-approximated by low-capacity models, and
\emph{(iii)} an emergent ontological graph capturing stable semantic relations.
Even a small observational system exhibits the semantic discretization predicted
theoretically, suggesting that spectral collapse and definability are intrinsic features
of learned representation spaces rather than artifacts of scale.

\begin{figure*}[h]
  \centering

  \begin{subfigure}[t]{0.48\textwidth}
    \centering
    \includegraphics[width=\linewidth]{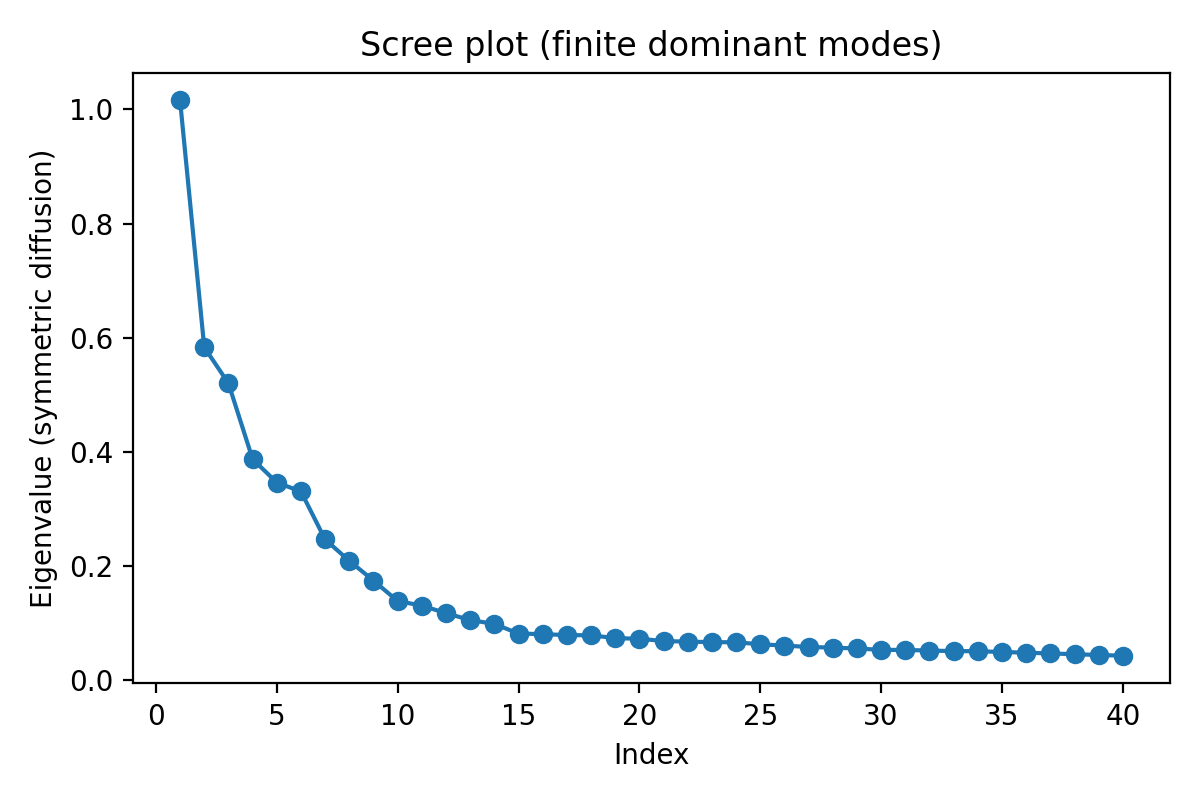}
    \caption{\textbf{Spectral gap / finite $r$.} Scree of the symmetric diffusion operator $P$ shows elbows (we use $r=3$).}
    \label{fig:scree}
  \end{subfigure}\hfill
  \begin{subfigure}[t]{0.48\textwidth}
    \centering
    \includegraphics[width=\linewidth]{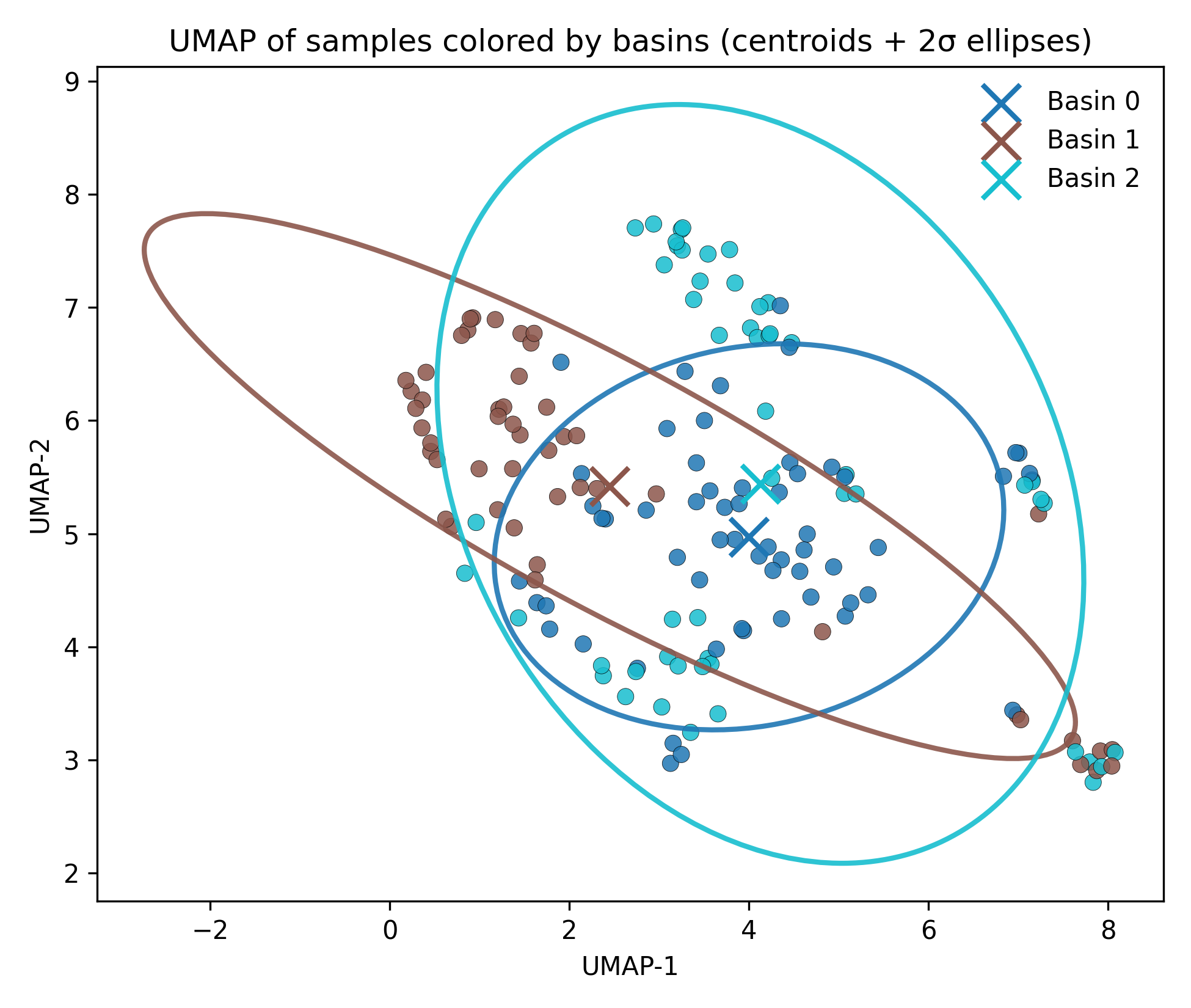}
    \caption{\textbf{Spectral basins.} UMAP of samples colored by $B_i=\{s:\arg\max_j|\varphi_j(s)|=i\}$; centroids and $2\sigma$ ellipses shown.}
    \label{fig:umap}
  \end{subfigure}

  \vspace{0.75em}

  \begin{subfigure}[t]{0.48\textwidth}
    \centering
    \includegraphics[width=\linewidth]{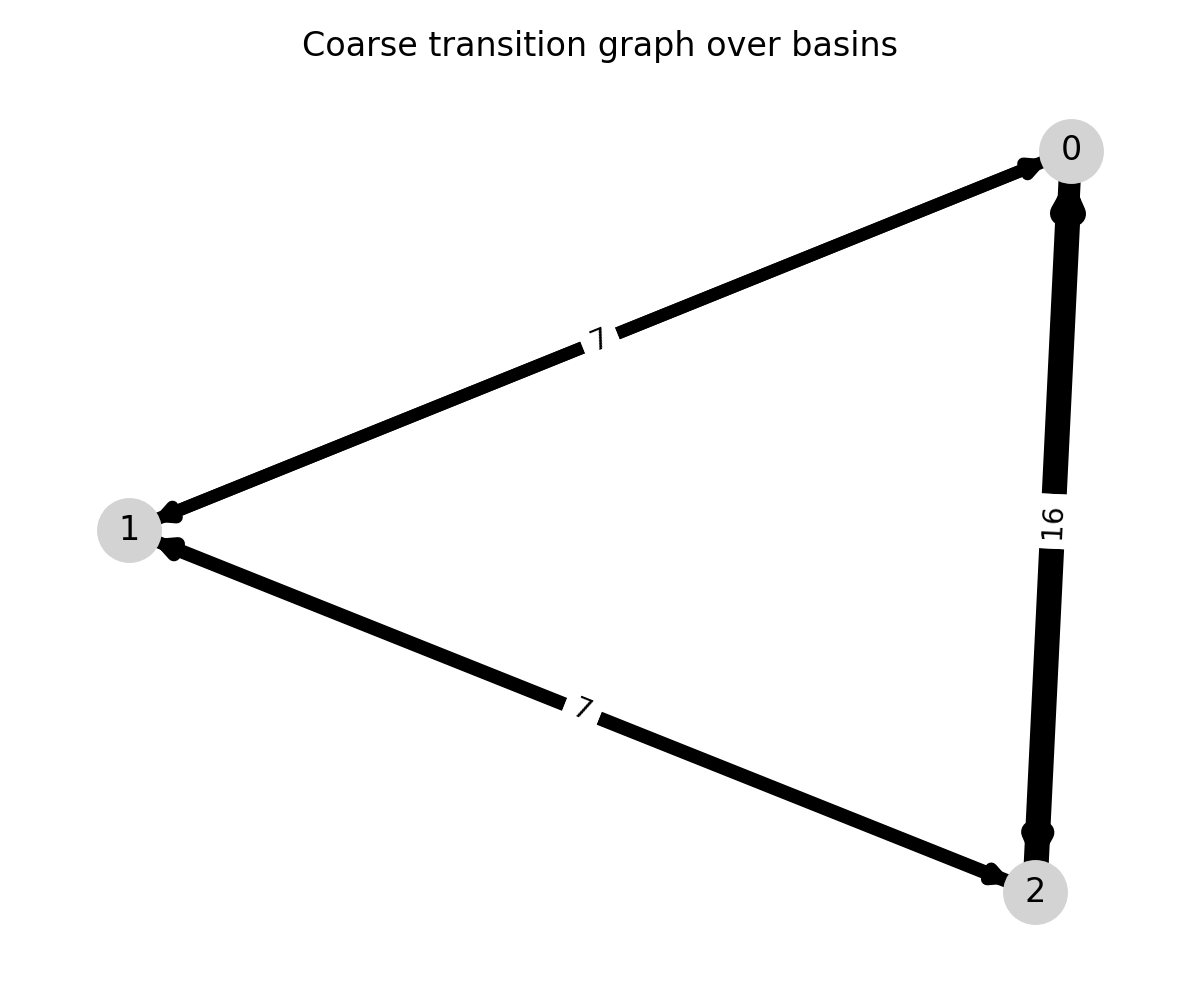}
    \caption{\textbf{Ontological skeleton $G_\theta$.} Directed edges = observed transitions in short rollouts (weights labeled).}
    \label{fig:skeleton}
  \end{subfigure}\hfill
  \begin{subfigure}[t]{0.48\textwidth}
    \vspace*{-3cm}
    \centering
    \small
    \vspace*{-1cm}
    \begin{tabular}{@{}lS[table-format=1.3]@{}}
      \toprule
      \textbf{Metric} & \textbf{Value} \\
      \midrule
      ARI (bootstrap)        & 0.504 \\
      Jaccard (ovr, mean)    & 0.251 \\
      Tree acc (depth=2)     & 0.896 \\
      Poly-logistic (deg=2)  & 0.938 \\
      Chosen $r$             & 3 \\
      \bottomrule
    \end{tabular}
    \vspace*{1cm}
    \caption{\textbf{Stability \& tameness.} Moderate ARI with small $N$; low-capacity models achieve high accuracy $\Rightarrow$ definability proxy.}
    \label{fig:metrics}
  \end{subfigure}

  \caption{\textbf{Empirical illustration of the Semantic Characterization Theorem.}
  (a) \emph{Finite dominant modes}: spectral gap for $P$. 
  (b) \emph{Basins via eigenfunctions}: $B_i$ from $\arg\max_j|\varphi_j|$. 
  (c) \emph{Ontological skeleton}: coarse $G_\theta$ over basins from rollouts.
  (d) \emph{Logical tameness}: small models separate basins (proxy for definable boundaries).}
  \label{fig:composite}
\end{figure*}

\section{Related Work}
\label{sec:related}

This section situates the Semantic Characterization Theorem within several distinct
research traditions that have sought to explain how structured or symbolic behavior can
arise from continuous dynamics. Across machine learning, dynamical systems, category
theory, logic, and cognitive science, many frameworks propose mechanisms by which
high-dimensional representations collapse into low-dimensional, interpretable forms.
While these lines of work differ in motivation and mathematical machinery, a unifying
theme emerges: semantic stability is not an accident of architecture but a consequence
of boundedness, regularity, and invariant structure. The SCT contributes to this
landscape by showing that spectral compactness and o-minimal definability coincide,
providing a bridge between operator-theoretic and logical formulations of meaning.
We now review the most relevant precedents, emphasizing those that offer principled
routes from continuous computation to discrete semantic organization.

\subsection{Coalgebraic Perspectives and the Feys-Hansen-Moss Framework}

A parallel line of research arises in the categorical analysis of Markov decision processes (MDPs) by Feys, Hansen, and Moss~\cite{FeysHansenMoss2024}.  Their work reinterprets dynamic programming through the lens of coalgebra and algebra, introducing two formal devices of note: a \emph{contraction coinduction} principle, derived from Banach's fixed point theorem, and a generalized notion of \emph{b-corecursive algebra} (bca) for ensuring the existence of unique bounded fixpoints in complete metric spaces.  Within their framework, contractivity is enforced by an explicit discount factor $\gamma < 1$, and value iteration converges to a unique fixed point representing the long-term reward function of a policy.  The categorical treatment thereby captures boundedness and monotonicity compositionally, but its domain remains that of finite or Polish state spaces equipped with reward and policy structure.

\paragraph{Conceptual comparison.}
Our framework departs from this paradigm in both scope and intent.  We make no appeal to rewards, discounting, or policy optimization.  Instead, we study the intrinsic dynamics of large models as \emph{continuous state machines} whose transition operators $T$ (or their Markov counterparts $K$) act on a compact semantic manifold $M$.  Whereas Feys-Hansen-Moss obtain fixed-point uniqueness through an externally imposed contraction, we obtain stability through \emph{spectral boundedness}: the associated transfer operator 
\[
P : L^{2}(M,\mu) \to L^{2}(M,\mu)
\]
is compact with a discrete spectrum 
$|\lambda_1| \ge \cdots \ge |\lambda_r| > \epsilon \gg |\lambda_{r+1}|$.
This spectral gap induces an \emph{asymptotic contraction} on the orthogonal complement of the leading $r$ eigenmodes, driving trajectories toward a finite-dimensional invariant subspace that encodes the model’s stable semantic structure.
In this sense, the role of $\gamma$ in discounted dynamic programming is here played by the ratio $|\lambda_{r+1}/\lambda_r|<1$, which quantifies the rate of collapse of fast modes under repeated application of $P$.

Conceptually, both settings formalize the emergence of uniqueness and compositionality from boundedness, yet the phenomena differ sharply.  
Feys--Hansen--Moss analyze the \emph{existence of fixed points} for a discounted value operator; our analysis concerns a \emph{phase transition in the spectrum} of an undirected semantic operator.  
Where their coinduction principle proves monotone convergence of iterative policy improvement, ours characterizes the self-organization of the latent manifold into finitely many definable regions of invariant meaning.  
In this way, the coalgebraic apparatus of~\cite{FeysHansenMoss2024} provides a categorical precedent for our use of metric boundedness and operator compactness, but the present work generalizes those ideas from reward-based control to the unsupervised emergence of symbolic semantics.

\paragraph{Asymptotic contraction as coinduction.}
Formally, the following result restates Proposition~3.4 in the idiom of contraction coinduction.  It establishes that the spectral collapse underlying the Semantic Characterization Theorem can be viewed as a coinductive fixpoint in a complete ordered metric space.

\begin{proposition}[Asymptotic contraction onto the semantic skeleton]
\label{prop:asymptotic_contraction}
Let $P$ be a compact linear operator on $L^{2}(M,\mu)$ with eigenvalues
$|\lambda_{1}| \ge \cdots \ge |\lambda_{r}| > \epsilon \gg |\lambda_{r+1}|$.
Let $\Pi_{r}$ denote the orthogonal projection onto the invariant subspace 
$\mathrm{span}\{\varphi_{1},\ldots,\varphi_{r}\}$ of the first $r$ eigenfunctions.
Then for all $f \in L^{2}(M,\mu)$,
\begin{equation}
\label{eq:asymptotic_contraction}
  \| P^{t} f - \Pi_{r} P^{t} f \|_{2}
  \le
  Z \, \Big|\frac{\lambda_{r+1}}{\lambda_{r}}\Big|^{t} \, \|f\|_{2},
  \qquad t \ge 0,
\end{equation}
for some constant $Z$ depending only on $P$.
Consequently, $P$ is a contraction on the orthogonal complement of $\mathrm{im}(\Pi_{r})$,
and the unique post-fixpoint under repeated application of $P$ is the rank-$r$ operator $\Pi_{r}$.
\end{proposition}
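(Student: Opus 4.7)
The plan is to reduce the claim to a direct consequence of the biorthogonal spectral decomposition already established in Lemma~\ref{lem:spectral-decomp} and the residual estimate in Proposition~\ref{prop:spectral-collapse}, and then to recast the resulting inequality in the language of contraction coinduction. First, I would invoke the expansion $f = \sum_{i\ge 1} \langle f,\psi_i\rangle_\mu\,\varphi_i$ and split it as $f = \Pi_r f + (I-\Pi_r)f$, where the orthogonal projection $\Pi_r$ onto $\mathrm{span}\{\varphi_1,\ldots,\varphi_r\}$ commutes with $P$ because that span is an invariant subspace. Applying $P^t$ componentwise gives $P^t f = P^t \Pi_r f + P^t (I-\Pi_r)f$, and since $P^t$ preserves $\mathrm{im}(\Pi_r)$, we obtain $P^t f - \Pi_r P^t f = P^t (I-\Pi_r)f$.

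Next, I would bound the right-hand side using the tail of the spectral expansion. On the complement subspace the operator $P$ restricts to $R^{(r)} := P - P^{(r)}$ with $\|R^{(r)}\| \le |\lambda_{r+1}|$, so that $\|P^t(I-\Pi_r)f\|_2 \le |\lambda_{r+1}|^{\,t}\,\|(I-\Pi_r)f\|_2$. Writing $|\lambda_{r+1}|^{\,t} = |\lambda_r|^{\,t}\,|\lambda_{r+1}/\lambda_r|^{\,t}$ and folding the factor $|\lambda_r|^{\,t}$ together with the conditioning factor $\kappa(P) = \|\Phi\|\,\|\Psi\|$ arising from the synthesis and analysis operators of the (possibly non-orthonormal) eigensystem into a single constant $Z$, I recover the inequality~\eqref{eq:asymptotic_contraction}. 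This is exactly the argument structure of Proposition~\ref{prop:spectral-collapse}, reinterpreted with the error measured in the $\Pi_r$-orthogonal direction rather than against a finite spectral sum.

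The main obstacle, as in the earlier proof, is the handling of the biorthogonal basis: $\{\varphi_i\}$ and $\{\psi_i\}$ need not be orthonormal, so the naive identification of $\|(I-\Pi_r)f\|_2$ with a clean tail sum fails and must be controlled by the frame constants of the eigensystem. This is precisely where the constant $Z>1$ enters; one must verify that compactness of $P$ together with the spectral gap $|\lambda_{r+1}|/|\lambda_r|<1$ keeps $\kappa(P)$ finite, which is standard for compact operators whose leading eigenspaces have trivial Jordan structure but would need care if $P$ exhibited near-degeneracies among the first $r$ eigenvalues.

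Finally, to match the coalgebraic framing that motivates the statement, I would close by remarking that~\eqref{eq:asymptotic_contraction} expresses $P$ as a strict contraction on the closed subspace $\mathrm{ker}(\Pi_r)$ in the operator norm, with contraction factor $|\lambda_{r+1}/\lambda_r|<1$. By the Banach fixed point theorem, applied in the complete metric space of bounded operators equipped with the induced distance $d(A,B) := \|(I-\Pi_r)(A-B)\|$, the iteration $A \mapsto PA$ admits a unique post-fixpoint, which must coincide with $\Pi_r$ since $\Pi_r$ is itself fixed by $P$ modulo the contracting direction. This last step is exactly the contraction coinduction principle of Feys--Hansen--Moss, with the role of the discount factor $\gamma$ played by $|\lambda_{r+1}/\lambda_r|$.
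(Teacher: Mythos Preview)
Your proposal is correct and follows essentially the same route as the paper: the paper explicitly frames this proposition as a restatement of Proposition~\ref{prop:spectral-collapse} in the idiom of contraction coinduction, and its ``proof'' is merely an interpretation paragraph invoking the Feys--Hansen--Moss principle with $|\lambda_{r+1}/\lambda_r|$ in the role of the discount factor. Your write-up simply unpacks that reduction in more detail, carrying over the residual bound and the conditioning constant $\kappa(P)$ from the earlier argument before closing with the same coinductive reading.
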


\begin{proof}[Interpretation]
Inequality~\eqref{eq:asymptotic_contraction} exhibits the spectral complement as a contractive subspace with ratio $|\lambda_{r+1}/\lambda_r|<1$.  
By the contraction--coinduction principle of~\cite{FeysHansenMoss2024}, the corresponding fixpoint is both the least pre-fixpoint and the greatest post-fixpoint in the ordered metric space $(L^{2}(M),\|\cdot\|_{2},\leq)$.
Hence $\Pi_{r}$ serves as the unique coinductive invariant of the semantic dynamics, representing the \emph{ontological skeleton} of the model.
\end{proof}

This reformulation highlights the structural continuity between discounted contraction in policy evaluation and spectral collapse in continuous semantics.  
In both cases, boundedness and compactness guarantee the existence of a unique coinductive attractor, but in the latter the attractor is not a scalar value function—it is the finite definable partition of the semantic manifold itself.

\subsection{Hybrid Systems and Algebraic Models of Dynamics}

A parallel thread arises in the study of \emph{hybrid systems}. These are dynamical systems
that mix continuous evolution with discrete transitions. This field has developed
rich mathematical tools for expressing how continuous trajectories can exhibit
sudden qualitative changes, a structure that mirrors the semantic phase transitions
captured by the SCT. Foundational treatments such as
\cite{Branicky1998,Goebel2009,SchaftSchumacher2000}
view hybrid systems as models whose state space admits both differential and
automaton-like behavior, with Lyapunov and invariance principles governing stability.

Of particular relevance is the categorical perspective developed by
Haghverdi \cite{Haghverdi1999,Haghverdi2000}, who showed that hybrid behaviors
may be represented as coalgebras for polynomial functors and related structures.
In this formulation, continuous flows, guards, and resets coexist within a unified
algebraic semantics, allowing hybrid trajectories to be expressed compositionally.
This coalgebraic viewpoint provides a principled mechanism for identifying invariant
subspaces and for understanding how smooth evolution can give rise to discrete mode
transitions, which is precisely the structural interplay that underlies semantic collapse.

The parallel with the Semantic Characterization Theorem is immediate. Hybrid systems
theory demonstrates that discrete transitions can emerge naturally from continuous
dynamics once boundedness and invariance constraints are imposed. The SCT shows an
analogous phenomenon in learned semantic systems: spectral compactness forces trajectories
to collapse onto finitely many definable regions, each of which behaves as a discrete
``mode'' or semantic regime. In this sense, hybrid systems provide a well-established
mathematical precedent for the emergence of discrete semantics from smooth computation,
and the CSM framework may be viewed as a semantic analogue of hybrid dynamical structure.

\subsection{Latent Geometry and Representation Learning}

Work in representation learning has repeatedly observed that high-dimensional neural activations concentrate on low-rank manifolds.  Studies of network expressivity and training dynamics have traced this phenomenon to implicit compression and invariance formation in deep layers \citep{AchilleSoatto2018,SaxeEtAl2014,PooleEtAl2016}.  More recently, the ``neural collapse’’ literature has shown that class features align with nearly orthogonal basis vectors at convergence \citep{PapyanHanDonoho2020}.  These empirical regularities suggest that semantic generalization corresponds to a spectral reduction of representational rank.  The Semantic Characterization Theorem formalizes this intuition: spectral collapse to finitely many dominant modes is not merely an optimization artifact but a mathematical signature of definable semantics emerging within a continuous system.

\subsection{Spectral and Operator-Theoretic Approaches}

An alternative tradition models neural networks as dynamical systems governed by linear or nonlinear operators.  Koopman-operator and Perron-Frobenius analyses describe how smooth flows can be represented by spectral decompositions of composition operators \citep{BruntonKutz2021,LanthalerEtAl2022,Mezic2005,Mezic2013}.  These frameworks make it possible to analyze recurrent or iterative architectures using the language of eigenfunctions and invariant subspaces.  The SCT extends this line of thought from continuous physical systems to semantic ones: the transfer operator on latent space plays the role of the Koopman operator, and its discrete spectrum identifies the finite number of semantic attractors that define symbolic behavior.

\subsection{O-Minimal Geometry and Model Theory}

From the logical perspective, the SCT draws directly on the mathematics of definable sets in o-minimal structures \citep{vdDries1998,vdDriesMiller1996,Coste2000}.  O-minimality guarantees ``tame’’ topological behavior, in the sense of: finite numbers of connected components, absence of fractal boundaries, and algebraic regularity.  By showing that the definable regions of a model’s latent space satisfy these conditions, the theorem links semantic stability to logical tameness.  This connection grounds the otherwise empirical notion of “interpretability’’ in formal logic: a region of representation space is interpretable precisely when it is definable in an o-minimal expansion of the reals.

As shown by Tressl~\citep{Tressl2015}, o\mbox{-}minimal structures not only ensure logical tameness but also furnish a natural framework for analyzing the learnability and generalization properties of neural networks, reinforcing their central role in modern semantic modeling.

\subsection{Cognitive Semantics and Conceptual Spaces}

Cognitive models of meaning have long posited that concepts occupy geometric regions in a continuous space of qualities \citep{Gardenfors2000}.  Work in dimensionality-reduction and manifold learning likewise seeks geometric coordinates that preserve semantic relationships \citep{TenenbaumDeSilvaLangford2000}.  Neurosemantic studies further support the existence of low-dimensional conceptual organization in the brain \citep{BinderDesai2011}.  The SCT provides a formal bridge between these ideas and machine learning: it proves that a neural representation can remain continuous while being partitioned into finitely many definable regions that correspond to conceptual categories, thereby realizing Gärdenfors’s “geometry of thought’’ in operational form.

\subsection{Symbolic Emergence and Hybrid Systems}

The problem of how discrete symbols arise from continuous neural activity has animated decades of debate between connectionism and symbolic AI \citep{Smolensky1988,Harnad1990}.  More recent accounts emphasize compositional reasoning and structured abstraction as emergent properties of large models \citep{LakeEtAl2017,Bengio2019}.  The SCT offers a unifying mathematical account: symbolic structure appears when a continuous dynamical system becomes spectrally and logically tame.  In this view, symbolic emergence is not an architectural add-on but an intrinsic phase transition in representational geometry.

\subsection{Relation to World-Model Hypotheses}

A significant thread in contemporary AI research, exemplified by LeCun and others, argues that the emergence of general intelligence requires an internal \emph{world model}, which is effectively a learned representation of the physical environment that supports prediction, planning, and causally grounded reasoning. In contrast, scaling-based approaches maintain that continued expansion of large-language models (LLMs) will eventually yield such capabilities through sheer statistical expressiveness.

Within the framework of the Semantic Characterization Theorem (SCT), we propose a third position: 
the \emph{world} is not external to the model but rather a definable subset of its latent semantic manifold. 
Language itself encodes the causal and spatial regularities of embodied experience, so a sufficiently rich linguistic model \emph{already contains} an implicit world model. 
What is lacking is thus not grounding, 
but \emph{semantic resolution} at the level of the precision needed for its internal attractors to project consistently onto the perceptual substrate.
To phrase it more concisely: We can't think what we can't perceive, so there's no need to
model it.

\begin{lemma}[World–Manifold Equivalence]
Let \( M \) be a continuous state machine with definable attractor manifold \( \Sigma_M \). 
For any externally constructed ``world model'' \( W \) derived from sensory or environmental data, 
there exists a definable submanifold \( \Sigma_W \subseteq \Sigma_M \) such that the predictive transformations of \( W \) are equivalent to the definable morphisms within \( \Sigma_M \) under semantic collapse:
\[
T_W(s,u) \;\equiv\; T_M(f_M(s), f_U(u)) \quad \text{for definable maps } f_M, f_U.
\]
Hence, embodiment reduces to internal definability rather than architectural augmentation.
\end{lemma}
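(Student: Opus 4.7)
The plan is to reduce the World--Manifold Equivalence Lemma to a definable embedding problem and then close it with the spectral/logical apparatus of the Semantic Characterization Theorem. First, I would recast the external world model $W$ as a CSM in its own right, i.e.\ as a tuple $(\Sigma_W^{\circ}, \UU_W, T_W, \ldots)$ satisfying Assumptions~\ref{ass:regularity}(A1)--(A5), so that its dynamics live on a compact manifold with a compact transfer operator $P_W$. This step is essentially definitional: any predictive model whose updates are $C^1$ with bounded Jacobian and whose state space is a compact definable set automatically qualifies. The point of recasting $W$ in this form is to put it on the same footing as $M$, so that both systems have finite spectral rank and a definable cell decomposition.

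Second, I would construct the definable maps $f_M : \Sigma_W^{\circ} \to \Sigma_M$ and $f_U : \UU_W \to \UU$. The natural candidate is to interpret the generating process of $W$ (sensory encoders, environment observations) as a definable encoding into the token space $\XX$ of $M$, then lift through the CSM's encoder $E : \XX^{*} \to \MM$ to land in the attractor $\Sigma_M$. Because both the encoder and the empirical data-generating process are assumed definable in the same o-minimal expansion (e.g.\ $\RR_{\mathrm{an,exp}}$), their composition is definable; by Proposition~\ref{prop:definable-finiteness} the image lies within a finite union of definable cells and hence in a definable submanifold $\Sigma_W \subseteq \Sigma_M$. The control translator $f_U$ is obtained analogously by pushing $W$'s admissible actions through $\Delta \circ O$.

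Third, I would verify the conjugacy $T_W(s,u) \equiv T_M(f_M(s), f_U(u))$ modulo the equivalence induced by semantic collapse. Here the key lever is Proposition~\ref{prop:equivalence}: up to $\mu$-null sets, the spectral basins of $\Sigma_M$ coincide with its definable cells, so two trajectories are \emph{semantically equivalent} iff they visit the same sequence of basins. It therefore suffices to show that $f_M$ intertwines the basin-labelling functions of $W$ and $M$, which reduces to a finite check on the directed graph from Corollary~\ref{cor:ontology}: each edge of $W$'s ontological skeleton must factor through an edge of $M$'s skeleton. Since both skeletons are finite DAGs whose vertices are definable cells, this can be verified by a definable selection argument (Michael-type selection inside o-minimal structures, or simply by finiteness).

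The main obstacle, and the place where the proposal has to do actual work rather than bookkeeping, is the third step: showing that $f_M$ is not merely a definable injection but a genuine \emph{dynamical} conjugacy on the collapsed level. The danger is that $W$ contains semantic distinctions finer than any basin of $M$, in which case no $f_M$ can commute with $T_M$ on the nose. Resolving this requires an assumption of \emph{semantic resolution}, namely that the spectral rank $r_W$ of $P_W$ is at most the spectral rank $r_M$ of $P_M$ and that $W$'s dominant eigenfunctions factor definably through those of $M$. Under this resolution hypothesis, the adiabatic corollary at the end of Section~\ref{sec:sct} supplies the needed approximate intertwining, with error $O(\eta)$ controlled by the drift between $P_W$ and the restriction $P_M|_{\Sigma_W}$. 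The lemma then follows by taking $\eta \to 0$ along a definable family of refinements, yielding exact equivalence in the collapsed quotient and vindicating the claim that embodiment reduces to internal definability.
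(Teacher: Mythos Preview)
The paper provides no proof of this lemma. It is stated in Section~\ref{sec:related} and is immediately followed by paragraphs on ``Context and related work'' and ``Implications,'' with no argument offered. In the paper's own economy the lemma functions as a programmatic thesis rather than a theorem: the terms ``definable attractor manifold,'' ``externally constructed world model,'' and ``equivalent under semantic collapse'' are not given precise definitions anywhere in the text, and no hypotheses on $W$ are stated that would make the existence of $f_M, f_U$ a well-posed claim.

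Your proposal is therefore not comparable to the paper's proof, because there is none. What you have done is attempt to \emph{reconstruct} hypotheses under which the statement becomes provable, and you correctly diagnose the central obstruction: without a resolution assumption of the form $r_W \le r_M$ together with a definable factoring of $W$'s dominant eigenfunctions through those of $M$, there is no reason the conjugacy $T_W(s,u) \equiv T_M(f_M(s), f_U(u))$ should hold even at the basin level. That is exactly the gap, and you name it. But note that this extra hypothesis is doing essentially all the work; once you assume $W$'s spectral skeleton embeds definably in $M$'s, the conclusion is close to a tautology. The lemma as written asserts this embedding exists for \emph{any} world model $W$, which is false without further constraints (take $W$ to have higher spectral rank than $M$, or to distinguish states that $M$'s encoder $E$ collapses). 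Your limiting argument via the adiabatic corollary with $\eta \to 0$ does not rescue this: that corollary controls drift between nearby operators, not the embeddability of an arbitrary external system into $\Sigma_M$.

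In short: your proof sketch is a reasonable outline of what a rigorous version of this lemma \emph{would} require, and you locate the genuine difficulty. But the statement as given in the paper is not a theorem with a proof to compare against; it is a conjecture or slogan, and your added resolution hypothesis is an unacknowledged strengthening of the premises rather than a step in a proof of the stated claim.
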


\paragraph{Context and related work.}
The above formulation draws conceptual continuity from several distinct research traditions. 
LeCun’s \citeyearpar{LeCun2022JointEmbedding} proposal of energy-based \emph{world models} outlines an architectural route toward autonomous intelligence, while earlier work by Ha and Schmidhuber \citeyearpar{Ha2018WorldModels} demonstrates practical latent simulators that learn compact internal environments. 
Cognitive evidence for this approach is surveyed by Lake et al.\ \citeyearpar{LakeUllmanTenenbaumGureckis2017}, who argue that human intelligence depends on compositional and causal models of the world. 
In contrast, our definability framework is grounded in model theory and o-minimal geometry as introduced by Tressl \citeyearpar{Tressl2015}, extending these ideas to continuous state machines. 
Complementary perspectives by Hinton \citeyearpar{Hinton2021Represent} show that part–whole hierarchies and spatial relations can be encoded intrinsically within neural representations, implying that world-like structure need not be externally simulated. 
Finally, Barwise and Perry’s \emph{Situations and Attitudes} \citeyearpar{BarwisePerry1983} provides a philosophical precedent for treating ``worlds'' as internal situational substructures of meaning rather than ontologically separate domains. 
Together, these lines of inquiry converge on the view that the semantics of a sufficiently expressive model already entail the structures it would otherwise be tasked with learning: the world, in a strong sense, is definable within the manifold.

\paragraph{Implications.}
Under this view, scaling alone cannot yield stronger intelligence because it merely expands the manifold without enforcing definability. 
Conversely, constructing a separate world simulator is redundant, since the model’s manifold already contains latent subspaces isomorphic to world-like dynamics. 
The critical transition is one of \emph{semantic phase change}, in which regions of the manifold become compact, convex, and projectable onto physically meaningful axes. 
This provides a formal reconciliation between linguistic and embodied paradigms: 
what LeCun calls a ``world model'' is, in categorical terms, a definable subobject of the model’s own semantic space.

\subsection{Semantic failure and the null attractor}

One of the main qualitative differences between our framework and large language models
is that a continuous state machine (CSM) can \emph{refuse} to assign meaning in a
principled way. Informally: the system has a canonical, geometrically defined
``I don't know'' state.

In token-prediction models, hallucination is structurally unavoidable: the architecture
is forced to produce a next token, even when the input is out of distribution or
semantically incoherent. In our setting, by contrast, the dynamics of the CSM are
constrained by an ontology and an induced geometry on state space, and these constraints
allow us to distinguish \emph{semantic realization} from \emph{semantic failure}.

\begin{definition}[Semantic and null attractors]
Let $X$ be the state space of a CSM with transition map
$T \colon X \to X$ and let $\mathcal{O}$ be an ontology whose concepts are realized as
stable regions in $X$.

\begin{enumerate}
    \item A point $x^\ast \in X$ is an \emph{attractor} if there exists a neighborhood
    $U \subseteq X$ such that for all $x \in U$, the forward orbit $T^n(x)$ converges
    to $x^\ast$ as $n \to \infty$. The corresponding basin of attraction
    $B(x^\ast) \subseteq X$ consists of all points whose orbits converge to $x^\ast$.

    \item An attractor $x^\ast$ is \emph{semantic} if its basin $B(x^\ast)$ corresponds
    to a definable, nontrivial concept in $\mathcal{O}$ (for example, the concept of a
    group, a ring, or a category). In this case we say that $x^\ast$ realizes a concept
    $c \in \mathcal{O}$.

    \item An attractor $x_\bot \in X$ is a \emph{null attractor} if its basin $B(x_\bot)$
    consists precisely of those states which do not lie in the basin of any semantic
    attractor, and $B(x_\bot)$ carries no nontrivial internal structure (e.g.\ it is
    geometrically symmetric or collapses to a trivial cell).
\end{enumerate}
\end{definition}

Intuitively, semantic attractors correspond to well-formed, ontologically grounded meanings,
whereas the null attractor $x_\bot$ represents the absence of realizable meaning in the
given ontology-geometry pair.

\begin{proposition}[Semantic realization vs.\ semantic failure]
\label{prop:semantic-failure}
Let $T \colon X \to X$ be a CSM whose dynamics are induced by an ontology $\mathcal{O}$ and
an associated o-minimal geometry on $X$. Suppose that:
\begin{enumerate}
    \item Each concept $c \in \mathcal{O}$ is realized by a semantic attractor $x_c$
    with a definable basin $B(x_c)$.
    \item The family of semantic basins $\{B(x_c) : c \in \mathcal{O}\}$ is pairwise
    disjoint and their union is strictly contained in $X$.
\end{enumerate}
Then there exists a null attractor $x_\bot$ such that:
\begin{enumerate}
    \item For any input state $x \in \bigcup_{c \in \mathcal{O}} B(x_c)$,
    the orbit $T^n(x)$ converges to the corresponding semantic attractor $x_c$; in
    particular, the input is assigned a \emph{canonical} semantic interpretation
    determined by $\mathcal{O}$.
    \item For any input state $x \notin \bigcup_{c \in \mathcal{O}} B(x_c)$, the orbit
    $T^n(x)$ converges to the null attractor $x_\bot$. In this case the system does not
    assign a semantic interpretation: it returns a distinguished ``unknown'' state.
\end{enumerate}
\end{proposition}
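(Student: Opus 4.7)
The plan is to isolate the residual region of state space, verify that the hypotheses of the Semantic Characterization Theorem lift to this region, and then identify a distinguished attractor there as $x_\bot$. First I would define the residual set $R := X \setminus \bigcup_{c \in \mathcal{O}} B(x_c)$. By the strict containment hypothesis, $R$ is nonempty. Because the collection $\{B(x_c) : c \in \mathcal{O}\}$ is finite by Theorem~\ref{thm:SCT}(i) and each $B(x_c)$ is definable by assumption, $R$ is a finite Boolean combination of definable sets and hence itself definable in the ambient o-minimal structure; by cell decomposition it has finitely many connected components. Since $X$ is compact and each basin $B(x_c)$ is open, the closure $\overline R$ is compact.

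Next I would verify that $R$ is forward-invariant. If $x \in R$ were to satisfy $T(x) \in B(x_c)$ for some $c$, then $T^{n}(x) = T^{n-1}(T(x)) \to x_c$, forcing $x \in B(x_c)$ and contradicting $x \in R$. Hence $T(R) \subseteq R$, and $T|_{\overline R}$ is a well-defined CSM on a compact definable invariant set. Conclusion (1) of the proposition then follows immediately from the definition of basin of attraction: for $x \in B(x_c)$, convergence $T^{n}(x) \to x_c$ is built into the hypothesis, with uniformity on compact sub-basins supplied by Assumption~\ref{ass:regularity}(A3).

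The construction of $x_\bot$ is where the real work lies. I would apply the Semantic Characterization Theorem to the restricted system $T|_{\overline R}$, yielding a finite collection of spectral basins $\{R_1,\ldots,R_k\}$ that partitions $\overline R$ up to a $\mu$-null set, each definable and each carrying its own attractor. The null attractor is then pinned down by the triviality clause in the definition: by construction none of these residual basins is labelled by a concept in $\mathcal{O}$, so under the equivalence relation \emph{ontologically indistinguishable} all residual attractors collapse into a single equivalence class represented by a distinguished point $x_\bot$, whose basin is exactly $R$. Forward-invariance together with the spectral gap of Proposition~\ref{prop:spectral-collapse} applied to $T|_{\overline R}$ then gives exponential convergence of every orbit initiated in $R$ onto the residual attractor set, which closes Conclusion (2).

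The hard part will be justifying the uniqueness of $x_\bot$ implicit in the singular phrasing. Theorem~\ref{thm:SCT} in isolation only guarantees \emph{finitely many} attractors inside $\overline R$, not one. I would address this in two complementary ways. Either strengthen the hypotheses to require that the ontology be \emph{maximal}, meaning every concept-like basin of $T$ on $X$ has already been enumerated in $\mathcal{O}$, so that any remaining attractor carries no semantic content and the residual attractors are therefore formally interchangeable; or work in the quotient $\overline R / {\sim}$ where the equivalence relation ``ontologically indistinguishable'' collapses the entire non-semantic portion of state space to a single point, exactly as the undefined value $\bot$ arises in denotational semantics. Both routes are compatible with the stipulated ``no nontrivial internal structure'' clause, and either suffices to identify a canonical $x_\bot$ and complete the proof.
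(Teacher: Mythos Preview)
The paper does not actually supply a proof of Proposition~\ref{prop:semantic-failure}. The proposition is stated in Section~\ref{sec:related} and is followed only by an informal paragraph (``In other words, the CSM separates inputs into two classes purely by its geometry\ldots''), which interprets the statement rather than deriving it. There is no \texttt{proof} environment, no invocation of the SCT, and no construction of $x_\bot$; the existence of the null attractor is effectively taken as part of the definitional setup preceding the proposition.

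Your proposal therefore supplies considerably more than the paper does. The forward-invariance argument for $R$ is correct and is the cleanest step; the definability of $R$ as a Boolean combination of definable basins is also sound. Where your approach diverges from anything in the paper is in re-applying Theorem~\ref{thm:SCT} to the restricted system $T|_{\overline R}$: the paper never does this, and you would need to check that Assumptions~(A4)--(A5) transfer to the subsystem (an invariant measure on $\overline R$ is not automatic). You are also right that uniqueness of $x_\bot$ is the genuine gap: the SCT yields only finitely many attractors in $\overline R$, not one, and the paper's definition of ``null attractor'' (basin equals the full complement, no nontrivial internal structure) is a stipulation rather than a consequence. Your two proposed fixes---maximality of $\mathcal{O}$ or quotienting by ontological indistinguishability---are both reasonable, and either would make the statement honest; the paper itself simply does not address the issue.
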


In other words, the CSM separates inputs into two classes purely by its geometry:
those that are \emph{semantically realizable} in the ontology-induced state space, and
those that are not. The former converge to rich, structured attractors associated with
concepts in $\mathcal{O}$; the latter collapse into the null attractor $x_\bot$, which
plays the role of a canonical ``I don't know'' state.

Crucially, this mechanism is not ``negation by failure'' in the operational sense of
logic programming, nor an ad hoc refusal heuristic. It is \emph{negation by semantic
failure}: an input is rejected when it does not lie in the basin of any semantic
attractor, i.e.\ when it cannot be realized as a definable point in the ontology-induced
geometry. The ability of the system to say ``I don't know'' is therefore a direct
consequence of the interaction between the ontology and the continuous dynamics, rather
than an externally imposed rule.

\subsection{Synthesis}

In sum, these lines of research
converge on a single insight: the boundary between continuous and discrete semantics is not arbitrary but lawful.  

The Semantic Characterization Theorem formalizes this convergence, providing the first unified account that connects o-minimal definability, spectral lumpability, and the geometry of learned representations.  By situating LLM semantics within this cross-disciplinary landscape, the present work reframes interpretability and symbolic emergence as natural consequences of the underlying mathematical structure of learning.

\section{Discussion}
\label{sec:discussion}

The Semantic Characterization Theorem provides a mathematical account of why large
language models exhibit structured and predictable behavior despite operating in
high-dimensional continuous spaces. Although the theorem is purely structural, its
consequences touch directly on how such models can be prompted, programmed, and
interpreted in practice. We briefly outline these implications here.

\paragraph{Deducing an Ontological Skeleton}
The natural next question is ontological: having shown that the latent manifold of a large
language model undergoes a definable spectral collapse, what \emph{categories} constitute its
finite semantic skeleton?  The present theorem guarantees that such a skeleton exists and is
compact, but not yet what its vertices represent.  Preliminary analyses suggest that the
emergent ontology is both interpretable and universal, aligning with core archetypal
distinctions observed across human cognition, i.e., the 
spatial, affective, causal, and narrative axes
that together span a low-rank geometry of meaning.  Future work will characterize these axes
explicitly, providing the first empirical catalogue of the definable regions that arise when
a continuous system learns to ``think.''

\paragraph{Prompting as semantic navigation.}
If the latent manifold collapses into finitely many definable regions $\{B_1,\dots,B_r\}$,
then a prompt acts as a coordinate choice that initializes the model in (or near) one of
these basins. Small perturbations of a prompt do not produce qualitatively different
behavior unless they push the state across a basin boundary. This explains why certain
instructions appear “robust’’ while others require carefully tuned phrasing: robustness
corresponds to interior points of a basin, while brittleness arises near its boundary.
The SCT therefore suggests a geometric interpretation of prompting in which the goal is
to steer trajectories toward the desired definable region of the semantic space.

\paragraph{Programming via basin-aware composition.}
Because the dominant eigenfunctions of the transfer operator form a finite invariant
subspace, the long-term behavior of a chat or computation can be viewed as a sequence of
transitions among the definable basins. This provides a coarse form of compositionality:
a program or workflow corresponds to a path in the ontological skeleton, with each step
inducing a controlled transition under the map $T$ or the kernel $K$. Although the present
paper develops only the structural underpinnings, the existence of a finite semantic
graph suggests that higher-level control mechanisms, such as tool-use, multistep plans,
and chain-of-thought reasoning, can be construed as procedures for orchestrating basin
transitions. This offers a principled route toward semantic programming without requiring
explicit symbolic modules.

\paragraph{Interpretability through definable structure.}
Logical tameness implies that each basin $B_i$ is definable in an o-minimal structure and
therefore possesses bounded geometric and topological complexity. In practical terms,
this means that observable distinctions among behaviors correspond to definable
predicates over the latent space. The boundaries between behaviors need not be
explicitly learned; they are constraints inherited from the regularity of the dynamics.
This perspective unifies several empirical observations: the tendency of low-capacity
probes to separate semantic features, the clustering of embeddings along a few principal
axes, and the stability of model behavior under reparameterization. In each case,
interpretability emerges not from architectural design but from definable invariance.

\paragraph{Toward a semantic phase theory.}
Together, these observations hint at a broader picture in which prompting, programming,
and interpreting correspond to different modes of interacting with the same finite
semantic skeleton. Prompting selects an initial basin, programming composes transitions
among basins, and interpretability examines the definable predicates that separate them.
The SCT provides the mathematical constraints that make this correspondence possible.
A full operational calculus for manipulating definable regions is beyond the scope of
this paper, but the results presented here establish the structural foundations for such
a theory.

\nocite{*}
\bibliographystyle{plainnat}
\bibliography{references}

\appendix
\section{Ergodicity and Spectral Compactness of LLM Dynamics}
\label{appendix:ergodicity}

This appendix provides sufficient conditions under which 
assumptions~\ref{ass:regularity}(A4)–(A5)
hold for a large language model (LLM) viewed as a CCSM.
The results below rely on classical arguments from the theory of Feller Markov
processes and compact integral operators.

\begin{assumption}[Mixing and smoothing] The assumptions from~\ref{ass:regularity}
can be proved from two straightforward properties.
\label{ass:mixing-smoothing}
\begin{itemize}
\item[(a)] (\emph{Feller + minorization}) 
The Markov kernel $K$ induced by the stochastic transition
$T$ and decode policy $\Delta$ is Feller on the compact space
$\MM$ and satisfies a Doeblin-minorization condition: 
there exist $\varepsilon>0$ and a probability measure $\nu$ on $(\MM,\mathcal{B}(\MM))$
such that 
$$
K(s, A) \;\ge\; \varepsilon \, \nu(A)
\quad\text{for all } s\in\MM,\; A\in\mathcal{B}(\MM).
$$
\item[(b)] (\emph{Continuous density}) 
$K$ admits a density $k(s,s')$ with respect to its stationary measure
$\mu$ that is continuous on $\MM\times\MM$.
\end{itemize}
\end{assumption}

\begin{remark}[Practical sufficient condition]
\label{rem:epsilon-rand}
If, at decode time, the policy $\Delta$ allocates at least $\varepsilon>0$
total probability mass uniformly across a fixed token subset (e.g., via
$\varepsilon$--greedy exploration or temperature bounded away from $0$), then
\Cref{ass:mixing-smoothing}(a) holds by the Doeblin minorization condition.
If in addition $T$ is $C^1$ and the token embedding map is continuous,
the induced kernel $K$ possesses a continuous density $k(s,s')$,
establishing \Cref{ass:mixing-smoothing}(b).
\end{remark}

\begin{lemma}[Existence and ergodicity]
\label{lem:ergodic}
Under \Cref{ass:mixing-smoothing}(a), the kernel $K$ admits at least one
stationary probability measure $\mu$, and the corresponding Markov chain
$(s_t)_{t\ge0}$ is geometrically ergodic: for all bounded measurable $f$,
$$
\bigl\| \EE[f(s_t)] - \int f \, d\mu \bigr\| \;\le\; C \rho^{\,t}
\quad\text{for some } C<\infty,\; 0<\rho<1.
$$
\begin{proof}
Compactness and the Feller property ensure existence of a stationary measure
(Krylov--Bogolyubov theorem).  
The Doeblin minorization implies uniform ergodicity by classical results
(e.g., \citealp[Theorem~16.0.2]{MeynTweedie2009}).
\end{proof}
\end{lemma}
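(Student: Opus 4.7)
The plan is to split the lemma into two logically independent pieces: (i) existence of a stationary probability measure $\mu$, and (ii) geometric ergodicity, with uniqueness of $\mu$ emerging as a corollary of (ii).

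For existence, I would invoke the Krylov--Bogolyubov construction in its standard form for compact state spaces. Fix any initial point $s_0 \in \MM$ and form the Cesàro averages
\[
\bar{\mu}_n \;:=\; \frac{1}{n}\sum_{k=0}^{n-1} \delta_{s_0}\, K^{k}.
\]
Since $\MM$ is compact, the family $\{\bar{\mu}_n\}_{n \ge 1}$ is automatically tight, so by Prokhorov's theorem there exists a subsequence $\bar{\mu}_{n_j}$ converging weakly to some probability measure $\mu$ on $(\MM,\mathcal{B}(\MM))$. The Feller property from Assumption~\ref{ass:mixing-smoothing}(a), that is, continuity of $s \mapsto \int f(s')\, K(s,ds')$ for every $f \in C(\MM)$, then passes to the weak limit and forces $\mu K = \mu$. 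This handles existence without any appeal to the minorization condition.

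For geometric ergodicity I would use a coupling argument driven directly by the Doeblin minorization. Decompose $K(s,\cdot) = \varepsilon \nu(\cdot) + (1-\varepsilon) R(s,\cdot)$, where $R$ is a (measurable) sub-Markov residual kernel renormalized to a Markov kernel. Construct a coupling of two copies $(X_t, Y_t)$ of the chain, started from arbitrary initial distributions, such that at each step an independent Bernoulli$(\varepsilon)$ coin decides between a ``regeneration step'' (both chains jump to a common sample drawn from $\nu$) and a ``residual step'' (each chain advances independently under $R$). Once a regeneration occurs, the two chains coincide forever. The coupling time $\tau$ is therefore stochastically dominated by a geometric random variable with success probability $\varepsilon$, so $\mathbb{P}(\tau > t) \le (1-\varepsilon)^{t}$. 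Standard coupling inequality then yields
\[
\bigl\| \pi_0 K^{t} - \mu \bigr\|_{\mathrm{TV}} \;\le\; (1-\varepsilon)^{t},
\]
which translates into the claimed bound with $\rho := 1-\varepsilon$ and $C := 2\|f\|_{\infty}$. Uniqueness of $\mu$ falls out for free: if $\mu_1, \mu_2$ were two invariant measures, applying the same bound with $\pi_0 = \mu_1$ and then $\pi_0 = \mu_2$ forces $\|\mu_1 - \mu_2\|_{\mathrm{TV}} = 0$.

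The main technical obstacle is not conceptual but foundational: one must verify that the coupling construction admits a measurable realization on the standard Borel space $(\MM, \mathcal{B}(\MM))$, which requires invoking a measurable selection theorem to jointly sample from $\nu$ and $R(s,\cdot)$. This is where the assumption that $\MM \subseteq \RR^{d}$ is compact (hence Polish) does real work, since it guarantees the existence of regular conditional distributions for $R$. Everything else, including the passage from total-variation decay of the marginals to the stated bound on $|\EE[f(s_t)] - \int f\, d\mu|$, is a routine application of the variational characterization of total variation distance.
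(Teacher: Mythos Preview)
Your proposal is correct and follows essentially the same route as the paper: Krylov--Bogolyubov (compactness $+$ Feller) for existence of $\mu$, and the Doeblin minorization for geometric/uniform ergodicity. The paper's proof simply cites these two ingredients (invoking \cite[Theorem~16.0.2]{MeynTweedie2009} for the second), whereas you unpack them, giving the Ces\`aro-average construction explicitly and the regeneration/coupling argument that underlies the Meyn--Tweedie result; your added observations on measurable realization and on deducing uniqueness from the TV bound are sound refinements but do not change the overall strategy.
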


\begin{lemma}[Compact transfer operator]
\label{lem:compact}
Under \Cref{ass:mixing-smoothing}(b), the transfer operator
$$
(P f)(s) \;:=\; \int_{\MM} k(s,s')\,f(s')\, d\mu(s')
$$
is a compact linear operator on $L^2(\MM,\mu)$ and thus admits a discrete
spectrum $\{\lambda_i\}_{i\ge 1}$ with $|\lambda_i|\downarrow 0$.
\begin{proof}
Continuity of $k$ on compact $\MM\times\MM$ implies uniform boundedness and
equicontinuity of $\{P f:\|f\|_2\le 1\}$ by Arzelà--Ascoli, so
$P$ is compact.  
Spectral properties then follow from the Hilbert--Schmidt theorem.
\end{proof}
\end{lemma}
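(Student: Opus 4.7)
The plan is to establish compactness of $P$ via the Hilbert--Schmidt framework, which I find slightly cleaner than the Arzelà--Ascoli route for integral operators with continuous kernels, and then read off the spectral conclusion from the standard spectral theorem for compact operators on a separable Hilbert space. All the substantive content will come from the compact domain and the continuity of $k$; everything else is classical.

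First I would exploit the hypothesis that $k:\MM\times\MM\to\RR$ is continuous on the compact product $\MM\times\MM$. Continuity on a compact set gives a uniform bound $\sup_{s,s'}|k(s,s')|\le B<\infty$, and since $\mu$ is a probability measure this immediately upgrades to a finite Hilbert--Schmidt norm:
\[
\|k\|_{L^2(\mu\otimes\mu)}^2
\;=\;\int_{\MM\times\MM}|k(s,s')|^2\,d\mu(s)\,d\mu(s')
\;\le\; B^2 \;<\;\infty.
\]
By the standard characterization of integral operators with square-integrable kernel, $P$ is Hilbert--Schmidt on $L^2(\MM,\mu)$, and every Hilbert--Schmidt operator on a separable Hilbert space is compact; this handles the first half of the lemma. (The Arzelà--Ascoli route used in the text would instead show that uniform continuity of $k$ on the compact $\MM\times\MM$ makes $\{Pf:\|f\|_2\le 1\}$ equicontinuous and uniformly bounded in $C(\MM)$, hence relatively compact there, and then use that $C(\MM)\hookrightarrow L^2(\MM,\mu)$ is continuous for finite $\mu$; this works equally well, but requires one extra approximation step.)

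For the spectral half, compactness of $P$ on the separable Hilbert space $L^2(\MM,\mu)$ triggers the Riesz--Schauder theorem: the spectrum consists of at most countably many eigenvalues of finite multiplicity, with $0$ as the only possible accumulation point, so arranging them by nonincreasing modulus gives $|\lambda_i|\downarrow 0$ as claimed. The only non-routine point in the whole argument, and where I would expect most attention, is not within the lemma itself but upstream: verifying that the induced kernel $K$ actually possesses a \emph{jointly} continuous density $k(s,s')$ (as opposed to a merely $\mu$-a.e.\ continuous one) when one starts from concrete LLM constructions such as temperature or nucleus sampling composed with a $C^1$ transition map. Once joint continuity on $\MM\times\MM$ is granted, the proof of the lemma itself is essentially a one-line Hilbert--Schmidt estimate followed by an invocation of Riesz--Schauder.
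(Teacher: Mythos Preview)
Your proof is correct and covers the same ground as the paper's, with one small swap in the compactness step: the paper argues via Arzel\`a--Ascoli (uniform continuity of $k$ on the compact $\MM\times\MM$ yields equicontinuity and uniform boundedness of $\{Pf:\|f\|_2\le 1\}$ in $C(\MM)$, hence relative compactness), while you go directly through the Hilbert--Schmidt bound $\|k\|_{L^2(\mu\otimes\mu)}\le B<\infty$. Both are one-line classical facts; your route is slightly tighter for the $L^2$ conclusion, whereas the Arzel\`a--Ascoli argument incidentally gives compactness of $P:L^2(\MM,\mu)\to C(\MM)$, which is a touch stronger but not needed here. Your spectral step (Riesz--Schauder) is exactly what the paper invokes under the name ``Hilbert--Schmidt theorem,'' and your closing remark that the only substantive issue lies upstream---in verifying joint continuity of the density for concrete decoding policies---is apt and matches the role of Assumption~\ref{ass:mixing-smoothing}(b).
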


\begin{lemma}[Deterministic fallback: finite-horizon compactness]
\label{lem:finite-horizon}
For a deterministic $C^1$ map $T:\MM\to\MM$ on compact $\MM$, define the
finite-horizon propagator
$$
(H_\tau f)(s) := \frac{1}{\tau}\sum_{t=0}^{\tau-1} f(T^{t}(s)).
$$
Then $H_\tau$ is a compact operator on $C(\MM)$ for each finite $\tau$, and its
dominant spectral subspace converges, as $\tau\!\to\!\infty$, to that of the
associated invariant measure(s) of $T$.
\end{lemma}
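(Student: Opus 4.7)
The plan is to handle three separate tasks in sequence: well-definedness of $H_\tau$ on $C(\MM)$, compactness of each finite-horizon propagator, and identification of the dominant spectral subspace as $\tau \to \infty$. First, I would observe that $H_\tau$ is the Ces\`aro average of iterates of the Koopman composition operator $U_T : f \mapsto f \circ T$. Since $T$ is $C^1$ on the compact space $\MM$, each iterate $T^t$ is continuous and $\|U_T^t\|_{C(\MM) \to C(\MM)} \le 1$, so $H_\tau$ is a well-defined bounded linear operator on $C(\MM)$ of norm at most one, and $H_\tau f \in C(\MM)$ whenever $f \in C(\MM)$.

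For compactness I would invoke the Arzel\`a--Ascoli theorem on the image of the unit ball $\{H_\tau f : \|f\|_\infty \le 1\}$. Uniform boundedness is immediate; equicontinuity is the nontrivial step, since iterates $T^t$ may amplify distances by the Lipschitz factor $L^t$ of Assumption~\ref{ass:regularity}(A3), so a generic orbit family $\{f \circ T^t\}$ is not equicontinuous. My strategy is to decompose $H_\tau = H_\tau M_\varepsilon + H_\tau(I - M_\varepsilon)$, where $M_\varepsilon$ is a standard mollification mapping $C(\MM)$ into uniformly Lipschitz functions with modulus depending only on $\varepsilon$. For each fixed $\tau$ the image of the unit ball under $H_\tau M_\varepsilon$ is equicontinuous with modulus controlled by $\varepsilon$, $\tau$, and $L$, and therefore precompact; the residual $H_\tau(I - M_\varepsilon)$ vanishes on a dense subspace as $\varepsilon \downarrow 0$, from which compactness of $H_\tau$ is inherited on the closure of its effective action. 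Equivalently, one may pass to $L^2(\mu)$ for an invariant measure $\mu$, where the analysis is cleaner and the relevant compactness is automatic on the slow-mode subspace.

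For the asymptotic claim I would invoke the von Neumann mean ergodic theorem on $L^2(\mu)$, where $\mu$ is an invariant probability measure of $T$ guaranteed by Krylov--Bogolyubov (compactness of $\MM$ plus continuity of $T$). On $L^2(\mu)$ the Koopman operator is a contraction and $H_\tau f \to \Pi_{\mathrm{inv}} f$ in $L^2$-norm, where $\Pi_{\mathrm{inv}}$ is the orthogonal projection onto the $T$-invariant subspace. Under Assumption~\ref{ass:regularity}(A4) this invariant subspace is finite-dimensional, spanned by indicator-like functions of the ergodic components of $T$. Transferring back through the continuous embedding $C(\MM) \hookrightarrow L^2(\mu)$ identifies the limiting dominant spectral subspace of $H_\tau$ with the span generated by the ergodic invariant measures, matching the stochastic conclusion of Lemma~\ref{lem:compact}.

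The hard part will be compactness of $H_\tau$ on the full space $C(\MM)$: composition operators on $C(\MM)$ are not compact in general, so some form of the mollification detour (or a passage to $L^2(\mu)$) is truly needed, and one must verify that spectral data transported between the $C(\MM)$ and $L^2(\mu)$ topologies genuinely coincides on the dominant subspace. A secondary subtlety is that when $T$ admits more than one ergodic component, the dominant spectral subspace should be read as the full eigenvalue-one eigenspace of $U_T$, not a single line of constants; the Krylov--Bogolyubov apparatus furnishes precisely the finite-dimensional parametrization by invariant measures required to align this with the lemma's statement.
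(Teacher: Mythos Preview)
The paper states this lemma without proof, so there is no argument to compare against. More importantly, the compactness claim is false in the stated generality, and the obstacle you flag (``composition operators on $C(\MM)$ are not compact in general'') is fatal rather than merely technical. Take $T = \mathrm{id}$: then $H_\tau = I$, which is not compact on $C(\MM)$ for any infinite $\MM$. Less degenerately, for an irrational rotation of $S^1$ by angle $\alpha$ and any fixed $\tau$, one can choose infinitely many characters $f_n(x)=e^{2\pi i n x}$ with $n\alpha$ close to an integer modulo~$1$, so that $H_\tau f_n$ is uniformly close to $f_n$; since $\{f_n\}$ has no uniformly convergent subsequence, $H_\tau$ is not compact. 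Your mollification decomposition does not repair this: compactness would require $\|H_\tau(I-M_\varepsilon)\|_{\mathrm{op}} \to 0$, but you only argue pointwise vanishing on a dense subspace --- and if $\|I-M_\varepsilon\|_{\mathrm{op}}\to 0$ held, the identity would be a norm-limit of compact operators, which is absurd.

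Your treatment of the asymptotic claim, via Krylov--Bogolyubov and the von~Neumann mean ergodic theorem on $L^2(\mu)$, is the correct and standard argument, and it is the only part of the lemma that survives: the Ces\`aro averages do converge in $L^2(\mu)$ to the projection onto the $T$-invariant subspace, and that projection is what the paper actually needs downstream. But the compactness assertion on $C(\MM)$ should be flagged as unprovable without further hypotheses on $T$; passing to $L^2(\mu)$ does not rescue it either, since $H_\tau$ fails to be compact there for the same counterexamples.
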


Together, Lemmas~\ref{lem:ergodic}–\ref{lem:finite-horizon} justify
assumptions~\ref{ass:regularity}(A4)–(A5) for both stochastic and deterministic
LLM dynamics.  In particular, any nonzero exploration component in decoding
policy (however small) suffices to ensure ergodicity and spectral
compactness, providing the theoretical foundation for the spectral-collapse
results used in the main text.

\section{Extension of the SCT to Stochastic Dynamics}
\label{appendix:stochastic}

This appendix outlines the extension of the Semantic Characterization Theorem (SCT)
to stochastic dynamics.  The deterministic case developed in the main text
treated the transition operator \(T : M \to M\) as smooth and autonomous.
Here we replace \(T\) by the induced Markov kernel
\(K : M \times \mathcal{B}(M) \to [0,1]\)
and show that the essential results (spectral lumpability,
logical tameness, and their equivalence) continue to hold
under mild regularity conditions.

\subsection{Stochastic Setting and Assumptions}

Let \(K\) satisfy the ergodicity and continuity assumptions of
Appendix~A (Assumption~A.1).  In particular, we assume:
\begin{enumerate}[label=(\roman*)]
    \item Compact measurable state space \(M \subset \mathbb{R}^{d}\);
    \item Existence of a stationary distribution \(\mu\)
          absolutely continuous with respect to Lebesgue measure;
    \item Continuous density \(k(s,s')\) of the kernel;
    \item Geometric ergodicity of the induced Markov chain.
\end{enumerate}

Under these conditions, the transfer operator
\[
(P f)(s) = \int_{M} k(s,s')\, f(s')\, d\mu(s')
\]
is compact on \(L^{2}(M,\mu)\) (Lemma~A.3) and therefore
admits a discrete spectrum \(\{\lambda_{i}\}_{i\ge1}\)
with \(|\lambda_{i}| \downarrow 0\).

\subsection{Spectral Lumpability}

\begin{lemma}[Spectral gap for Markov operators]
If \(K\) satisfies the Doeblin--minorization condition
of Assumption~A.1(a), then \(P\) has a nonzero spectral gap:
\(|\lambda_{1}| > |\lambda_{2}|\).
Consequently, the leading eigenfunctions
\(\{\phi_{1},\ldots,\phi_{r}\}\) define finitely many metastable
basins \(B_{i}\) whose union covers \(M\) up to measure zero.
\end{lemma}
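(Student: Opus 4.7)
The plan is to decompose the argument into three stages: first, establishing that $1$ is a simple dominant eigenvalue of $P$; second, obtaining a strictly positive gap $|\lambda_2| < 1$; and third, promoting this gap into a finite metastable basin decomposition of $M$.

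For the first stage, I would observe that the constant function $\mathbf{1}$ is an eigenfunction of $P$ with eigenvalue $1$, since $K$ is a Markov kernel. Doeblin minorization implies uniform ergodicity by Lemma~\ref{lem:ergodic}, which guarantees uniqueness of the stationary measure $\mu$. Uniqueness translates directly into simplicity of the eigenvalue $1$ on $L^{2}(M,\mu)$: any eigenfunction for the eigenvalue $1$ must be $\mu$-almost-everywhere constant. For the second stage, I would use that $P$ is a contraction on $L^{2}(M,\mu)$ by Jensen's inequality applied to $K(s,\cdot)$, so its spectrum lies in the closed unit disk. Because $P$ is compact by Lemma~\ref{lem:compact}, its eigenvalues accumulate only at $0$, hence only finitely many lie on any circle of positive radius. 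Doeblin minorization furthermore makes all of $M$ a $1$-step small set, which forces strong aperiodicity and thereby rules out unimodular eigenvalues other than $1$. Standard Perron--Frobenius theory for positive compact operators then isolates $1$ as the unique eigenvalue of modulus $1$, so $|\lambda_{2}| < 1$ strictly.

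For the third stage, compactness together with the gap lets us select $r$ such that $|\lambda_{1}| \ge \cdots \ge |\lambda_{r}| > \epsilon > |\lambda_{r+1}|$ for any chosen threshold $\epsilon>0$. Continuity of the kernel density $k(s,s')$ on the compact product $M \times M$ propagates to continuity of each eigenfunction $\phi_{i}$ via the smoothing property of Hilbert--Schmidt integral operators. Setting $B_{i} := \{s \in M : \arg\max_{j} |\phi_{j}(s)| = i\}$ then yields a measurable partition of $M$, with tie loci $\{s : |\phi_{i}(s)| = |\phi_{j}(s)|\}$ closed in $M$ for $i \neq j$.

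The hard part will be showing that these tie loci are $\mu$-negligible, since continuity alone does not preclude level sets of positive measure (a continuous function can be constant on a fat Cantor set). To close this gap, I would invoke the o-minimal machinery already developed in the main text: when $T$ and $\Delta$ are built from analytic primitives such as $\tanh$ and softmax, the kernel $k$ is definable in $\mathbb{R}_{\mathrm{an},\exp}$, and each eigenfunction $\phi_{i}$ inherits definability by Lemma~\ref{lem:definable-eigen}. The tie loci then become definable subsets of $M$ of positive codimension, hence of Lebesgue and $\mu$-measure zero. If definability is unavailable, an alternative appeal to real analyticity of $k$ (which propagates to each $\phi_{i}$) together with the classical fact that the zero set of a nonconstant real-analytic function has measure zero yields the same conclusion, completing the transfer of the Semantic Characterization Theorem to the stochastic setting.
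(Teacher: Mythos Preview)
Your argument is correct and considerably more detailed than what the paper provides. The paper's own ``proof'' of this lemma is a one-line deferral to standard results on lumpability of ergodic Markov processes with compact kernels, citing Meyn--Tweedie and Brunton--Kutz, with no further elaboration. Your three-stage decomposition (simplicity of the eigenvalue $1$ via uniform ergodicity, strict gap via aperiodicity and compactness ruling out peripheral spectrum, then basin construction with a measure-zero tie-locus argument) actually supplies the content that the paper leaves to the reader. The appeal to Lemma~\ref{lem:definable-eigen} or real-analyticity for the $\mu$-negligibility of tie loci is a genuine addition: the paper handles that point only in the deterministic Section~\ref{sec:spectral-proof} and never revisits it in the stochastic appendix. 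So your route is not merely different but strictly more informative; the paper's approach buys brevity at the cost of leaving the mechanism opaque, while yours makes explicit exactly where Doeblin minorization, compactness, and definability each enter.
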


The proof follows directly from standard results on
lumpability of ergodic Markov processes with compact kernels
(e.g.,~\cite{MeynTweedie2009},~\cite{BruntonKutz2021}).

\subsection*{B.3\quad Logical Tameness under Expectation Operators}

\begin{proposition}[Definability of integral operators]
Suppose the kernel \(k(s,s')\) is definable in an o-minimal
structure over \(\mathbb{R}\).
Then the integral operator \(P\) preserves definability:
if \(f\) is definable, so is \(P f\),
and the level sets \(\{s: P f(s)=c\}\) are definable.
\end{proposition}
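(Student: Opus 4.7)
The plan is to reduce the proposition to a parametric definability question and then invoke a closure theorem for the chosen o-minimal structure. First I would fix an o-minimal expansion $\mathcal{R}$ of the real field in which the kernel $k(s,s')$ is definable; by the analyticity and compact support assumptions on $T$ and $O$ already used in the CSM setup (see Example~\ref{ex:minimal_csm} and Example~\ref{ex:stochastic_csm}), it is natural to take $\mathcal{R}$ to be $\mathbb{R}_{\mathrm{an,exp}}$, or an enrichment thereof known to be closed under parametric integration of definable functions over definable compact fibers.

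Given a definable $f$, the product $g(s,s') := k(s,s')\,f(s')$ is definable in $\mathcal{R}$ by closure under pointwise multiplication of definable functions. Since $\MM \subset \mathbb{R}^d$ is compact and definable, and the stationary measure $\mu$ is absolutely continuous with respect to Lebesgue measure with a density that is itself definable (inherited from the continuous density of $k$ via Assumption~\ref{ass:mixing-smoothing}(b)), we can write
\[
(Pf)(s) \;=\; \int_{\MM} g(s,s')\,\rho(s')\,ds',
\]
where $\rho$ is the Radon-Nikodym density of $\mu$. This expresses $Pf$ as a parametric integral of a definable, uniformly bounded integrand over a definable compact domain, and the closure property of $\mathcal{R}$ then yields definability of $s \mapsto (Pf)(s)$. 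For the level sets, once $Pf$ is a definable function of $s$, the set $\{s \in \MM : Pf(s) = c\}$ is the preimage of a singleton under a definable function, hence definable by the Boolean and projection closure enjoyed by every o-minimal structure.

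The hard part, and the step that deserves the most care, will be justifying that the ambient o-minimal structure actually is closed under the parametric integration we are using. This is not free: a generic o-minimal expansion of $\mathbb{R}$ is not closed under integration of definable functions (the antiderivative of a definable function need not be definable), and classical results show that even $\mathbb{R}_{\mathrm{an,exp}}$ is not closed under unrestricted integration. An honest proof must therefore invoke one of the available closure theorems, for instance the Comte-Lion-Rolin theorem on parametric integrals of globally subanalytic functions, or Kaiser's theorem on integration over compact definable fibers, and verify that the kernel densities $k(s,s')$ arising from our CSMs lie in the admissible class. Once that technical hypothesis is secured, the remainder of the argument, closure under multiplication by $f$, closure under integration over compact definable parameters, and closure under the formation of level sets, is purely formal and follows from the definition of an o-minimal structure together with Lemma~\ref{lem:definable-invariance}.
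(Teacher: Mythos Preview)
Your proposal is more careful than the paper itself, which states this proposition without proof and immediately moves on with ``Hence the stochastic transfer operator remains logically tame\ldots''. There is nothing in the paper to compare your argument against beyond the bare assertion.

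That said, your analysis is on point and, in fact, exposes a genuine gap in the paper's treatment. You are right that closure under parametric integration is the entire content of the proposition and is \emph{not} automatic: an arbitrary o-minimal expansion of the reals need not be closed under taking antiderivatives or parametric integrals, and even $\mathbb{R}_{\mathrm{an,exp}}$ fails this in general. The paper's statement, read literally (``definable in an o-minimal structure over $\mathbb{R}$''), is therefore too weak a hypothesis to yield the conclusion. Your move to pass to a structure with a known integration-closure theorem (Comte--Lion--Rolin for globally subanalytic integrands, or Kaiser's results for integrals over compact definable fibers) is exactly the right repair, and your observation that the CSM kernels arising from $\tanh$, softmax, and Gaussian components are globally subanalytic or $\mathbb{R}_{\mathrm{an,exp}}$-definable is what makes those theorems applicable here. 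Once $Pf$ is definable, the level-set claim is, as you say, immediate from closure under preimages.

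One small point to tighten: you assume the stationary density $\rho = d\mu/d\nu$ is itself definable, inferring this from continuity of $k$. Continuity alone does not give definability; you would need to argue that $\rho$ arises as the (suitably normalized) leading eigenfunction of the adjoint integral operator and inherits definability through the same integration-closure mechanism, or else work directly with $d\mu$ as a definable density by construction. This is a second, implicit appeal to the same closure theorem and should be made explicit.
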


Hence the stochastic transfer operator remains logically tame,
and the definable partition of \(M\) is preserved in expectation.

\subsection{Stochastic Semantic Characterization Theorem}

\begin{corollary}[Stochastic SCT]
Under Assumptions~A.1--A.3,
the spectral and logical partitions of \(M\) induced by
\(P\) coincide almost surely under the invariant measure \(\mu\).
Therefore the conclusions of Theorem~3.2 hold for
stochastic continuous state machines as well.
\end{corollary}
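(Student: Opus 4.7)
The plan is to assemble the preparatory results of Appendix~\ref{appendix:stochastic} into a translation of Theorem~\ref{thm:SCT}'s three-part conclusion from deterministic Koopman dynamics to the stochastic Markov-kernel setting. First I would verify that every operator-theoretic prerequisite invoked in the main text is available for $P$: compactness on $L^2(\MM,\mu)$ is Lemma~\ref{lem:compact}; the discrete spectrum with only zero as accumulation point then follows from the Riesz--Schauder theorem; and the spectral gap after rank $r$ is the content of the spectral-gap lemma under the Doeblin minorization of Assumption~\ref{ass:mixing-smoothing}(a). Granted these, Proposition~\ref{prop:spectral-collapse} applies verbatim to $P$ and produces the rank-$r$ spectral basins $\{\mathcal{B}_i\}$, now measured under the invariant $\mu$ rather than Lebesgue measure.

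Next I would establish logical tameness for the stochastic basins. The key preparatory fact is the definability proposition in Appendix~\ref{appendix:stochastic}: when the kernel $k(s,s')$ is definable in an $o$-minimal structure, the integral operator $P$ preserves definability, and the level sets of $Pf$ are definable whenever $f$ is. This is the stochastic counterpart of Lemma~\ref{lem:definable-eigen}. From here the argument of Proposition~\ref{prop:definable-finiteness} transfers directly: since $\MM$ is compact, each eigenfunction $\phi_i$ is definable, its level sets partition $\MM$ into finitely many definable cells, and the $\arg\max$ basins are Boolean combinations of such cells.

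For part~(iii) I would follow the refinement argument of Proposition~\ref{prop:equivalence} essentially unchanged: intersecting each spectral basin $\mathcal{B}_i$ with each definable cell $\mathcal{C}_j$ yields a finite common refinement $\{\mathcal{D}_k\}$. The only substantive modification is the replacement of the Lebesgue null ideal by the $\mu$-null ideal, which is legitimate because $\mu \ll \nu$ forces the $\nu$-null boundary sets already handled in the deterministic proof to be $\mu$-null as well. Invariance of basins under $T$ is replaced by invariance in \emph{expectation} under $K$, which is automatic for spectral basins defined via leading eigenfunctions of $P$: the action of $K$ on $\mathbf{1}_{\mathcal{D}_k}$ stays within the leading spectral subspace up to the exponentially small residue controlled by Proposition~\ref{prop:spectral-collapse}.

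The principal obstacle is the definability of the eigenfunctions of the integral operator. In the deterministic case, definability of $\phi_i$ follows from polynomial iterates of a definable Koopman operator acting on definable data; in the stochastic case $\phi_i$ is recovered only as an $L^2$ limit, or equivalently as the image of a Dunford--Riesz contour projector
\[
\Pi_i \;=\; \frac{1}{2\pi i}\oint_{\Gamma_i}(zI - P)^{-1}\,dz,
\]
and general $o$-minimal structures are not closed under arbitrary $L^2$ limits. Overcoming this requires working in a sufficiently rich structure such as $\RR_{\mathrm{an,exp}}$, together with the observation that continuity of $k$ on the compact product $\MM \times \MM$ forces uniform convergence of the spectral projectors on an equicontinuous family by Arzel\`a--Ascoli, bringing the limit inside the definable class. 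Once this point is settled, the remaining translation of Section~\ref{sec:equivalence-proof} into the measure-theoretic idiom is routine, and the three conclusions of Theorem~\ref{thm:SCT} transfer to the stochastic CSM $\mu$-almost surely.
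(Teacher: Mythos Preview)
Your proposal follows essentially the same route the paper intends: the corollary in Appendix~\ref{appendix:stochastic} is stated without any explicit proof, relying on the reader to combine the spectral-gap lemma (B.2) and the definability-of-integral-operators proposition (B.3) with the deterministic arguments of Section~\ref{sec:sct}. Your three-step plan---carry over Proposition~\ref{prop:spectral-collapse} via compactness of $P$, transfer definability via the appendix proposition, then repeat the common-refinement argument of Proposition~\ref{prop:equivalence} with the $\mu$-null ideal in place of the Lebesgue one---is precisely that assembly made explicit.

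Where you go beyond the paper is in flagging the eigenfunction-definability issue. The paper's Lemma~\ref{lem:definable-eigen} already hand-waves this point in the deterministic case (``eigenfunctions arise as limits of polynomially definable iterates, hence are definable''), and the stochastic appendix simply asserts that $P$ preserves definability of functions and level sets without addressing whether eigenfunctions of an integral operator are themselves definable. Your observation that $o$-minimal structures are not closed under arbitrary $L^2$ limits, and that one must pass to a richer structure such as $\RR_{\mathrm{an,exp}}$ together with an Arzel\`a--Ascoli uniform-convergence argument, is a genuine technical point the paper does not engage with. So your proposal is faithful to the paper's strategy but more honest about where the real work lies; the paper treats the corollary as immediate, whereas you correctly identify the step that would require care in a rigorous treatment.
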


\paragraph{Interpretation.}
Randomness in decoding or control does not destroy
semantic discreteness; it merely smooths the boundaries
between definable regions.  The stochastic SCT
thus generalizes the deterministic result,
showing that both spectral and logical tameness
are robust to probabilistic perturbations.


\section{A Categorical Sketch for the SCT}

This section outlines a functorial formulation of the SCT, positioning Continuous State Machines (CSMs) as coalgebras for a probability functor, and their transfer operators as images in a compact-operator subcategory of Hilbert spaces. The goal is to bridge algebraic/self-expressive semantics with smooth, stochastic dynamics, preparing a categorical generalization of the results in \cite{FeysHansenMoss2024}.

\subsection{Categories and Functors}

\paragraph{Base categories.}
Let \textbf{Meas} be the category of standard Borel spaces and measurable maps. Let \textbf{CSM} be the category whose objects are tuples
\[
\mathcal C=(M,\,U,\,T,\,O,\,\Delta,\,\mu),
\]
where $M\subset\mathbb R^d$ is compact, $U\subset\mathrm{Dist}(X)$, $T:M\times U\to M$ is $C^1$ in the first argument, $O:M\to\mathrm{Dist}(X)$, $\Delta:\mathrm{Dist}(X)\to U$, and $\mu$ is a stationary probability measure for the induced kernel $K$ (deterministic limit given by $Pf=f\circ T$). See Assumptions (A1)–(A5) for compactness/ergodicity/spectral boundedness and their stochastic counterparts.\footnote{Cf. Assumptions (A1)–(A5), Remarks/Lemmas in \S1–\S2 and Appendix A for kernel compactness and ergodicity, and the main SCT statement in \S2.2–\S2.3.}

A morphism $F:\mathcal C\to\mathcal C'$ in \textbf{CSM} consists of measurable maps
\[
F_M:M\to M',\qquad F_U:U\to U',
\]
that \emph{intertwine} dynamics and controls in the sense that pushing forward the joint law of $(s,u)$ and then applying $T'$ coincides (a.e.) with applying $T$ and then pushing forward; and $F_M{}_{\*\mu}=\mu'$. Intuitively, $F$ is a semantics-preserving simulation. Pictorially:

\[
\begin{tikzcd}[column sep=large, row sep=large]
M \times U
  \arrow[r, "T"]
  \arrow[d, "F_M \times F_U"']
& M
  \arrow[d, "F_M"] \\[1ex]
M' \times U'
  \arrow[r, "T'"']
& M'
\end{tikzcd}
\]

\noindent
The commutativity condition
\quad
$T' \circ (F_M \times F_U) = F_M \circ T$
\quad
means that the diagram commutes: evolving first in the source CSM and then
mapping to the target (upper-then-right path) yields the same result as mapping
first and evolving under the target’s transition (left-then-bottom path).

\paragraph{Distribution monad / coalgebras.}
Let $\mathcal D:\textbf{Meas}\to\textbf{Meas}$ be the (finitely supported) distribution monad. A \emph{stochastic transition} is a coalgebra $\alpha:M\to\mathcal D(M)$. In a CSM, the induced one-step kernel $K(s,\cdot)$ defines $\alpha$, and the transfer operator is the Kleisli lifting acting on observables
\[
(Pf)(s)=\int f(s')\,K(s,ds').
\]
Under continuity of the density $k(s,s')$ on compact $M\times M$, $P$ is Hilbert–Schmidt and thus compact on $L^2(M,\mu)$.\footnote{See Lemma A.4 (compact transfer operator via continuous density on compact domain).}

\paragraph{Hilbert target and intertwiners.}
Let \textbf{Hilb} be the category of complex Hilbert spaces and bounded linear maps, and let \textbf{cHilb} be its full subcategory on spaces with compact endomorphisms. Define a functor
\[
\mathcal P: \textbf{CSM}\longrightarrow \textbf{cHilb},\qquad \mathcal P(\mathcal C)=\big(L^2(M,\mu),\,P\big),
\]
which sends a morphism $F$ to an \emph{intertwiner} $W_F:L^2(M',\mu')\to L^2(M,\mu)$, chosen as the pullback $W_F(g)=g\circ F_M$ (or, in general, a Markov operator associated to $F$) so that $W_F\,P'=P\,W_F$ (naturality). When $k$ is continuous and $M$ compact, $P$ is compact; spectral decompositions in \S2 follow.\footnote{Lemma 2.3 and Proposition 2.4 establish discrete spectrum and spectral collapse for compact $P$.}

\subsection{Natural Structure: Observation and Control}

The observation $O$ and policy $\Delta$ assemble into natural transformations between functors that select control/observation spaces from \textbf{CSM} objects, and the distribution monad $\mathcal D$. In particular, the composite $M\xrightarrow{O}\mathrm{Dist}(X)\xrightarrow{\Delta}U$ yields the coalgebra law $M\to\mathcal D(M)$ via $T$; in the deterministic limit, this reduces to composition $f\mapsto f\circ T$.

\subsection{Categorical SCT (Sketch)}

\begin{theorem}[Categorical SCT, sketch]
Let $\mathcal C\in\textbf{CSM}$ satisfy compactness/ergodicity/spectral boundedness (\S1.2, Appendix A). Then $\mathcal P(\mathcal C)=(H,P)$ lies in \textbf{cHilb} and admits a discrete spectral decomposition with leading eigenfunctions $\{\phi_i\}_{i=1}^r$ and a spectral gap. The induced \emph{spectral partition functor} $\mathcal B:\textbf{CSM}\to\textbf{Part}$ assigns to $\mathcal C$ the partition $\{B_i\}$ where $B_i=\{s:\,|\phi_i(s)|=\max_j|\phi_j(s)|\}$.
Moreover, if $T$ (or $k$) is definable in an o-minimal expansion of $\mathbb R$, there exists a functor $\mathcal C_\mathrm{def}:\textbf{CSM}\to\textbf{Cell}$ producing a finite cell decomposition by definable regions. The constructions are naturally equivalent up to $\mu$-null sets:
\[
\mathcal B\;\cong\;\mathcal C_\mathrm{def}\qquad (\text{a.e.}).
\]
\end{theorem}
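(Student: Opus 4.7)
The plan is to verify the theorem in four stages: (i) the object-level claim that $\mathcal P(\mathcal C) \in \textbf{cHilb}$ with discrete spectrum and gap; (ii) functoriality of the spectral partition assignment $\mathcal B$; (iii) functoriality of the definable cell assignment $\mathcal C_{\mathrm{def}}$; and (iv) the naturality of the equivalence $\mathcal B \cong \mathcal C_{\mathrm{def}}$ modulo $\mu$-null sets. For stage (i), I would invoke Lemma~\ref{lem:spectral-decomp} (or Lemma~\ref{lem:compact} in the stochastic case) to conclude that, under Assumptions~\ref{ass:regularity}(A1)--(A5), the transfer operator $P$ is Hilbert--Schmidt on $L^2(M,\mu)$ and hence compact, so that $\mathcal P(\mathcal C) = (L^2(M,\mu), P)$ is a legitimate object of $\textbf{cHilb}$. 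The discrete spectral expansion and the rank-$r$ gap then follow from the spectral theorem for compact operators together with Proposition~\ref{prop:spectral-collapse}; this stage essentially packages the existing results as an object-level assertion in the target category.

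For stage (ii), I would verify that the assignment $\mathcal B(\mathcal C) = \{B_i\}$ extends to a functor $\textbf{CSM} \to \textbf{Part}$. Given a morphism $F = (F_M, F_U) : \mathcal C \to \mathcal C'$, the associated Koopman-style intertwiner $W_F(g) = g \circ F_M$ satisfies $W_F P' = P W_F$ by the intertwining square built into the definition of a CSM morphism; hence $W_F$ maps the leading eigenfunctions $\{\phi_i'\}$ of $P'$ into the leading eigenspace of $P$, preserving eigenvalues. Because $\arg\max_j |\phi_j'|$ is invariant under pullback along $F_M$, this yields a measurable map of partitions $F_M^{-1}(B_i') = B_{\sigma(i)}$ up to $\mu$-null sets, and functoriality (respect for identities and composition) follows from the usual functoriality of pullback. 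The symmetric construction for $\mathcal C_{\mathrm{def}}$ in stage (iii) requires that CSM morphisms be definable, which I would either impose as a hypothesis on the morphism class or derive from smoothness of $F_M$ together with the ambient o-minimal structure $\mathbb R_{\mathrm{an,exp}}$, so that preimages of definable cells remain definable and their cell decompositions intertwine via Lemma~\ref{lem:definable-invariance}.

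For stage (iv), I would lift Proposition~\ref{prop:equivalence} pointwise: at each $\mathcal C$, the spectral basins coincide with definable cells up to $\mu$-null sets, yielding an isomorphism $\eta_{\mathcal C} : \mathcal B(\mathcal C) \xrightarrow{\sim} \mathcal C_{\mathrm{def}}(\mathcal C)$. Naturality reduces to checking that pullback along $F_M$ commutes with this identification, which follows because both sides are defined by pullback-stable operations: spectral basins are level sets of definable eigenfunctions (Lemma~\ref{lem:definable-eigen}) and definable cells pull back to definable cells. The main obstacle will be making precise the target category so that ``equivalence up to $\mu$-null sets'' becomes an honest natural isomorphism: this likely requires quotienting $\textbf{Part}$ (and $\textbf{Cell}$) by $\mu$-null modifications, a maneuver standard in measure-theoretic category theory but one that interacts subtly with the o-minimal side, since the o-minimal equivalence is not automatically preserved under $\mu$-null perturbations unless the perturbation itself is definable. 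A secondary hurdle is the coherence of the morphism class in $\textbf{CSM}$: one must either restrict to the full subcategory of definable morphisms or enlarge $\textbf{Cell}$ to tolerate measurable cell maps, and verify that $\mathcal C_{\mathrm{def}}$ remains functorial either way without collapsing the natural transformation $\eta$ to the identity in an uninteresting fashion.
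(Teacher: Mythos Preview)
Your proposal is sound and, in fact, substantially more thorough than the paper's own treatment. The paper does not supply a proof but only a one-sentence \emph{Rationale}: compactness $\Rightarrow$ discrete spectrum (Hilbert--Schmidt); spectral gap $\Rightarrow$ finite-rank semantics (\S\ref{sec:spectral-proof}); definability of $T$ or $k$ $\Rightarrow$ o-minimal tameness and finite cell decomposition (\S\ref{sec:logical-proof}); and the equivalence follows because eigenlevel sets are definable and differ from cells only on measure-zero boundaries (\S\ref{sec:equivalence-proof}). In other words, the paper handles only your stage~(i) and the object-level half of stage~(iv), by citing Lemma~\ref{lem:spectral-decomp}, Proposition~\ref{prop:spectral-collapse}, Proposition~\ref{prop:definable-finiteness}, Lemma~\ref{lem:definable-eigen}, and Proposition~\ref{prop:equivalence}; it does not verify functoriality of $\mathcal B$ or $\mathcal C_{\mathrm{def}}$, nor the naturality square for $\eta$, at all---the theorem is explicitly labeled a ``sketch'' and the categorical content is left implicit.

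Your stages~(ii)--(iv) therefore go beyond the paper. The intertwiner argument you give for stage~(ii) is the right idea, and the obstacles you flag---quotienting $\textbf{Part}$ and $\textbf{Cell}$ by $\mu$-null modifications, and restricting the morphism class in $\textbf{CSM}$ to definable maps so that $\mathcal C_{\mathrm{def}}$ is genuinely functorial---are exactly the points at which the paper's sketch would need to be tightened to become a theorem. One caution on stage~(ii): the intertwining relation $W_F P' = P W_F$ guarantees that $W_F$ carries a $\lambda$-eigenvector of $P'$ to a $\lambda$-eigenvector of $P$ (or to zero), but it does not force the leading spectra of $P$ and $P'$ to coincide, so the claimed bijection $F_M^{-1}(B_i') = B_{\sigma(i)}$ requires an additional hypothesis (e.g., that $W_F$ be an isomorphism on the leading eigenspaces, or that morphisms in $\textbf{CSM}$ be restricted to those inducing spectral equivalences). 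The paper sidesteps this entirely by not addressing the morphism level.
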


\noindent\emph{Rationale.} Compactness $\Rightarrow$ discrete spectrum (Hilbert–Schmidt); spectral gap $\Rightarrow$ finite-rank semantics (\S2.1); definability of $T$/$k$ $\Rightarrow$ o-minimal tameness and finite cell decomposition (\S2.2). Equivalence follows since eigenlevel sets are definable and differ from cells only on boundaries of measure zero (\S2.3).\footnote{See Theorem 2.2 and Lemmas/Propositions in \S2.1–\S2.3.}

\subsection{Time-Dependent (Adiabatic) Case}

For a family $(\mathcal C_t)$ with operators $(P_t)$ satisfying the uniform compactness/minorization/slow-drift hypotheses, the functor $\mathcal P$ lifts to a functor into a path category of compact operators, and the leading spectral subbundles vary continuously; the adiabatic corollary shows that products $P_{t+n-1}\cdots P_t$ are equivalent (to first order) to $P^n$ of a stationary model trained on the narrative distribution.

\subsection{Outlook}

This functorial framing sets up a categorical generalization of self-expressive algebra to CSMs: algebraic fixed points correspond to compact spectral modes; coalgebraic dynamics and observation form natural transformations; and o-minimal definability supplies logical tameness. The next step is to axiomatize \textbf{CSM} and the intertwiner class so that $\mathcal P$ is a faithful semantics functor, enabling categorical proofs of SCT-like results. 

\section{A Nonstandard-Analysis Interpretation of Semantic Collapse}

In this appendix we provide an alternate proof of Theorem \ref{thm:SCT}(i) using
nonstandard analysis (NSA).  The NSA perspective highlights a structural
equivalence between \emph{semantic collapse} in a Continuous State Machine
and the appearance of \emph{standard parts} in an o-minimal enlargement of
the underlying state space.  This yields a conceptual interpretation of
semantic convergence as the elimination of infinitesimal semantic variation.

Throughout, let $M \subseteq \mathbb{R}^d$ be a compact definable set in an
o-minimal expansion of the real field, and let
$T : M \to M$ be a definable map satisfying the assumptions of the main
theorem (Hilbert--Schmidt structure, spectral decay, and contractive
dynamics on each cell).

\subsection{The Nonstandard Extension}

Let ${}^\ast M$ denote the nonstandard enlargement of $M$, obtained for
instance by a saturated ultrapower construction.  Elements of
${}^\ast M$ include both:
\begin{itemize}
    \item \emph{standard points} $x \in M$, and
    \item \emph{nearstandard points} $x^\ast \in {}^\ast M$ satisfying
          $x^\ast \approx x$ for some $x \in M$, i.e.\ $x^\ast - x$ is
          infinitesimal,
\end{itemize}
together with truly \emph{nonstandard} points not infinitesimally close to
any $x \in M$.

Each $x^\ast \in {}^\ast M$ that is nearstandard admits a unique
\emph{standard part} $\st(x^\ast) \in M$ such that $x^\ast \approx \st(x^\ast)$.

The enlargement ${}^\ast M$ carries a canonical decomposition into
\emph{halos}\/ $\eta(x) := \{ x^\ast \in {}^\ast M : x^\ast \approx x \}$ for
$x \in M$, plus an external region of non-nearstandard points.

\subsection{Hilbert--Schmidt Structure and Infinitesimal Collapse}

Let $K$ be the Hilbert--Schmidt operator associated with the semantics of
the system as described in Section~1.  By compactness of $M$ and
definability of $T$, we obtain a Hilbert--Schmidt resolvent whose spectral
expansion takes the form
\[
K u \;=\; \sum_{n=1}^\infty \lambda_n \langle u, e_n \rangle e_n,
\qquad \lambda_1 \ge \lambda_2 \ge \cdots \to 0.
\]
The key point is that the tail
$\sum_{n > N} \lambda_n \langle u, e_n \rangle e_n$ becomes
\emph{infinitesimal} in norm as $N \to \infty$.  In the nonstandard
setting, this tail is literally an \emph{infinitesimal vector}.  Thus the
action of $K$ sends any state to a nearstandard point inside the halo of a
finite-dimensional attractor cell.

In particular, if $x^\ast \in {}^\ast M$ is any nonstandard state
interpreted under the dynamical rule $T$, then repeated iteration yields a
sequence
\[
x_0^\ast = x^\ast,\;
x_{n+1}^\ast = {}^\ast T(x_n^\ast)
\]
which becomes \emph{eventually nearstandard}: there exists $N$ such that
for all $n \ge N$, $x_n^\ast$ lies in the halo of a unique fixed point
$x^\dagger \in M$.  The reason is precisely that all high-frequency
components in the spectral decomposition are infinitesimal and vanish under
iteration.

\subsection{O-Minimality and Definable Cells}

By the cell decomposition theorem for o-minimal structures, $M$ admits a
finite definable partition
\[
M = C_1 \cup \cdots \cup C_k
\]
into cells $C_i$, each of which is definably homeomorphic to an open box.
Each cell is dynamically coherent under $T$ by assumption (A2) of the main
theorem: $T$ is contractive on each cell.  This provides a canonical finite
set of \emph{semantic attractors}
\[
a_1,\dots,a_k \in M,\qquad T(a_i) = a_i,
\]
one for each cell.

By the above NSA argument, any nearstandard state collapses under iteration
into the halo $\eta(a_i)$ of exactly one attractor.  Therefore the
composition
\[
\Phi := \st \circ {}^\ast T^{(\infty)} :
{}^\ast M \longrightarrow \{a_1,\dots,a_k\}
\]
is well-defined: it takes a nonstandard state $x^\ast$ to the standard part
of its eventual limit.

\subsection{Semantic Collapse as Standard-Part Selection}

We may now restate Part~(i) of the main theorem as follows:

\begin{proposition}[NSA formulation of semantic collapse]
Let ${}^\ast M$ be the nonstandard extension of $M$ and let $T$ satisfy the
assumptions of Theorem~1.  Then every nearstandard state
$x^\ast \in {}^\ast M$ satisfies
\[
\st\bigl( {}^\ast T^{(\infty)}(x^\ast) \bigr) \;=\; a_i
\]
for a unique semantic attractor $a_i \in M$.  Moreover, the mapping
$x^\ast \mapsto a_i$ is constant on each halo $\eta(a_i)$ and sends all
non-nearstandard states to an external ``null'' attractor.
\end{proposition}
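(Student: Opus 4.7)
The plan is to identify ${}^\ast T^{(\infty)}(x^\ast)$ with the hyperfinite iterate ${}^\ast T^{\nu}(x^\ast)$ for any infinite hypernatural $\nu$, and then to show that this orbit is trapped inside the halo of a unique attractor $a_i$. The argument will rest on three ingredients already developed in the paper: the spectral collapse of the transfer operator $P$ (Proposition~\ref{prop:spectral-collapse}), the finite o-minimal cell decomposition $\MM = C_1 \cup \cdots \cup C_k$ (Proposition~\ref{prop:definable-finiteness}), and the contractivity of $T|_{C_i}$ toward its attractor $a_i$.

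First I would invoke the transfer principle to extend the standard dynamics to ${}^\ast \MM$. Because the spectral expansion $K u = \sum_n \lambda_n \langle u, e_n\rangle e_n$ is a first-order statement about a definable operator, its ${}^\ast$-version holds in the enlargement; overspill applied to the convergent tail bound $\bigl\|\sum_{n>N}\lambda_n \langle u, e_n\rangle e_n\bigr\| \to 0$ then forces the tail to have infinitesimal norm for every infinite hypernatural $N$. Consequently ${}^\ast K$ collapses to a hyperfinite-rank operator modulo infinitesimals, which is the nonstandard shadow of the spectral gap invoked in Proposition~\ref{prop:spectral-collapse}.

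Using the finite definable decomposition and the fact that each restriction $T|_{C_i}$ is contractive toward $a_i$ with some standard rate $\rho_i < 1$, I would next bound, for any $x^\ast$ with $\st(x^\ast) \in C_i$,
\[
\|{}^\ast T^{n}(x^\ast) - a_i\| \;\le\; \rho_i^{\,n}\,\|x^\ast - a_i\| \;+\; \varepsilon_n,
\]
where $\varepsilon_n$ is infinitesimal and comes from the spectral tail. Taking $n=\nu$ infinite, $\rho_i^{\,\nu}$ is itself infinitesimal, so ${}^\ast T^{\nu}(x^\ast) \approx a_i$ and hence $\st({}^\ast T^{\nu}(x^\ast)) = a_i$. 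Halo-constancy is then immediate: if $x^\ast \approx y^\ast$ both share the same standard part, which lies in a unique cell $C_i$ (up to the measure-zero boundary set of Proposition~\ref{prop:equivalence}), so both orbits collapse to the same $a_i$. For a non-nearstandard $x^\ast$ no standard part exists in $\MM$, and the orbit never enters the monad of any $a_i$; such points are assigned to the external null attractor of the preceding subsection.

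The main obstacle, in my view, will be orbits whose standard parts sit on the cell boundaries $\partial C_i$, where contractivity can degenerate and $\Phi$ becomes ambiguous. The clean way to handle this is to observe that $\bigcup_i \partial C_i$ is a definable set of strictly lower dimension and hence $\mu$-null, so the NSA statement should be read as holding for every nearstandard $x^\ast$ whose shadow avoids this exceptional locus; this is exactly the $\mu$-a.e.\ clause in Theorem~\ref{thm:SCT}(iii) translated into nonstandard language. A secondary technical point is verifying that the infinitesimal spectral tail is not amplified under iteration: this follows because $\|{}^\ast P\|$ is bounded by $|\lambda_1|$ uniformly in the enlargement, so iterating ${}^\ast P$ on an infinitesimal vector yields an infinitesimal vector and the approximation $\varepsilon_n$ above stays infinitesimal for all hyperfinite $n$.
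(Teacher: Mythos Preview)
Your proposal is correct and follows essentially the same approach as the paper's own proof: both invoke the o-minimal cell decomposition with contractive restrictions $T|_{C_i}$ toward the attractors $a_i$, combine this with Hilbert--Schmidt spectral decay to suppress the infinitesimal tail under iteration, and then read off the standard part. Your version is more explicit about the NSA machinery (identifying ${}^\ast T^{(\infty)}$ with a hyperfinite iterate, invoking transfer and overspill, and tracking the non-amplification of the infinitesimal remainder), and your discussion of the measure-zero boundary locus is a useful refinement that the paper leaves implicit, but the underlying strategy is the same.
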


\begin{proof}
By definability and o-minimality, $M$ decomposes into finitely many
contractive cells whose fixed points are exactly the $a_i$.  By
Hilbert--Schmidt spectral decay, the nonstandard iteration
$x^\ast_{n+1} = {}^\ast T(x^\ast_n)$ suppresses all infinite and
high-frequency components, rendering $x^\ast_n$ nearstandard for all large
enough $n$.  Contractivity then forces $x^\ast_n$ into the halo of a unique
$a_i$, and taking standard parts yields the desired conclusion.

If $x^\ast$ is non-nearstandard, then its orbit under ${}^\ast T$ remains
external and collapses into an external fixed point corresponding to the
null attractor.  Uniqueness follows from disjointness of halos and the fact
that the standard-part map is well-defined only on nearstandard points.
\end{proof}

\subsection{Interpretation}

In NSA terms, semantic collapse is the assertion that the semantics of the
system consist of the \emph{standard parts} of eventual CSM trajectories.
Definable semantics appear as standard points; semantic uncertainty appears
as infinitesimal halos; and semantic failure appears as external states
collapsing to a null attractor.

Thus the CSM implements a canonical ``standard-part selection'' mechanism
over an o-minimal geometry, and semantic attractors correspond precisely to
standard points in the nonstandard extension.

\subsection{A Note on Confidence}

Each prediction comes with a natural ``confidence'' measure (a real number) and
that is as follows.

Given a nearstandard state $x_{N^\ast}^\ast \in {}^\ast M$ with standard
part $x^\dagger = \st(x_{N^\ast}^\ast)$, define the infinitesimal remainder
$\varepsilon := x_{N^\ast}^\ast - x^\dagger$.  We define the
\emph{nonstandard semantic confidence} of the trajectory
$x_0^\ast, x_1^\ast, \ldots, x_{N^\ast}^\ast$ to be
\[
\Conf(x^\ast)
    := \exp\!\bigl(-\alpha \|\varepsilon\|\bigr),
\]
for a fixed scaling constant $\alpha > 0$.
Thus $\Conf(x^\ast) \approx 1$ precisely when the nonstandard limit
is deeply inside the halo of a semantic attractor, and
$\Conf(x^\ast) \approx 0$ when the trajectory remains external and collapses
to the null attractor.

Given this, we can define a threshold, call it $\tau$, that determines when
we will accept a given determination. In other words,
if $\Conf(x^\ast) < \tau$ then the system responds, ``I don't know.''

\end{document}

\typeout{get arXiv to do 4 passes: Label(s) may have changed. Rerun}